\newcommand*{\Scale}[2][4]{\scalebox{#1}{\ensuremath{#2}}} 
\newtheorem{myDefinition}{Definition}
\newtheorem{myTheorem}{Theorem}
\newtheorem{myLemma}{Lemma}
\newtheorem{myCorollary}{Corollary}
\newtheorem{myExample}{Example}
\newtheorem{myRemark}{Remark}
\tikzstyle{every edge}=  [draw]
\tikzstyle{vertex} = [draw,circle,minimum size=1pt]
\tikzstyle{label} = [minimum size=.1pt,font=\tiny]
\tikzstyle{title} = [minimum size=.25cm,font=\small]
\newcommand{\var}[1]{\mathsf{#1}}
\newcommand{\bs}[1]{\boldsymbol{#1}}
\newcommand{\bb}[1]{\breve{#1}}
\newcommand{\Rdo}[1]{{\hyperref[hatsDef]{\mathbb{R}^d_{#1}}}}
\def \R{\mathbb{R}}
\def \spn{\mathrm{span}}
\def \miss{{\hyperref[missDef]{\cdot}}}
\def \see{{\hyperref[seeDef]{\times}}}
\def \c{\mathsf{c}}
\def \T{\mathsf{T}}
\def \<{\langle}
\def \>{\rangle}
\def \m{{\hyperref[nmDef]{m}}}
\def \n{{\hyperref[nmDef]{n}}}
\def \r{{\hyperref[rDef]{r}}}
\def \d{{\hyperref[dDef]{d}}}
\def \N{{\hyperref[XDef]{N}}}
\def \L{{\hyperref[LDef]{\ell}}}
\def \KK{{\hyperref[KKDef]{K}}}
\newcommand{\xii}[1]{{\hyperref[XDef]{x_{#1}}}}
\newcommand{\hatxi}[1]{{\hyperref[hatxDef]{\hat{x}_{#1}}}}
\newcommand{\xoi}[1]{{\hyperref[xoDef]{x_{\omega_{#1}}}}}
\def \y{{\hyperref[yDef]{\chi}}}
\def \x{{\hyperref[xDef]{x}}}
\def \hatx{{\hyperref[hatxDef]{\hat{x}}}}
\def \xo{{\hyperref[xoDef]{x_\omega}}}
\def \XI{{\hyperref[XIDef]{\Xi}}}
\def \hatXI{{\hyperref[XIDef]{\hat{\Xi}}}}
\def \X{{\hyperref[XDef]{\mathscr{X}}}}
\def \hatX{{\hyperref[hatXDef]{\hat{\mathscr{X}}}}}
\newcommand{\sstark}[1]{{\hyperref[sstarkiDef]{S^\star_{#1}}}}
\newcommand{\sstari}[1]{{\hyperref[sstariDef]{S^\star_{#1}}}}
\newcommand{\hatsstari}[1]{{\hyperref[hatsstariDef]{\hat{S}^\star_{#1}}}}
\newcommand{\sstaroi}[1]{{\hyperref[sstaroiDef]{S^\star_{\omega_{#1}}}}}
\newcommand{\soi}[1]{{\hyperref[soDef]{S_{\omega_{#1}}}}}
\newcommand{\hatsi}[1]{{\hyperref[hatsDef]{\hat{S}_{#1}}}}
\def \Sstar{{\hyperref[SstarkDef]{\mathscr{S}^\star}}}
\def \s{{\hyperref[sDef]{S}}}
\def \hats{{\hyperref[hatsDef]{\hat{S}}}}
\def \so{{\hyperref[soDef]{S_\omega}}}
\def \sstar{{\hyperref[sstarDef]{S^\star}}}
\def \hatsstar{{\hyperref[hatsstariDef]{\hat{S}^\star}}}
\def \sstaro{{\hyperref[soDef]{S^\star_\omega}}}
\newcommand{\uui}[1]{{\hyperref[uuiDef]{\mathscr{U}_{#1}}}}
\newcommand{\Ustari}[1]{{\hyperref[UstariDef]{U^\star_{#1}}}}
\newcommand{\hatUi}[1]{{\hyperref[hatsDef]{\hat{U}_{#1}}}}
\newcommand{\hatUstari}[1]{{\hyperref[hatUstariDef]{\hat{U}^\star_{#1}}}}
\newcommand{\Uoi}[1]{{\hyperref[soDef]{U_{\omega_{#1}}}}}
\newcommand{\uj}[1]{{\hyperref[UUDef]{u_{#1}}}}
\newcommand{\uoi}[1]{{\hyperref[UUDef]{u_{\omega_{#1}}}}}
\def \UUO{{\hyperref[UUDef]{\mathscr{U}_{\bs{\Omega}}}}}
\def \UU{{\hyperref[UUDef]{\mathscr{U}_\Omega}}}
\def \uu{{\hyperref[uuDef]{\mathscr{U}_{\omega}}}}
\def \U{{\hyperref[UDef]{U}}}
\def \hatU{{\hyperref[hatsDef]{\hat{U}}}}
\def \hatUstar{{\hyperref[hatsDef]{\hat{U}^\star}}}
\def \Uo{{\hyperref[soDef]{U_\omega}}}
\def \Ustar{{\hyperref[UstarDef]{U^\star}}}
\def \Ustaro{{\hyperref[UstaroDef]{U^\star_\omega}}}
\def \u{{\hyperref[UUDef]{u}}}
\def \uo{{\hyperref[xoDef]{u_\omega}}}
\def \hatu{{\hyperref[hatxDef]{\hat{u}}}}
\newcommand{\ai}[1]{{\hyperref[aiDef]{a_{#1}}}}
\newcommand{\aoi}[1]{{\hyperref[aoiDef]{a_{\omega_{#1}}}}}
\def \AA{{\hyperref[AADef]{\bs{A}}}}
\def \A{{\hyperref[ADef]{A}}}
\def \a{{\hyperref[aDef]{a}}}
\def \ao{{\hyperref[aoDef]{a_\omega}}}
\def \aoT{{\hyperref[aoDef]{a^\T_\omega}}}
\newcommand{\oi}[1]{{\hyperref[OODef]{\omega_{#1}}}}
\def \OO{{\hyperref[OODef]{\bs{\Omega}}}}
\def \O{{\hyperref[ODef]{\Omega}}}
\def \o{{\hyperref[oDef]{\omega}}}
\def \ups{{\hyperref[upsDef]{\upsilon}}}
\def \I{{\hyperref[IJDef]{\mathcal{I}}}}
\def \J{{\hyperref[IJDef]{\mathcal{J}}}}
\def \K{{\hyperref[KDef]{\mathscr{K}}}}
\newcommand{\jci}[1]{{\hyperref[jcDef]{j^\c_{#1}}}}
\newcommand{\ki}[1]{{\hyperref[KDef]{k_{#1}}}}
\def \i{{\hyperref[XDef]{i}}}
\def \ii{{\ddot{\imath}}}
\def \j{{\hyperref[jDef]{j}}}
\def \jc{{\hyperref[jcDef]{j^\c}}}
\def \k{{\hyperref[SstarkDef]{k}}}
\def \behave{{\hyperref[behaveDef]{behave}}}
\def \behaves{{\hyperref[behaveDef]{behaves}}}
\def \fitsO{{\hyperref[fittingODef]{fits}}}
\def \fitO{{\hyperref[fittingODef]{fit}}}
\def \fitso{{\hyperref[fittingoDef]{fits}}}
\def \fito{{\hyperref[fittingoDef]{fit}}}
\def \fitsxo{{\hyperref[fittingxoDef]{fits}}}
\def \fitxo{{\hyperref[fittingxoDef]{fit}}}
\def \fitsXO{{\hyperref[fittinghatXDef]{fits}}}
\def \fitXO{{\hyperref[fittinghatXDef]{fit}}}
\def \fitGen{{\hyperref[fittingSetsSec]{fit}}}
\def \fitsGen{{\hyperref[fittingSetsSec]{fits}}}
\def \independentO{{\hyperref[independenceDef]{independent}}}
\def \dependentO{{\hyperref[independenceDef]{dependent}}}
\def \independento{{\hyperref[redundantDef]{independent}}}
\def \redundant{{\hyperref[redundantDef]{redundant}}}
\def \dependento{{\hyperref[redundantDef]{dependent}}}
\def \basis{{\hyperref[basisDef]{basis}}}
\def \bases{{\hyperref[basisDef]{bases}}}
\def \degenerate{{\hyperref[degenerateDef]{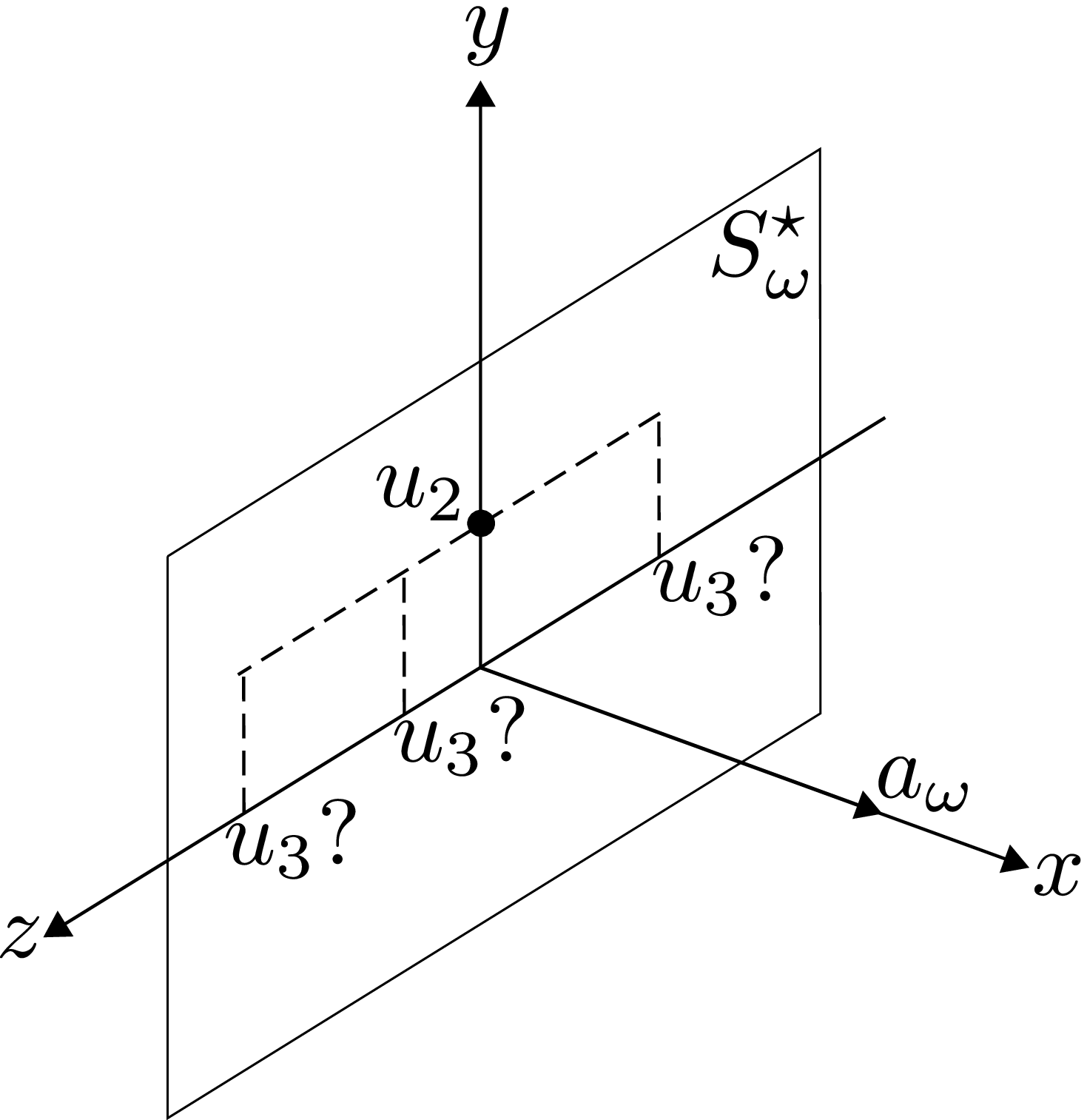}}}
\def \uniquenessThm{{\hyperref[uniquenessThm]{Theorem \ref{uniquenessThm}}}}
\def \allOfAKindThm{{\hyperref[allOfAKindThm]{Theorem \ref{allOfAKindThm}}}}
\def \aEntriesLem{{\hyperref[aEntriesLem]{Lemma \ref{aEntriesLem}}}}
\def \UUcontainsAllSLem{{\hyperref[UUcontainsAllSLem]{Lemma \ref{UUcontainsAllSLem}}}}
\def \dimUULem{{\hyperref[dimUULem]{Lemma \ref{dimUULem}}}}
\def \independenceLem{{\hyperref[independenceLem]{Lemma \ref{independenceLem}}}}
\def \basisLem{{\hyperref[basisLem]{Lemma \ref{basisLem}}}}
\def \kCharLem{{\hyperref[kCharLem]{Lemma \ref{kCharLem}}}}
\def \kCharConverseLem{{\hyperref[kCharConverseLem]{Lemma \ref{kCharConverseLem}}}}
\def \basisCharLem{{\hyperref[basisCharLem]{Lemma \ref{basisCharLem}}}}
\def \allOfAKindLem{{\hyperref[allOfAKindLem]{Lemma \ref{allOfAKindLem}}}}
\def \aoWithzerosLem{{\hyperref[aoWithzerosLem]{Lemma \ref{aoWithzerosLem}}}}
\def \UUfitsOCor{{\hyperref[UUfitsOCor]{Corollary \ref{UUfitsOCor}}}}
\def \dimUUrCor{{\hyperref[dimUUrCor]{Corollary \ref{dimUUrCor}}}}
\def \LCor{{\hyperref[LCor]{Corollary \ref{LCor}}}}
\def \converseAllOfAKindCor{{\hyperref[converseAllOfAKindCor]{Corollary \ref{converseAllOfAKindCor}}}}
\def \introEg{{\hyperref[introEg]{Example \ref{introEg}}}}
\def \subspacesBasesEg{{\hyperref[subspacesBasesEg]{Example \ref{subspacesBasesEg}}}}
\def \vectorsBasesEg{{\hyperref[vectorsBasesEg]{Example \ref{vectorsBasesEg}}}}
\def \observationSetsEg{{\hyperref[observationSetsEg]{Example \ref{observationSetsEg}}}}
\def \fittingOEg{{\hyperref[fittingOEg]{Example \ref{fittingOEg}}}}
\def \allOfAKindEga{{\hyperref[allOfAKindEga]{Example \ref{allOfAKindEga}}}}
\def \prologueaEg{{\hyperref[prologueaEg]{Example \ref{prologueaEg}}}}
\def \prologuebEg{{\hyperref[prologuebEg]{Example \ref{prologuebEg}}}}
\def \allOfAKindEgb{{\hyperref[allOfAKindEgb]{Example \ref{allOfAKindEgb}}}}
\def \independenceDef{{\hyperref[independenceDef]{Definition \ref{independenceDef}}}}
\def \fittingxoDef{{\hyperref[fittingxoDef]{Definition \ref{fittingxoDef}}}}
\def \fittinghatXDef{{\hyperref[fittinghatXDef]{Definition \ref{fittinghatXDef}}}}
\def \fittingODef{{\hyperref[fittingODef]{Definition \ref{fittingODef}}}}
\def \degenerateDef{{\hyperref[degenerateDef]{Definition \ref{degenerateDef}}}}
\def \fittingoRmk{{\hyperref[fittingoRmk]{Remark \ref{fittingoRmk}}}}
\def \fittingORmk{{\hyperref[fittingORmk]{Remark \ref{fittingORmk}}}}
\def \fittingOIsFuncRmk{{\hyperref[fittingOIsFuncRmk]{Remark \ref{fittingOIsFuncRmk}}}}
\def \severalOptionsFiga{{\hyperref[severalOptionsFiga]{Figure \ref{severalOptionsFiga}}}}
\def \severalOptionsFigb{{\hyperref[severalOptionsFigb]{Figure \ref{severalOptionsFigb}}}}
\def \differentSthreeFig{{\hyperref[differentSthreeFig]{Figure \ref{differentSthreeFig}}}}
\def \almostNeverFig{{\hyperref[almostNeverFig]{Figure \ref{almostNeverFig}}}}
\def \intesectionUoneUtwoFig{{\hyperref[intesectionUoneUtwoFig]{Figure \ref{intesectionUoneUtwoFig}}}}
\def \almostSurelyFig{{\hyperref[almostSurelyFig]{Figure \ref{almostSurelyFig}}}}
\def \aProp{{\hyperref[aProp]{{\bf (a)}}}}
\def \bProp{{\hyperref[bProp]{{\bf (b)}}}}
\def \rDimensionAss{{\hyperref[rDimensionAss]{{\bf A2}}}}
\def \sstarNonDegenerateAss{{\hyperref[sstarNonDegenerateAss]{{\bf A3}}}}
\def \sizeoAss{{\hyperref[sizeoAss]{{\bf A4}}}}
\def \unionOOAss{{\hyperref[unionOOAss]{{\bf A5}}}}
\def \existsNonDegenerateAss{{\hyperref[existsNonDegenerateAss]{{\bf A6}}}}
\def \LRMCconverseMot{{\hyperref[LRMCconverseMot]{{\bf M1}}}}
\def \LRMCcheckMot{{\hyperref[LRMCcheckMot]{{\bf M2}}}}
\def \universalValidationCheckMot{{\hyperref[universalValidationCheckMot]{{\bf M3}}}}
\title{{\bf To lie or not to lie in a subspace}}
\author{\\ \\
Daniel L. Pimentel-Alarc\'{o}n\\
Electrical and Computer Engineering, Mathematics \\
\texttt{pimentelalar@wisc.edu}
\and \\ \\
\textsf{RESEARCH PROPOSAL} \\ \\ \\
University of Wisconsin-Madison \\
Madison, WI, 53706, USA
}
\begin{document}
\maketitle

\begin{align*}
\begin{array}{ccccc}
\multicolumn{5}{c}{\textsf{Submitted for revision to committee members:}} \\
\\
\text{Robert D. Nowak} & & & & \text{Nigel Boston} \\
\text{Electrical and Computer} & & & & \text{Electrical and Computer} \\
\text{Engineering} & & & & \text{Engineering, Mathematics} \\
\texttt{nowak@ece.wisc.edu} & & & & \texttt{boston@math.wisc.edu}
\end{array}
\end{align*}
\\
\begin{abstract}
Give deterministic necessary and sufficient conditions to guarantee that if a subspace {\em \fitsXO} certain {\bf partially observed} data from a union of subspaces, it is because such data really {\em lies} in a subspace.

Furthermore, give deterministic necessary and sufficient conditions to guarantee that if a subspace \fitsXO\ certain partially observed data, such subspace is {\em unique}.

Do this by characterizing when and only when a set of incomplete vectors {\em \behaves} as a single but complete one.
\end{abstract}

\pagebreak
\tableofcontents
\pagebreak
\section{Prologue}
\label{prologueSec}
We love subspaces.  We observe a phenomenon and try to find a line that explains it.  We get our hands on some data, and we try to find a subspace that fits it.  But what if we are looking for subspaces where there really are not? How can we guarantee that if we find a subspace, it is because there really {\em is} a subspace?  In other words, how can we make sure that if certain data {\em fit} in a subspace, it is because it really {\em lies} in such subspace?

In many cases we don't really have to worry about this problem.  For instance, if we have a collection of generic vectors that \fitGen\ in an $\r$-dimensional subspace, as long as our collection has more than $\r$ vectors, we can always verify if our collection indeed lies in an $\r$-dimensional subspace, because we will always have an {\em extra}, {\em generic} vector to validate this.  This is because almost surely, a set of more than $\r$ generic vectors \fitsGen\ in an $\r$-dimensional subspace iff it actually lies in such subspace.

Nevertheless, if we suppose that our collection of vectors is only {\bf partially observed}, this becomes a much harder problem, as a set of arbitrarily many {\em incomplete vectors} may \fitXO\ in an $\r$-dimensional subspace even if their {\em complete} counterparts do not really lie in a subspace.

\begin{myExample}
\label{prologueaEg}
\normalfont
Suppose $\r=1$, and consider the following set of vectors:
\begin{align*}
\X=\left[ \begin{matrix} 1 & 1 \\ 1 & 2 \\ 1 & 3 \end{matrix} \right].
\end{align*}
It is easy to see that they do not lie in a $1$-dimensional subspace.  Nevertheless, suppose that we only observe a subset of their entries:
\begin{align*}
\hatX=\left[\begin{matrix}
	1 & 1 \\
	1 & \miss \\
	\miss & 3
\end{matrix}\right].
\end{align*}
Then both {\em incomplete vectors} \fitXO\ in the $1$-dimensional subspace spanned by
\begin{align*}
\U=\left[ \begin{matrix} 1 \\ 1 \\ 3 \end{matrix} \right],
\end{align*}
despite their full counterparts do not lie in a $1$-dimensional subspace.
$\blacksquare$
\end{myExample}

Of course, in general, without knowing anything a priori about our data there is no hope to succeed at this task, as the missing entries could be arbitrary.  Fortunately there are ma cases of data that lies in \----or can be accurately approximated by\---- a union of subspaces\cite{vidaltutorial}, a beautiful setup under which this task is not only feasible but also non-trivial.  This is precisely the assumption under which we will operate, i.e.,
\\
\\
\noindent {\bf We will assume in the rest of this document that every vector of our data lies in the union of $\Sstar$, a set of $\r$-dimensional subspaces of $\R^\d$.}

\begin{figure}[H]
\centering
\includegraphics[width=5cm]{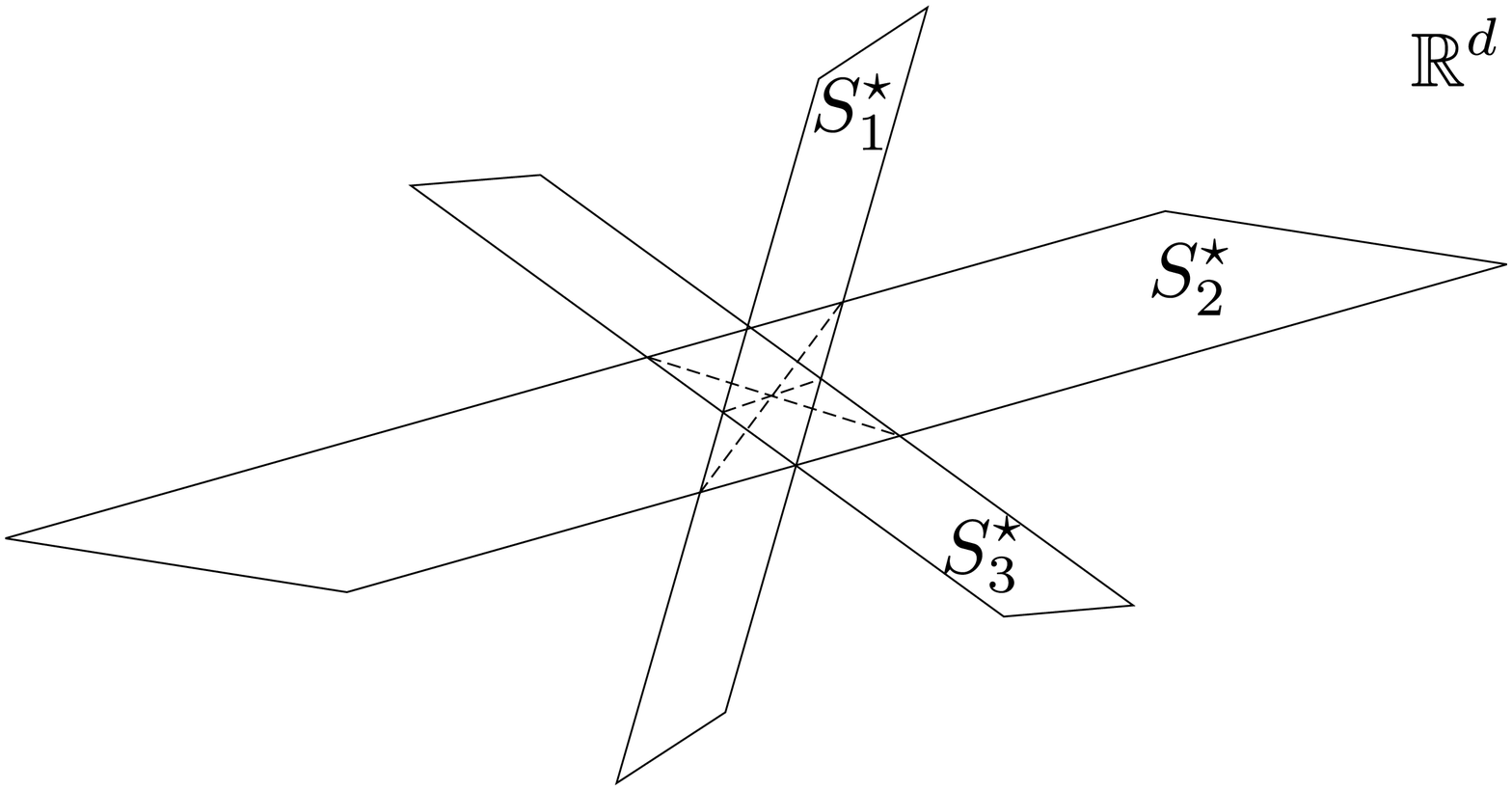}
\caption{Example of the union of three $2$-dimensional subspaces of $\R^3$.}
\label{unionSubspacesFig}
\end{figure}

At first glance this might deceivingly appear as a trivial task: if all vectors lying in a union of subspaces fit in one $\r$-dimensional subspace, how could they not all lie in an $\r$-dimensional subspace? To see this, consider the following.

\begin{myExample}
\label{prologuebEg}
\normalfont
With the same setup as in \prologueaEg, further suppose that $\Sstar$ is the set of the two subspaces spanned by the following vectors:
\begin{align*}
\Ustari{1} = \left[ \begin{matrix} 1 \\ 1 \\ 1 \end{matrix} \right], \hspace{.5cm}
\Ustari{2} = \left[ \begin{matrix} 1 \\ 2 \\ 3 \end{matrix} \right].
\end{align*}
If we again assume that our data is the set of vectors $\X$ \----which clearly lies in the union of the subspaces of $\Sstar$\---- but that we only observe the subset of their entries in $\hatX$, it is easy to see that $\U$ \fitsXO\ our data despite their full counterparts do not lie in a $1$-dimensional subspace.
$\blacksquare$
\end{myExample}

The motivation of this work is to find necessary and sufficient conditions to guarantee that if a set of incomplete vectors from a union of subspaces \fitsXO\ in an $\r$-dimensional subspace, it is because the set of all their complete yet unknown counterparts indeed lies in an $\r$-dimensional subspace.

\pagebreak
\section{Introduction}
\label{introSec}

Imagine that an $\r$-dimensional subspace $\s$ \fitsXO\ a set of incomplete vectors \phantomsection\label{XIDef}$\hatXI$.  We want to make sure that the set of all their complete yet unknown counterparts indeed lies in $\s$.

Using the same idea as if the vectors were complete, imagine we had an {\em extra}, {\em generic} complete validating vector \phantomsection\label{yDef}$\y \in \sstar \in \Sstar$ that \fitGen\ in $\s$.  It is easy to see that if $\s$ \fitsGen\ $\y$, it is because $\s=\sstar$.  Furthermore, since $\s$ \fitsXO\ $\hatXI$, this implies that $\sstar$ \fitsXO\ $\hatXI$.

It is also easy to see that if the subspaces in $\Sstar$ keep no relation with each other, $\hatXI$ can only \fitXO\ in one of the subspaces of $\Sstar$ if all its complete counterparts indeed lie in such subspace.  This way, $\s$ fitting $\y$ would directly imply that all the complete counterparts of $\hatXI$ indeed lie in $\s$.  All the more, it would imply that $\s \in \Sstar$.

This is all very nice, but it relies on the fantasy that we had the extra, generic, complete vector $\y$.  Of course, we cannot assume that we have such complete vector.  But what if we had several incomplete ones instead?  Could a set of extra generic incomplete vectors $\hatX$ \phantomsection\label{behaveDef}\index{behave}{\em behave} just as $\y$, allowing us to say that {\em if} $\s$ \fitsGen\ such set, then all the complete counterparts of $\hatXI$ indeed lie in $\s$?

The answer to this question is yes, and this is precisely what we characterize: when will a set of incomplete vectors $\hatX$ \behave\ as a complete one.  This characterization is given in \allOfAKindThm, the main result of the document, which, intuitively, and in a nutshell states that:

\begin{framed}
$\hatX$ \behaves\ as a complete vector iff $\hatX$ contains $\d-\r+1$ vectors such that for every strict subset of $\n$ of such vectors, there are at least $\n+\r$ distinct observed {\em rows}.
\end{framed}

This characterization allows us to fulfill the main task we pursue: determine if a set of incomplete vectors really lies in an $\r$-dimensional subspace whenever it \fitsGen\ in an $\r$-dimensional subspace.

\begin{myExample}
\label{introEg}
\normalfont
Suppose $\r=2$ and
\begin{align*}
\hatX=\left[\begin{matrix}
1 & \miss & \miss & 3 & 3 \\
1 & 2 & \miss & \miss & 4 \\
1 & 3 & 4 & \miss & \miss \\
\miss & 4 & 5 & 9 & 6 \\
\miss & \miss & 6 & 11 & \miss \\
\end{matrix}\right].
\end{align*}
Take the set of the first $\d-\r+1=4$ vectors.  We can verify that every one of its subsets has at least $\n+\r$ distinct rows with at least one observed entry.  For example, if we take the first $\n=2$ vectors, the number of distinct observed rows is $4$, which is equal to $\n+\r$.

We thus conclude that if $\s$ \fitsXO\ $\hatXI$ and $\hatX$, then all the complete counterparts of both $\hatX$ and $\hatXI$ indeed lie in $\s$.
$\blacksquare$
\end{myExample}

\subsection{Insight}
\label{insightSec}
There are two fundamental reasons why $\s$ fitting a generic $\y$ implies that the complete counterparts of $\hatXI$ indeed lie in $\s$:
\begin{enumerate}
\item[{\bf (a)}]
\phantomsection\label{aProp}
There is only one $\r$-dimensional subspace that \fitsGen\ $\y$.
\item[{\bf (b)}]
\phantomsection\label{bProp}
Obvious, but essential: $\y$ lies in one and only one of the subspaces of $\Sstar$.
\end{enumerate}
If a set of incomplete vectors $\hatX$ satisfied analogous properties, it would \behave\ just as the complete vector $\y$ in the sense that we would be able to conclude that $\s = \sstar \in \Sstar$ if $\s$ \fitsGen\ $\hatX$, and the remainder \----that $\hatXI$ indeed lies in $\s$\---- would follow just as before.

On the other hand, if $\hatX$ failed to have either property, it would fail to \behave\ as the complete vector $\y$.  More precisely, if $\hatX$ fails to satisfy \aProp, it is evident that we cannot conclude that $\s \in \Sstar$; if $\hatX$ fails to satisfy \bProp, even if there is only one $\r$-dimensional subspace that \fitsGen\ $\hatX$, such subspace might not belong to $\Sstar$, i.e., it could be a {\em false} subspace.  For an example of how this could happen, take \prologuebEg.

In other words, the analogous properties of \aProp\ and \bProp\ are necessary and sufficient for $\hatX$ to \behave\ as a complete vector.  This is precisely what we need to discover: when will a set of generic incomplete vectors $\hatX$ satisfy these two analogous properties.

\begin{myRemark}
\normalfont
Observe that \bProp\ is substantially different in the complete and incomplete vectors cases.  In the complete case, since $\y \in \Sstar$, and $\y$ is only one vector, we can automatically conclude that almost surely, $\y$ will lie in only one of the subspaces of $\Sstar$.

On the other hand, in the incomplete case, $\hatX \in \Sstar$ does not imply that $\hatX$ lies in one and only one of the subspaces of $\Sstar$, as different vectors from $\hatX$ could belong to different subspaces from $\Sstar$.
$\blacksquare$
\end{myRemark}

\subsection{The essence}
\label{theEssenceSec}
We will see in \textsection \ref{fittingSetsSec} that whether or not $\s$ \fitsGen\ a generic $\hatX$ depends only on the position of the observed entries of $\hatX$, namely $\OO$.  We will also see in \textsection \ref{allOfAKindSec} that whether a generic $\hatX \in \Sstar$ \fitsGen\ in a single $\sstar \in \Sstar$ or not can also be deduced from $\OO$ alone.

Therefore, we may focus on finding conditions on $\OO$ to determine when a generic $\hatX$ satisfies the analogous properties \aProp\ and \bProp.  This is exactly what we do.  Explicitly:

\vspace{.3cm}
\begingroup
\leftskip1.5em
\rightskip\leftskip
\noindent
{\em We derive deterministic necessary and sufficient conditions on $\OO$ to {\em guarantee} that {\em if} there exists an $\r$-dimensional subspace that \fitsGen\ a generic $\hatX$, such subspace is unique, and it is because all the vectors of $\hatX$ indeed lie in the same subspace of $\Sstar$.}
\par
\endgroup
\vspace{.3cm}

To be clear, the conditions are sufficient in the sense that if $\OO$ satisfies such conditions and there exists an $\r$-dimensional subspace that \fitsGen\ a generic $\hatX$, such subspace is unique, and it must be true that all the vectors of $\hatX$ indeed lie in the same subspace of $\Sstar$.  Conversely, the conditions are necessary in the sense that even if there exists an $\r$-dimensional subspace that \fitsGen\ a generic $\hatX$, if $\OO$ does not satisfy such conditions, such subspace may not be unique, and it cannot be guaranteed that the vectors of $\hatX$ lie in the same subspace of $\Sstar$.

The conditions to guarantee that all the elements of $\hatX$ indeed lie in the same subspace of $\Sstar$ are given in \allOfAKindThm.  They imply and rely on the conditions for uniqueness, which are given in \uniquenessThm.  As we could see in \textsection \ref{introSec}, these conditions are extremely simple and concrete, and depend only on the most elemental invariants of $\OO$: essentially, cardinalities of its subsets.  Both of these results, the main ones of the document, are presented formally in \textsection \ref{resultsSec}, the section of results.  Together, they characterize when a set of incomplete vectors \behaves\ as a complete one, which allows us to verify our final goal: when $\hatXI$ indeed lies in $\s$.

\subsection{Organization of the document}
\label{organizationSec}
In \textsection \ref{preambleSec} it is given a brief discussion about previous and related work; this helps as preamble to give some motivation for this problem and talk about some particularly interesting applications of this work that give simple yet powerful consequences of its results.  In \textsection \ref{setupSec} it is given a detailed exposition of the setup that will be used in the remainder of the document.  In \textsection \ref{assumptionsSec} the assumptions of this work are stated, explained and discussed.  The main results are given in \textsection \ref{resultsSec}.

The analysis to prove \uniquenessThm\ is presented in \textsection \ref{uniquenessSec}, and the one to prove \allOfAKindThm\ in \textsection \ref{allOfAKindSec}.  In \textsection \ref{intuitivelySpeakingSec} it is offered an intuitive explanation of the key ideas of the results and the assumptions are discussed in more detail, as well as some simple generalizations.  Finally, a brief proposal for future research is given in \textsection \ref{epilogueSec}.

To make the reading of this document easier, the main symbols, terms, statements, definitions, examples, etc., are referenced in the whole document in its electronic version; alternatively, an index and al list of symbols is also provided at the end of the document.

\pagebreak
\section{Preamble}
\label{preambleSec}
With the arrival of big data come big challenges: we want to find useful information in our datasets quickly, cleverly, using as few resources as possible.  Fortunately, in uncountable applications we may use subspaces to model our data, and this greatly simplifies things.

But this is not it.  As if finding useful information quickly, cleverly and efficiently were not ambitious enough endeavors, we also want \----and many times, need\---- to achieve these tasks only with partial information, which comes as no surprise, as the bigger our data, the more likely it is incomplete.

Fortunately, subspaces have a natural way of handling missing data, as data in subspaces have certain structure, and that gives us a way to infer the missing entries.  The problem of handling missing data has attracted a lot of attention in recent years. Remarkable work has been done to identify a subspace that fits certain incomplete data, e.g., \cite{mcRecht}, to detect if an incomplete datum fits in a certain subspace\cite{Balzano10a}, or even to do subspace clustering from missing data\cite{pimentel14}, but the converse problem, in the sense we discuss in \LRMCconverseMot, has been left unattended, and remained, to the best of our knowledge, an open problem until now.

But again, as if the task of finding useful information quickly, cleverly, efficiently, and only from partial information were not bold enough, we also want something else.  We want to make sure that if we reach a conclusion from our data, such conclusion is correct.  In other words, we want to make sure that the information we found is not a product of chance; the larger our data, the more outliers, the more likely we will find {\em something}, but that doesn't mean that that {\em something} is {\em true}.  If we toss a coin a trillion times, we will very likely see many sequences of many heads in a row, but that doesn't mean that a sequence of many heads in a row is very likely.  The more data we have, the more likely {\em some} subspace will {\em fit} {\em some} of it, but that doesn't mean that our data really {\em lies} in such subspace.

These is precisely the task that we are interested on: how to determine when certain incomplete data really {\em lies} in a subspace whenever it {\em fits} in a subspace.

Notice the subtle but fundamental difference between this work and, for example, the matched subspace detection with missing data problem in \cite{Balzano10a}, where they are concerned with determining if an incomplete datum {\em fits} a subspace, using only information about such datum and the subspace.  Here we are given an incomplete dataset that we already know {\em fits} in a subspace, and we want to make sure that it really {\em lies} in it, using the dataset as a whole, exploiting information about the relation between their datums.  Similarly, in \cite{mcRecht} they are concerned with identifying a subspace that {\em fits} certain incomplete data, under the assumption that the data {\em lies} in a subspace.  Here we drop such assumption; we are given an incomplete dataset and a subspace that {\em fits} it, and we want to know if the dataset really {\em lies} in such subspace.

The problem of determining if certain incomplete data really {\em lies} in a subspace whenever it {\em fits} in a subspace is tightly related to the problem of identifying when there is only one subspace that fits such data.  We answer these questions by characterizing when and only when a set of incomplete vectors \behaves\ as a single but complete one, in the sense described in \textsection \ref{introSec}.

Being these so fundamental problems, answering these questions should be enough motivation by itself, as they essentially apply to virtually every problem involving subspaces and missing data.  Nevertheless, just for completeness, we mention just a few motivating applications, to give an idea of the scope and relevance of these results.

\begin{description}
\item[M1.]
\phantomsection\label{LRMCconverseMot}
Consider the low-rank matrix completion problem\cite{mcRecht}: given that all the columns of a matrix $\XI$ lie in the same $\r$-dimensional subspace $\s$, under what conditions is $\s$ the only $\r$-dimensional subspace that fits a subset of the entries of such matrix, $\hatXI$?

The condition that $\XI$ lies in the same $\r$-dimensional subspace trivially implies that there exists an $\r$-dimensional subspace that fits $\hatXI$.  As we explained in \textsection \ref{prologueSec}, the converse is not necessarily true (see Examples \ref{prologueaEg}, \ref{allOfAKindEga} and \textsection \ref{allOfAKindSec} for a more detailed explanation).

Our work immediately provides a {\bf converse} of the low-rank matrix completion problem: all the columns of a matrix $\XI$ lie in the same $\r$-dimensional subspace if there exists an $\r$-dimensional subspace that fits $\hatXI$ and an additional generic $\hatX$ observed in a set $\OO$ satisfying the conditions of \allOfAKindThm.

\item[M2.]
\phantomsection\label{LRMCcheckMot}
Under the same setup of low-rank matrix completion, most algorithms, e.g., nuclear norm minimization\cite{mcRecht}, detect an $\r$-dimensional subspace $\s$ that fits an incomplete dataset $\hatXI$, and claim that with high probability, the detected subspace $\s$ is the only $\r$-dimensional one that does.  \uniquenessThm\ provides a deterministic validation check for any such algorithm: $\s$ is almost surely the unique $\r$-dimensional subspace that fits $\hatXI$ if in addition it also fits a generic $\hatX$ observed in a set $\OO$ satisfying the conditions of \uniquenessThm.

As we said in \LRMCconverseMot, under this setup $\XI$ is already assumed to belong to the same subspace, so \allOfAKindThm\ is not even required here; \uniquenessThm\ alone is sufficient for the purposes of this problem.

\item[M3.]
\phantomsection\label{universalValidationCheckMot}
Extending \LRMCcheckMot, there is no reason to stop with low-rank matrix completion.  \allOfAKindThm\ provides a deterministic validation check for {\em any} algorithm that performs low-rank, or even high-rank matrix completion\cite{aistatsHRMC}, or any algorithm that finds a subspace that fits data, e.g., the EM algorithm derived in \cite{pimentel14}.

\item[M4.] 
Continuing with \universalValidationCheckMot, a universal deterministic validation check on the output of any algorithm opens the door to answering an important open question: the real sample complexity of subspace clustering with missing data\cite{pimentel14}, which is somewhat equivalent to the sample complexity of high-rank matrix completion\cite{aistatsHRMC}.  One can see in \cite{pimentel14} that the gist of this problem is to be able to identify {\em false} subspaces that for some unfortunate circumstances {\em could} deceivingly appear to fit certain data.

\item[M5]
Of course, sometimes even when we know that our data lies in a subspace \----or want to approximate it with a subspace anyway\---- we don't always know the dimension of such subspace \----or the minimum possible dimension of a subspace that approximates it nicely.  \allOfAKindThm\ can be used iteratively to find with certainty the lowest-dimensional subspace or the {\em minimal} union of subspaces that fit certain data.
\end{description}

Not pretending to do a survey on the applications of subspaces, we think these motivations should be enough to give an idea of the scope and power of these results.  With this, we move on.

\pagebreak
\section{Setup}
\label{setupSec}
In this section we fully describe the setup and notation that will be used in the remainder of the document.

\subsection{Subspaces and bases}
\label{subspacesKcharacterizationSec}
Let \phantomsection\label{SstarkDef}$\Sstar=\{\sstark{k}\}_{\k=1}^\KK$ be a set of \phantomsection\label{KKDef}$\KK$ distinct \phantomsection\label{rDef}$\r$-dimensional subspaces of \phantomsection\label{dDef}$\R^\d$.

We use \phantomsection\label{sstarDef}$\sstar$ to denote an arbitrary subspace from $\Sstar$, and \phantomsection\label{UstarDef}$\Ustar$ to denote a basis of $\sstar$, i.e., whenever possible, we drop the subscript $\k$, which is generally used to index subspaces, and unless otherwise stated, runs from $1$ to $\KK$.

\begin{myExample}
\label{subspacesBasesEg}
\normalfont
Let $\d=5$, $\r=2$, and
\begin{align*}
\Ustar = \left[\begin{matrix} 1 & 1 \\ 1 & 2 \\ 1 & 3 \\ 1 & 4 \\ 1 & 5 \end{matrix}\right].
\end{align*}
Then $\sstar=\spn\{\Ustar\}$.
$\blacksquare$
\end{myExample}

In general, we use \phantomsection\label{sDef}$\s$ to denote an arbitrary subspace and \phantomsection\label{UDef}$\U$ to denote one of its bases.

\subsection{Vectors and bases}
\label{vectorsKcharacterizationSec}
Let \phantomsection\label{XDef}$\X:=\{\xii{i}\}_{\i=1}^\N$ be denote a collection of vectors of $\R^\d$ that lies in the union of the subspaces of $\Sstar$.

As we said before, each $\xii{i}$ is assumed to lie in one of the subspaces of $\Sstar$.  This correspondence is described by \phantomsection\label{KDef}$\K=\{\ki{i}\}_{\i=1}^\N$, a multiset of indices in $\{1,...,\KK\}$ that specifies that $\xii{i}$ lies in \phantomsection\label{sstarkiDef}\hyperref[SstarkDef]{$S^\star_{k_i}$}.  To keep notation from getting out of hand, we use \phantomsection\label{sstariDef}$\sstari{i}$ as shorthands for $\sstark{k_i}$, and \phantomsection\label{UstariDef}$\Ustari{i}$ to denote a basis of $\sstari{i}$.

We use \phantomsection\label{xDef}$\x$ to denote an arbitrary element of $\X$ that lies in $\sstar$, i.e., whenever possible, we drop the subscript $\i$, which is generally used to index vectors, and unless otherwise stated, runs from $1$ to $\N$.

\begin{myExample}
\label{vectorsBasesEg}
\normalfont
With the same setup as in \subspacesBasesEg, let $\N=3$ and
\begin{align*}
\xii{1} = \left[\begin{matrix} 2\\ 2\\ 2\\ 2 \\ 2 \end{matrix}\right], \hspace{.5cm}
\xii{2} = \left[\begin{matrix} 3\\ 6\\ 9\\ 12 \\ 15 \end{matrix}\right], \hspace{.5cm}
\xii{3} = \left[\begin{matrix} 2\\ 3\\ 4\\ 5 \\ 6 \end{matrix}\right].
\end{align*}
Then
\begin{align*}
\X = \{\xii{1},\xii{2},\xii{3}\} = \left[\begin{matrix} 2 & 3 & 2 \\ 2 & 6 & 3 \\ 2 & 9 & 4 \\ 2 &12 & 5 \\ 2 & 15 & 6 \end{matrix}\right].
\end{align*}
It is easy to see that $\xii{1}$, $\xii{2}$ and $\xii{3}$ belong to $\sstar$, i.e., $\ki{1}=\ki{2}=\ki{3}$.
$\blacksquare$
\end{myExample}

\subsection{Observation sets}
\label{observationSetsSec}
We are interested on partially observed vectors.  We handle this with \phantomsection\label{OODef}$\OO:=\{\oi{i}\}_{\i=1}^\N$, a set of $\N$ sets that specifies that $\xii{i}$ is only observed in the positions of the set $\oi{i}$.  Since $\xii{i} \in \R^\d$, $\oi{i} \subset \{1,...,\d\}$.

We make two assumptions about the entries we observe, only to simplify the analysis, but these can be most easily generalized:
\begin{enumerate}
\item[(i)]
$|\oi{i}|=\r+1$ for every $\i$.
\item[(ii)]
$\bigcup_\i \oi{i} = \{1,...,\d\}$.
\end{enumerate}
We use \phantomsection\label{oDef}$\o$ to denote an arbitrary subset of $\{1,...,\d\}$ of size $\r+1$, i.e., whenever possible we drop the subscript $\i$.

We also use \phantomsection\label{ODef}$\O$, $\bb{\O}$ and $\bar{\O}$ to denote arbitrary collections of sets of $\o$'s; typically subsets of $\OO$.

\begin{myDefinition}[$\n$, $\m$]
\label{nmDef}
Given $\O$ (resp. $\bb{\O}$ and $\bar{\O}$), we define $\n$ and $\m$ (resp. $\bb{\n}$, $\bb{\m}$ and $\bar{\n}$, $\bar{\m}$) as:
\begin{align*}
\n &:=|\O|, \\
\m &:=| \displaystyle \bigcup_{\o \in \O} \o |,
\end{align*}
\end{myDefinition}

\begin{myDefinition}
\label{IJDef}
Given $\O \subset \OO$, we define $\I:=\{\i:\oi{i} \in \O\}$ and $\J :=\displaystyle \bigcup_{\o \in \O} \o$.
\end{myDefinition}

Observe that $|\O|$ is the number of sets that $\O$ contains, i.e., $|\I|$, for example, $|\OO|=\N$.

Unless otherwise stated, we use \phantomsection\label{jDef}$\j$ to index the elements of $\{1,...,\d\}$, and typically to denote that such element belongs to some set $\o$ or to index an entry of a vector, for example, $\uj{j}$ denotes the $\j^{th}$ entry of $\u$.  This way, intuitively, $\J$ is the set of $\j$'s contained in the sets of $\O$, $\m$ is the number of distinct $\j$'s that are contained in the sets of $\O$, $\n$ is the number of $\o$'s that $\O$ has, and for $\O \subset \OO$, $\I$ is the set of $\i$'s such that $\oi{i} \in \OO$ also belongs to $\O$.

For convenience, rather than listing the set of sets to specify $\O$, we typically use a $\d \times \n$ matrix whose $(\j,\i)^{th}$ entry is {\em observed}, denoted by \phantomsection\label{seeDef}$\see$, if $\j \in \oi{i}$, and {\em missing} otherwise, denoted by \phantomsection\label{missDef}$\miss$ .  When there is no room for confusion, we use $\O$ to denote such matrix.  Under this convention, $\J$ can be thought of as the set of rows with at least one observed entry, and $\m$ as the number of such rows.

\begin{myExample}
\label{observationSetsEg}
\normalfont
With the same setup as in \vectorsBasesEg.  Let $\oi{1}=\{1,2,3\}$, $\oi{2}=\{2,3,4\}$ and $\oi{3}=\{3,4,5\}$.  Then
\begin{align*}
\OO = \{\oi{1},\oi{2},\oi{3}\} = \left[\begin{matrix} \see & \miss & \miss \\ \see & \see & \miss \\ \see & \see & \see \\ \miss & \see & \see \\ \miss & \miss & \see \end{matrix}\right].
\end{align*}
If we let $\O=\{\oi{1},\oi{2}\}$, then $\I=\{1,2\}$, $\J=\{1,2,3,4\}$, $\n=2$ and $\m=4$.
$\blacksquare$
\end{myExample}

\subsection{Incomplete vectors, bases and subspaces}
\label{incompleteVectorsSec}
We are now ready to define incomplete vectors.

\begin{myDefinition}[$\hatx$]
\index{incomplete vector $\hatx$}
\label{hatxDef}
Given $\o$, we define $\hatx$ as the vector with $\d$ components whose $\j^{th}$ entry is equal to the $\j^{th}$ entry of $\x$ if $\j \in \o$, and otherwise has a value of {\em missing}, denoted by $\miss$ (resp. for $\oi{i}$ and $\hatxi{i}$).
\end{myDefinition}

Notice that $\hatx$ depends on $\o$.  Technically, we could specify this by writing $\hat{x}_\omega$, but the index $\o$ is redundant, and we want our notation to be as simple as possible.  For a collection of vectors we simply have \phantomsection\label{hatXDef}$\hatX:=\{\hatxi{i}\}_{\i =1}^\N$.  When there is no room for confusion, we equivalently use $\X$ and $\hatX$ to denote the $\d \times \N$ matrices with $\{\xii{i}\}_{\i=1}^\N$ and $\{\hatxi{i}\}_{\i=1}^\N$ as its columns.

\begin{myDefinition}[$\xo$]
\index{restricted vector $\xo$}
\label{xoDef}
We define $\xo$ as the vector in $\R^{|\o|}$ whose entries are equal to the observed entries of $\hatx$.
\end{myDefinition}

For subspaces we have something similar.

\begin{myDefinition}[$\Rdo{\omega}$, $\hats$ and $\hatU$]
\index{$\Rdo{\omega}$}
\index{projected subspace $\hats$}
\index{projected basis $\hatU$}
\label{hatsDef}
Let $\Rdo{\omega}$ be the span of the canonical vectors of $\R^\d$ corresponding to the elements of $\o$.  We define $\hats$ as the projection of $\s$ onto $\Rdo{\omega}$, and $\hatU$ as the $\d \times \r$ matrix with the entries of $\U$ in the positions of $\o$, and zeros elsewhere (resp. for $\oi{i}$, $\hatsi{i}$ and $\hatUi{i}$).
\end{myDefinition}

It is easy to see that $\spn\{\hatU\}=\hats$.  Conversely, the rows of any basis of $\hats$ must be zero in the positions that don't belong to $\o$.

Similar to $\hatx$, $\hats$ and $\hatU$ depend on $\o$.  Technically, we could specify this by writing $\hat{S}_\omega$ or $\hat{U}_\omega$, but the index $\o$ is redundant, and we want to keep our notation from getting out of hand.  For this same purpose, we use \phantomsection\phantomsection\label{hatsstariDef}$\hatsstari{i}$ as shorthand for \hyperref[sstarkiDef]{$\hat{S}^\star_{k_{i_{\omega_i}}}$}, and similarly for \phantomsection\label{hatUstariDef}$\hatUstari{i}$ and \hyperref[sstarkiDef]{$\hat{U}^\star_{k_{i_{\omega_i}}}$}.

\begin{myDefinition}[$\Uo$ and $\so$]
\index{restricted subspace $\so$}
\index{restricted basis $\Uo$}
\label{soDef}
Given $\o$, we define $\Uo$ as the $|\o| \times \r$ matrix with the non-zero rows of rows of $\hatU$, and $\so:=\spn\{\Uo\}$.
\end{myDefinition}

To simplify our notation, we use \phantomsection\label{sstaroiDef}$\sstaroi{i}$ as shorthand for \hyperref[sstarkiDef]{$S^\star_{k_{i_{\omega_i}}}$}, and similarly for \phantomsection\label{UstaroDef}\hyperref[sstarkiDef]{$U^\star_{\omega_i}$}.

\begin{figure}[H]
     \begin{center}
        \subfigure[$\hats$ is the result of projecting $\s$ onto $\Rdo{\omega}$.]{
           \label{projectionFig}
            \includegraphics[width=5cm]{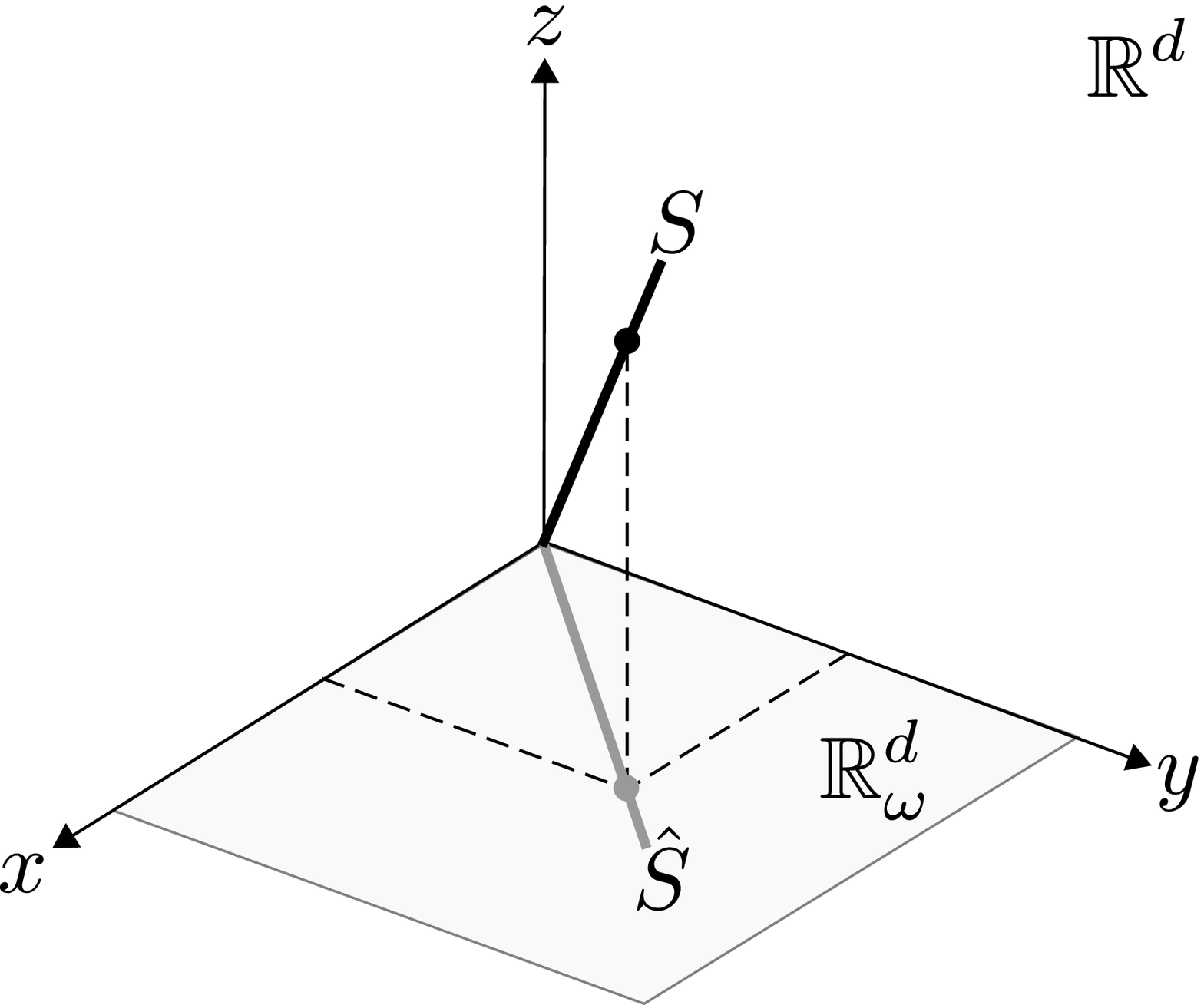}
        } \hspace{.5cm}
        \subfigure[$\so$ is the {\em restriction} of $\s$ to the positions of $\o$; a subspace in $\R^{\r+1}$.]{
           \label{somegaFig}
           \includegraphics[width=5cm]{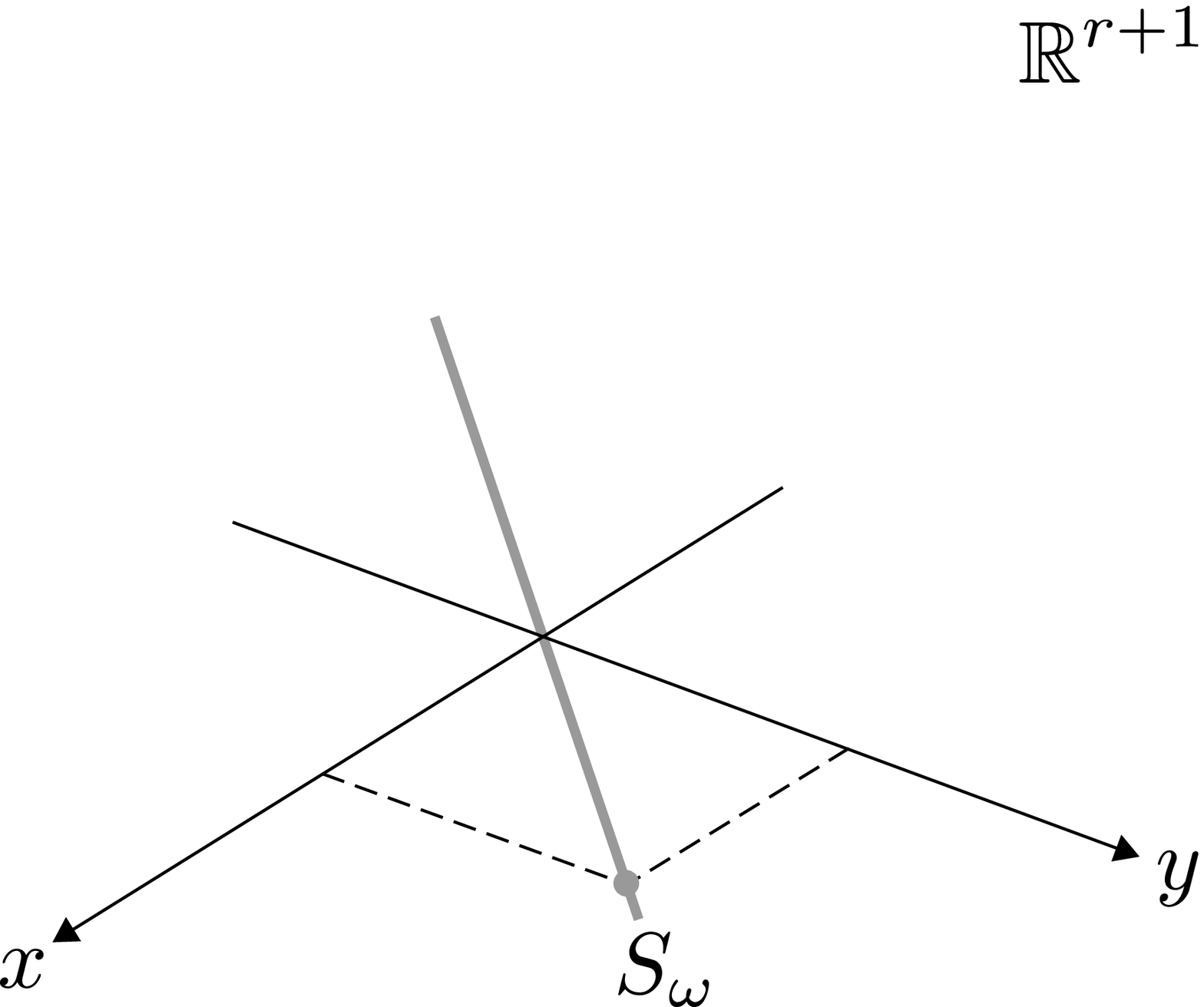}
        }
    \end{center}
    \caption{In this example, $\r=1$, and $\o=\{1,2\}$, so $\Rdo{\omega}$ is the $(x,y)$-plane. $\hats$ is a line in $\R^3$ that lies in the $(x,y)$-plane, while $\so$ is a line in $\R^2$.}
   \label{projectionSomegaFig}
\end{figure}

\begin{myExample}
\label{incompleteVectorsEg}
\normalfont
With the same setup as in Examples \ref{subspacesBasesEg}, \ref{vectorsBasesEg} and \ref{observationSetsEg}, we would obtain
\begin{align*}
\hatxi{1} &= \left[\begin{matrix} 2\\ 2 \\ 2\\ \miss \\ \miss \end{matrix}\right], \hspace{.125cm}
\hatxi{2} = \left[\begin{matrix} \miss \\ 6\\ 9 \\ 12 \\ \miss \end{matrix}\right], \hspace{.125cm}
\hatxi{3} = \left[\begin{matrix} \miss \\ \miss \\ 4 \\ 5 \\ 6 \end{matrix}\right], \hspace{.5cm}
\hatX = \left[\begin{matrix} 2 & \miss & \miss \\ 2 & 6 & \miss \\ 2 & 9 & 4 \\ \miss &12 & 5 \\ \miss & \miss & 6 \end{matrix}\right], \\
\xoi{1} &= \left[\begin{matrix} 2\\ 2 \\ 2\\ \end{matrix}\right], \hspace{.125cm}
\xoi{2} = \left[\begin{matrix} 6\\ 9 \\ 12 \\ \end{matrix}\right], \hspace{.125cm}
\xoi{3} = \left[\begin{matrix} 4 \\ 5 \\ 6 \end{matrix}\right], \\
\hatUi{1}& = \left[\begin{matrix} 1 & 1 \\ 1 & 2 \\ 1 & 3 \\ 0 & 0 \\ 0 & 0 \end{matrix}\right], \hspace{.5cm}
\hatUi{2} = \left[\begin{matrix} 0 & 0 \\ 1 & 2 \\ 1 & 3 \\ 1 & 4 \\ 0 & 0 \end{matrix}\right], \hspace{.5cm}
\hatUi{3} = \left[\begin{matrix} 0 & 0 \\ 0 & 0 \\ 1 & 3 \\ 1 & 4 \\ 1 & 5 \end{matrix}\right], \\
\Uoi{1}& = \left[\begin{matrix} 1 & 1 \\ 1 & 2 \\ 1 & 3 \end{matrix}\right], \hspace{.5cm}
\Uoi{2} = \left[\begin{matrix} 1 & 2 \\ 1 & 3 \\ 1 & 4 \end{matrix}\right], \hspace{.5cm}
\Uoi{3} = \left[\begin{matrix} 1 & 3 \\ 1 & 4 \\ 1 & 5 \end{matrix}\right].
\end{align*}
$\blacksquare$
\end{myExample}

Before we move to other things, one technical definition that will simplify our argumentation greatly without loss of generality.  We will discuss more about this in \textsection \ref{aboutOurAssumptionsSec}.
\begin{myDefinition}[Degenerate subspace]
\index{degenerate subspace}
\label{degenerateDef}
We say an $\r$-dimensional subspace is {\em degenerate} iff there exists an \phantomsection\label{upsDef}$\ups \subset \{1,...,\d\}$ with $|\ups|\leq\r$, such that $\dim S_\ups<|\ups|$.
\end{myDefinition}

\begin{myRemark}
\normalfont
\degenerateDef\ is saying that a subspace is non-\degenerate\ iff for every $\ups \subset \{1,...,\d\}$ with $|\ups|\leq\r$, $\dim S_\ups = |\ups|$, i.e. $S_\ups=\R^{|\ups|}$, or equivalently, iff every $|\ups| \times \r$ matrix formed with the rows of any of its bases is full-rank.  Notice that almost every subspace is non-\degenerate.
$\blacksquare$
\end{myRemark}

\begin{myExample}
\normalfont
Unless otherwise stated, subspaces of all examples in the document are non-\degenerate.  Here is an example of a \degenerate\ one:
\begin{align*}
\U=\left[\begin{matrix}
1 & 0 \\
0 & 1 \\
0 & 1
\end{matrix}\right].
\end{align*}
If we take $\ups=\{2,3\}$, one can verify that $\dim S_\ups=1< 2=|\ups|$ by simply looking at the bottom $2\times2$ minor of $\U$, which is rank-defficient.
$\blacksquare$
\end{myExample}

\subsection{Fitting incomplete vectors}
\label{fittingIncompleteVectorsSec}
Let us now define what it means to {\em \fitxo} an incomplete vector.

\begin{myDefinition}[To fit $\hatx$]
\index{to \fitxo\ $\hatx$}
\label{fittingxoDef}
We say $\s$ {\em fits} $\hatx$ iff there exists a vector in $\s$ that is equal to $\hatx$ in all its observed entries, or equivalently, iff $\xo \in \so$.
\end{myDefinition}

\begin{myDefinition}[To fit $\hatX$]
\index{to \fitXO\ $\hatX$}
\label{fittinghatXDef}
We say $\s$ {\em fits} $\hatX$ iff $\s$ \fitsxo\ $\hatxi{i}$ for every $\i$.
\end{myDefinition}

Notice that each $\hatxi{i}$ might belong to a different subspace in $\Sstar$.

\begin{myExample}
\normalfont
With the same setup as Examples \ref{prologueaEg} and \ref{prologuebEg}, it is easy to see that $\s$ \fitsXO\ $\hatX$.
$\blacksquare$
\end{myExample}

\subsection{Fitting generic vectors $\sim$ fitting observation sets}
\label{fittingSetsSec}
\index{to \fitGen\ a generic vector}
We now formalize what we mean by fitting generic vectors.  Intuitively, when we say that a vector is generic, we mean that it could be {\em any} vector, and whenever we say that $\s$ \fitsGen\ a generic vector from $\sstar$, what we formally mean is that $\s$ fits {\em every} vector from $\sstar$.

The same for an incomplete vector.  Moreover, with \fittingxoDef, it is easy to see that whether or not $\s$ fits {\em every} $\hatx \in \sstar$ depends only on $\o$.  Therefore, whenever we informally say that $\s$ \fitsGen\ a generic $\hatx$, what we formally mean is that $\s$ \fitso\ $\o$ in the following sense.

\begin{myDefinition}[To fit $\o$]
\index{to \fito\ $\o$}
\label{fittingoDef}
Given $\sstar$, we say that $\s$ {\em fits} $\o$ iff $\so$ \fitsxo\ {\em every} $\xo \in \sstaro$.
\end{myDefinition}

\begin{myRemark}
\label{fittingoRmk}
\normalfont
Notice that $\s$ will \fito\ $\o$ iff $\sstaro \subset \so$, i.e., iff for every $\x \in \sstar$ there exists a $\u \in \s$ such that $\uo=\xo$.

In general, $\s$ {\em \fitso} $\o$ iff $\sstaro \subset \so$.  Nevertheless since $\sstar$ is non-\degenerate, $\sstaro$ is an $\r$-dimensional subspace, so whenever $\s$ is also an $\r$-dimensional subspace, we also have $\so \subset \sstaro$, whence $\s$ {\em \fitso} $\o$ iff $\so=\sstaro$.
$\blacksquare$
\end{myRemark}

\begin{myExample}
\label{fittingoEg}
\normalfont
Let $\r=1$ and suppose
\begin{align*}
\U = \left[\begin{matrix} 2\\ 2\\ 3 \end{matrix}\right], \hspace{.5cm}
\Ustar = \left[\begin{matrix} 1\\ 1\\ 1 \end{matrix}\right], \hspace{.5cm}
\o = \left[\begin{matrix} \see \\ \see \\ \miss \end{matrix}\right].
\end{align*}
Since $\s$ would \fitxo\ any $\hatx \in \sstar$, we say $\s$ \fitso\ $\o$.
$\blacksquare$
\end{myExample}

Similarly, when we say that a set of vectors $\X \in \Sstar$, consisting of $\xii{1} \in \sstari{1},...,\xii{\N} \in \sstari{\N}$, is generic, we mean that $\xii{1}$ could be {\em any} vector from $\sstari{1}$, $\xii{2}$ could be {\em any} vector from $\sstari{2}$, and so on.  Whenever we say that $\s$ \fitsGen\ a generic set of vectors $\X \in \Sstar$, what we formally mean is that $\s$ fits {\em every} $\xii{i} \in \sstari{i}$ for every $\i$.

The same for sets of incomplete vectors.  With \fittinghatXDef\ it is easy to see that whether or not $\s$ fits {\em every} $\hatX \in \Sstar$ depends only on $\OO$.  Therefore, whenever we informally say that $\s$ \fitsGen\ a generic $\hatX$, what we formally mean is that $\s$ \fitsO\ $\OO$, with the following.

\begin{myDefinition}[To \fitO\ $\O$]
\index{to \fitO\ $\O$}
\label{fittingODef}
Given $\Sstar$ and $\K$, we say that $\s$ {\em fits}\ $\O$ iff $\s$ \fitsxo\ $\hatxi{i}$ for {\em every} $\xii{i} \in \sstari{i}$ and {\em every} $\i \in \I$.
\end{myDefinition}

\begin{myRemark}
\label{fittingORmk}
\normalfont
Recall that $\sstari{i}$ is a shorthand for $\sstark{k_i}$ and $\ki{i}$ is the index in $\K$ that specifies that $\xii{i}$ lies in $\sstark{k_i}$.  Hence the dependency on $\K$ in \fittingODef.
$\blacksquare$
\end{myRemark}

Notice that there are many equivalent ways of defining what it means to \fitO\ $\O$.  For example, we could also say that $\s$ {\em fits}\ $\O$ iff $\soi{i}$ \fitsxo\ $\xoi{i}$ for {\em every} $\xoi{i} \in \sstaroi{i}$ and {\em every} $\i \in \I$, or we could define it as in the next Remark.

\begin{myRemark}
\normalfont
\label{fittingOIsFuncRmk}
In general, $\s$ {\em \fitso} $\OO$ iff $\sstaroi{i} \subset \soi{i}$ for every $\i$.  Nevertheless, since $\sstar$ is non-\degenerate, $\sstaroi{i}$ is an $\r$-dimensional subspace, so whenever $\s$ is also an $\r$-dimensional subspace, we also have that $\soi{i} \subset \sstaroi{i}$ for every $\i$, whence $\s$ {\em \fitsO} $\OO$ iff $\soi{i}=\sstaroi{i}$ for every $\i$.
$\blacksquare$
\end{myRemark}

\begin{myExample}
\label{fittingOEg}
\normalfont
Suppose $\r=1$ and
\begin{align*}
\U = \left[\begin{matrix} 2 \\ 2 \\ 3 \end{matrix}\right], \hspace{.5cm}
\Ustari{1} = \left[\begin{matrix} 1 \\ 1 \\ 1 \end{matrix}\right], \hspace{.5cm}
\Ustari{2} = \left[\begin{matrix} 1 \\ 2 \\ 3 \end{matrix}\right], \hspace{.5cm}
\OO = \left[\begin{matrix} \see & \see \\ \see & \miss \\ \miss & \see \end{matrix}\right].
\end{align*}
Since $\s$ would \fitxo\ every $\hatxi{1} \in \sstari{1}$ {\em and} every $\hatxi{2} \in \sstari{2}$, we say $\s$ \fitsO\ $\OO$.
$\blacksquare$
\end{myExample}

\begin{myRemark}
\normalfont
When we informally say that an $\r$-dimensional subspace $\s$ \fitsGen\ a {\em generic} $\hatx$, what we formally mean is that $\s$ \fitso\ $\o$, i.e., that $\s$ \fitsxo\ $\hatx$ for {\em every} $\x \in \sstar$.  This guarantees that $\s$ is somehow independent of one particular instance of $\hatx$.  This is essential for our analysis, because if $\s$ is to \fito\ $\o$, then $\so$ must \fitxo\ $\xo$ for {\em every} $\xo \in \sstaro$.  This implies that $\s$ must satisfy $\so=\sstaro$ (see \fittingoRmk).

In contrast, $\s$ need not satisfy this to fit {\em one particular} $\hatx$. For instance, $\s$ could fit {\em one particular} $\hatx$ by just fixing the observed entries of $\hatx$ in the positions of $\o$ of a spanning vector of $\s$, e.g., with the same setup as in \subspacesBasesEg, let
\begin{align*}
\x = \left[\begin{matrix} 2\\ 3\\ 4\\ 5 \\6 \end{matrix}\right], \hspace{.5cm}
\o = \left[\begin{matrix} \see \\ \see \\ \see \\ \miss \\ \miss \end{matrix}\right], \hspace{.5cm}
\hatx = \left[\begin{matrix} 2\\ 3\\ 4\\ \miss \\ \miss \end{matrix}\right].
\end{align*}
Then we could construct
\begin{align*}
\U = \left[\begin{matrix} 2 & \uj{12} \\ 3 & \uj{22} \\ 4 & \uj{32} \\ \uj{41} & \uj{42} \\ \uj{51} & \uj{52} \end{matrix}\right], \hspace{.5cm}
\end{align*}
and $\U$ would \fitxo\ $\hatx$ for any choices of $\uj{12}, \uj{22}$ and $\uj{32}$, so $\s$ would {\em need not} satisfy $\so=\sstaro$ to \fitxo\ $\hatx$.

Similarly, when we informally say that an $\r$-dimensional subspace $\s$ \fitsGen\ a {\em generic} $\hatX$, what we formally mean is that $\s$ \fitsO\ $\OO$, i.e., that $\s$ \fitsxo\ $\hatxi{i}$ for {\em every} $\xii{i} \in \sstari{i}$ {\em and every} $\i$.  This guarantees that $\s$ is somehow independent of one particular instance of $\hatX$.  This is essential for our analysis, because $\s$ must satisfy $\soi{i}=\sstaroi{i}$ for every $\i$ in order to \fitO\ $\OO$.

$\blacksquare$
\end{myRemark}

\pagebreak
\section{Assumptions}
\label{assumptionsSec}
Now that we have fully specified our setup, we use this section to give a detailed, precise and unified list of our assumptions, in order to present them all together and clearly, to emphasize how lenient they are, how they are mostly used to ease our argumentation, and how easily they can be generalized.

\begin{description}
\item[A1.]
\phantomsection\label{unionSubspacesAss}
$\xii{i} \in \sstari{i}$ for every $\i$.  This is just the basic setup of the problem: that our data lie in a union of subspaces.  This is to give some structure to $\X$.  Without this assumption, we cannot possibly hope to determine the missing values in $\hatX$, as they could be arbitrary, whence nothing can be said about the complete counterpart $\X$, and no low-dimensional subspace can be guaranteed to fit $\X$.

\item[A2.]
\phantomsection\label{rDimensionAss}
All subspaces in $\Sstar$ are assumed to be $\r$-dimensional subspaces of $\R^\d$.  This is just to simplify our arguments, but can be easily generalized to the case where the subspaces in $\Sstar$ are of different dimensions (see \textsection \ref{aboutOurAssumptionsSec}).  Observe that we are not assuming anything about $\KK$.  For all we know, $\KK$ could be arbitrarily large; even larger than $\N$.

\item[A3.]
\phantomsection\label{sstarNonDegenerateAss}
All subspaces in $\Sstar$ are assumed to be non-\degenerate.   This is just to simplify our arguments, but can be easily generalized, {\em if necessary}.  We emphasize, {\em if necessary}, because fortunately, the set of \degenerate\ subspaces has measure zero, thus our results hold, without any further modification, for almost every $\Sstar$.  For a further discussion about \degenerate\ subspaces see \textsection \ref{aboutOurAssumptionsSec}.

\item[A4.]
\phantomsection\label{sizeoAss}
$|\oi{i}|=\r+1$ for every $\i$.  Observe that if $|\o| \leq \r$, there is no possible way to determine the subspace where $\x$ really lies.  More precisely, since subspaces in $\Sstar$ are non-\degenerate, a vector observed in fewer than $\r+1$ entries could belong to any of the subspaces in $\Sstar$.

On the other hand, if $|\o|>\r+1$ it can only be easier to determine if $\x$ really belongs to $\s$, as any subspace that \fitsxo\ it will have to satisfy more restrictions.  In other words, it is harder to \fitxo\ $\hatxi{1}$ than to \fitxo\ $\hatxi{2}$ if $|\oi{1}|>|\oi{2}|$.  This assumption can be immediately generalized using this simple observation.

This way, rather than an assumption \----being $|\o|>\r$ a requirement for the task that we want to achieve\---- this is just a convenience statement to simplify our arguments, analysis and notation, that at the same time states that we are working under the most minimal assumptions on $|\o|$.  More about this is discussed in \textsection \ref{aboutOurAssumptionsSec}.

Notice that we assume nothing about $\OO$ being spread uniformly, at random, or anything of the sort; our results apply to completely arbitrary $\OO$'s.

\item[A5.]
\phantomsection\label{unionOOAss}
$\bs{\J}:=\bigcup_\i \oi{i} = \{1,...,\d\}$.  This is just for simplicity of notation and argumentation.  Observe that if $\OO$ has a fully unobserved row, there will always be infinitely many subspaces that \fitO\ $\OO$.  We can only determine when the projection of such subspaces onto $\Rdo{\bs{\J}}$ will be unique, i.e., when the \hyperref[soDef]{restriction} of a subspace that \fitsO\ $\OO$, is unique.  We can easily generalize this by working only with the observed entries of $\OO$, and alternatively defining $\{1,...,\d\}:=\bs{\J}$, as we can say nothing anyway about the entries where no row is observed.

\item[A6.]
\phantomsection\label{existsNonDegenerateAss}
There exists a non-\degenerate\ $\r$-dimensional subspace that \fitsO\ $\OO$.  Rather than an assumption, this is the motivation of the document: assuming that there is a subspace that fits certain data, we want to determine when such data really {\em lies} in a subspace.  Since we are assuming that subspaces in $\Sstar$ are non-\degenerate, we know that if our data really lies in a subspace, it is a non-\degenerate\ one.  If certain \degenerate\ subspace fits our data, we would already know our data does not really lies in it. We use this assumption mainly to avoid all this uninteresting argumentation in every statement.
\end{description}

\pagebreak
\section{Results}
\label{resultsSec}
To avoid being anal in every statement of the document without giving up being precise, we would like to give an important remark about our results before presenting them:
\\
\\
\noindent {\bf All our results hold under assumptions A1-A6 above, and for almost every $\Sstar$ under the natural Lebesgue measure.}
\\
\\
In other words, our results hold for almost every dataset lying in a union of subspaces.

Having said that, we are now ready to present our results.  We would like to take advantage of this section to also give some intuitive meaning to them, to emphasize how simple they are, and to give an example of their usage.

Recall that the main goal is to determine how can we make sure that if certain data {\em fit} in a subspace $\s$, it is because such data really {\em lie} in a subspace.

As discussed in \textsection \ref{introSec}, we can answer this question by characterizing when a set of incomplete vectors observed only in $\OO$, \behaves\ as a complete one.  Such characterization, intuitively stated in \textsection \ref{introSec} is formalized by our main result:

\begin{framed}
\begin{myTheorem}[All of a kind]
\label{allOfAKindThm}
Suppose an $\r$-dimensional subspace $\s$ \fitsO\ $\OO$, and that there exists an $\bb{\O} \subset \OO$ of size $\d-\r+1$ such that
\begin{align}
\label{allOfAKindThmEq}
\m \geq \n + \r \hspace{1cm} \text{$\forall$ $\O \subsetneq \bb{\O}$}.
\end{align}

Then $\ki{i}=\ki{\ii}$ for every $(\i,\ii)$.  Furthermore, $\s$ is the only $\r$-dimensional subspace that \fitsO\ $\OO$, and $\s \in \Sstar$.

Conversely, if no such $\bb{\O}$ exists, $\ki{i}$ {\em might} be different from $\ki{\ii}$ for some $(\i,\ii)$, there {\em could} be infinitely many $\r$-dimensional subspaces that \fitO\ $\OO$, and even if there is only one such subspace, it {\em might} not even belong to $\Sstar$.
\end{myTheorem}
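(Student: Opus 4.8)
The plan is to recast ``$\s$ \fitsO\ $\OO$'' as a family of linear constraints on $\s^\perp$ and then run a rank count on $\bb{\O}$. For each $\oi{i}$, assumption \sstarNonDegenerateAss\ makes $\sstaroi{i}$ an $\r$-dimensional subspace of the $(\r+1)$-dimensional space $\Rdo{\omega_i}$, hence with a one-dimensional orthogonal complement; let $\aoi{i}\in\R^\d$, supported on $\oi{i}$, span it. By \fittingOIsFuncRmk\ (both $\s$ and $\sstar$ being $\r$-dimensional) together with \aEntriesLem, $\s$ \fitsO\ $\OO$ iff $\aoi{i}^\T\u=0$ for every $\u\in\s$ and every $\i$, so in particular $\spn\{\aoi{i}:\oi{i}\in\bb{\O}\}\subseteq\s^\perp$ and this span has dimension at most $\d-\r$.

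First I would prove that all $\ki{i}$ with $\oi{i}\in\bb{\O}$ coincide. Suppose not. I claim that then, for almost every $\Sstar$, the $\d-\r+1$ vectors $\{\aoi{i}:\oi{i}\in\bb{\O}\}$ are linearly independent \----which contradicts the bound $\dim\le\d-\r$ above, since $\d-\r+1\le\d$. By \eqref{allOfAKindThmEq} every proper subset of $\bb{\O}$ satisfies $\m\geq\n+\r$, so within each subspace class of $\bb{\O}$ the corresponding $\aoi{i}$ are independent (\independenceLem, \basisLem, \dimUULem) and span the expected-dimensional piece of that class's orthogonal complement; a generic-position argument on $\Sstar$ \----for a given partition of $\bb{\O}$ into two or more classes, choose bases of the true subspaces so that these class-spans meet only trivially, which a dimension count shows is possible under \eqref{allOfAKindThmEq}\---- then exhibits one configuration with full span $\d-\r+1$, and since linear independence is a Zariski-open condition in the entries of the bases of $\Sstar$, it holds for almost every $\Sstar$. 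Hence all $\ki{i}$ with $\oi{i}\in\bb{\O}$ equal a common $\k$.

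The remainder is a dimension count. All $\aoi{i}$ with $\oi{i}\in\bb{\O}$ now lie in the $(\d-\r)$-dimensional space $(\sstark{\k})^\perp$; dropping one element of $\bb{\O}$ leaves a proper subset of size $\d-\r$ whose $\aoi{i}$ are independent (again \independenceLem/\basisLem, since $\m\ge\n+\r$ holds there), hence span all of $(\sstark{\k})^\perp$. So $\spn\{\aoi{i}:\oi{i}\in\bb{\O}\}=(\sstark{\k})^\perp$, whence $\s^\perp=(\sstark{\k})^\perp$ and $\s=\sstark{\k}\in\Sstar$; any other $\r$-dimensional subspace \fitsO\ $\OO$ likewise contains $(\sstark{\k})^\perp$ in its orthogonal complement, so it too equals $\sstark{\k}$, giving uniqueness (this is the mechanism behind \uniquenessThm, applied to the size-$(\d-\r)$ subset). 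To pass from ``$\ki{i}=\k$ for $\oi{i}\in\bb{\O}$'' to ``$\ki{i}=\ki{\ii}$ for all $(\i,\ii)$'', note $\s=\sstark{\k}$ \fitsO\ $\OO$ forces $\sstark{\k}$ and $\sstark{\ki{i'}}$ to have the same restriction to $\oi{i'}$ for every $\i'$, and for almost every $\Sstar$ distinct subspaces of $\Sstar$ have distinct restrictions to each $(\r+1)$-subset of coordinates (finitely many subsets, finitely many pairs), so $\ki{i'}=\k$. The converse \----that when no such $\bb{\O}$ exists all three guarantees can fail\---- would follow from the tightness of \eqref{allOfAKindThmEq}: with the size-$(\d-\r+1)$ condition violated there is room to place $\OO$ so that a subspace outside $\Sstar$ fits, or so that infinitely many subspaces fit, or so that the $\ki{i}$ disagree; this is collected in \converseAllOfAKindCor\ and illustrated by \allOfAKindEga.

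\textbf{The main obstacle} is the middle step: showing that when the $\ki{i}$ over $\bb{\O}$ are not all equal the vectors $\{\aoi{i}:\oi{i}\in\bb{\O}\}$ become independent for generic $\Sstar$. Condition \eqref{allOfAKindThmEq} only buys independence of proper subsets (size $\le\d-\r$); the final unit of rank must come from the genericity of having $\geq 2$ distinct true subspaces, and making this rigorous requires exhibiting, for every admissible partition of $\bb{\O}$ into subspace classes, an explicit $\Sstar$ for which the class-spans are in direct sum, and checking the attendant dimension arithmetic goes through whenever \eqref{allOfAKindThmEq} holds. Everything after that is bookkeeping on top of the completion lemmas already established.
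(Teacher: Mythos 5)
Your overall architecture is the same as the paper's: you encode ``$\s$ \fitsO\ $\OO$'' through the orthogonal directions $\ai{i}$, use \eqref{allOfAKindThmEq} to obtain linear independence of every proper subset of $\{\ai{i}:\oi{i}\in\bb{\O}\}$, derive a contradiction with $\dim\s^\perp=\d-\r$ unless all the classes coincide, and finish with the same dimension count for uniqueness and membership in $\Sstar$, plus a pointer to counterexamples for the converse. The paper packages the genericity step in the opposite direction (\kCharLem: a \emph{minimal} dependence $\a=\sum_{\ii}\beta_{\ii}\ai{\ii}$ with all $\beta_{\ii}\neq 0$ forces, almost surely, a common subspace; minimality is exactly what \eqref{allOfAKindThmEq} delivers through \independenceLem\ and \LCor), but your contrapositive formulation is logically equivalent once every proper subset is known to be independent.

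The genuine gap is precisely where you flag it, and the repair you sketch is not yet adequate. You assert that for any partition of $\bb{\O}$ into two or more classes one can ``choose bases of the true subspaces so that these class-spans meet only trivially, which a dimension count shows is possible under \eqref{allOfAKindThmEq}.'' A bare dimension count ($\sum_C n_C=\d-\r+1\leq\d$ with each $n_C\leq\d-\r$) does not establish this: the vectors $\ai{i}$ are not free vectors of the class complement $(\sstark{k_C})^\perp$ --- each is pinned to the coordinate subspace $\Rdo{\omega_i}$ --- so whether the class-spans can be placed in general position depends on how the supports $\oi{i}$ overlap, and that interaction is the entire combinatorial content of \eqref{allOfAKindThmEq}. (For the finest partition your claim reduces to a Hall-type matching statement on the supports; for coarser partitions the within-class constraint $\ai{i}\in(\sstark{k_C})^\perp$ couples the rows and the matching argument no longer applies verbatim.) This is where the paper spends its real effort: \basisLem\ shows, by an explicit cofactor expansion, that the obstruction to independence is the vanishing of a nonzero polynomial in the entries of the $\aoi{i}$, so the bad locus has measure zero. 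Your Zariski-openness reduction is fine as far as it goes, but the witness construction it presupposes \emph{is} the theorem and must be supplied --- for instance by the determinant computation of \basisLem\ or by the propagation argument of \textsection\ref{ideaBehindIndependenceLemSec}. The remaining steps of your plan (uniqueness via a size-$(\d-\r)$ subset spanning $(\sstark{\k})^\perp$, propagation to the $\o$'s outside $\bb{\O}$ by distinctness of generic restrictions, converse by example) match the paper and are sound.
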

\end{framed}

Before moving on, we give some intuitive interpretations of \allOfAKindThm:

\begin{itemize}
\item[\aProp]
That there is only one subspace that \fitsO\ $\OO$ means that there is only one subspace that \fitsGen\ a set of generic incomplete vectors $\hatX$.
\item[\bProp]
That $\ki{i}=\ki{\ii}$ for every $(\i,\ii)$ means that all the vectors of $\hatX$ lie in the same subspace of $\Sstar$.
\end{itemize}
In other words, \allOfAKindThm\ is telling us precisely what we wanted: when will a generic $\hatX$ satisfy the desired properties \aProp\ and \bProp\ from \textsection \ref{insightSec}, and hence when it will \behave\ as a complete vector.

But not only that.  Notice that the converse is telling us that if $\OO$ does not satisfy the conditions of the theorem, then it cannot be guaranteed that all the vectors of $\hatX$ lie in the same subspace of $\Sstar$, whence $\hatX$ will not \behave\ as a complete vector.

In conclusion, \allOfAKindThm\ is telling us that a generic $\hatX$ will \behave\ as a complete generic vector iff $\OO$ satisfies the condition of the theorem, namely that it contains a set $\bb{\O}$ of size $\d-\r+1$ that satisfies \eqref{allOfAKindThmEq}.  One simple and intuitive interpretation of \eqref{allOfAKindThmEq} is that

\vspace{.3cm}
\begingroup
\leftskip1.5em
\rightskip\leftskip
\noindent
{\em For every strict subset of $\bb{\O}$ with $\n$ {\em columns}, the number of distinct {\em rows} with at least one observation, $\m$, is at least $\n+\r$.}
\par
\endgroup
\vspace{.3cm}

\begin{myExample}
\label{mainThmEg}
\normalfont
Continuing with \introEg, we have that
\begin{align*}
\OO=\left[\begin{matrix}
\see & \miss & \miss & \see & \see \\
\see & \see & \miss & \miss & \see \\
\see & \see & \see & \miss & \miss \\
\miss & \see & \see & \see & \see \\
\miss & \miss & \see & \see & \miss \\
\end{matrix}\right].
\end{align*}
It is easy to see that $\OO$ satisfies the conditions of \allOfAKindThm.  Explicitly, take $\bb{\O} = \{\oi{1},...,\oi{4}\}$.  One may verify that $\bb{\O}$ satisfies \eqref{allOfAKindThmEq}.  If $\s$ \fitsO\ $\OO$, then $\ki{i}=\ki{\ii}$ for every $(\i,\ii)$ and $\s=\sstari{i}$ is the only $\r$-dimensional subspace that \fitsO\ $\OO$.

For an example of the converse statement of \allOfAKindThm, consider the same setup as in \allOfAKindEga.  Observe that there exists no $\bb{\O} \subset \OO$ that satisfies the conditions of \allOfAKindThm.  Thus the columns of $\X$ {\em might} not belong to the same subspace, which is precisely the case.  We can also see that $\s$ is none of the subspaces in $\Sstar$.
$\blacksquare$
\end{myExample}

As we mentioned in \textsection \ref{theEssenceSec}, determining when there is only one $\r$-dimensional subspace that \fitsO\ $\OO$ is essential for the proof of \allOfAKindThm.  The answer to this is given by our second main result.

\begin{framed}
\begin{myTheorem}[Characterization of uniqueness]
\label{uniquenessThm}
There is {\em only one} $\r$-dimensional subspace that \fitsO\ $\OO$ iff there exists an $\bb{\O} \subset \OO$ of size $\d-\r$ such that
\begin{align}
\label{uniquenessThmEq}
\m \geq \n + \r \hspace{1cm} \text{$\forall$ $\O \subset \bb{\O}$}.
\end{align}
\end{myTheorem}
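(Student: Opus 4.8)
The plan is to recast ``$\s$ \fitsO\ $\OO$'' as a single subspace containment, read off uniqueness as a dimension count, and then reduce everything to one combinatorial lemma about ranks of vectors supported on the sets of $\OO$. For the containment: by \fittingOIsFuncRmk, an $\r$-dimensional $\s$ \fitsO\ $\OO$ iff $\soi{i}=\sstaroi{i}$ for every $\i$. Since $|\oi{i}|=\r+1$ and the subspaces of $\Sstar$ are non-\degenerate, each $\sstaroi{i}$ is a hyperplane of $\R^{\r+1}$; let $a_{\oi{i}}\in\R^{\r+1}$ be a nonzero normal of it and $\hat a_{\oi{i}}\in\R^\d$ its zero-padding onto the coordinates of $\oi{i}$. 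As $\soi{i}$ and $\sstaroi{i}$ are both $\r$-dimensional, $\soi{i}=\sstaroi{i}$ holds iff $\soi{i}\subseteq\sstaroi{i}$, i.e.\ iff $\hat a_{\oi{i}}^\T u=0$ for every $u\in\s$. Hence $\s$ \fitsO\ $\OO$ iff $\s\subseteq\mathscr{U}_{\boldsymbol\Omega}:=\bigcap_{\i=1}^{\N}\ker\hat a_{\oi{i}}^\T$, i.e.\ iff $\s$ lies in the orthogonal complement of $\spn\{\hat a_{\oi{i}}\}_{\i=1}^{\N}$; in particular $\dim\mathscr{U}_{\boldsymbol\Omega}=\d-\dim\spn\{\hat a_{\oi{i}}\}_{\i=1}^{\N}$. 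I would first record this as two short lemmas---that $\mathscr{U}_{\boldsymbol\Omega}$ contains every subspace that \fitsO\ $\OO$, and conversely.

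By \existsNonDegenerateAss, some non-\degenerate\ $\r$-dimensional subspace \fitsO\ $\OO$, so $\dim\mathscr{U}_{\boldsymbol\Omega}\geq\r$, equivalently $\dim\spn\{\hat a_{\oi{i}}\}_{\i=1}^{\N}\leq\d-\r$ always. If $\dim\mathscr{U}_{\boldsymbol\Omega}=\r$ then $\mathscr{U}_{\boldsymbol\Omega}$ is the only $\r$-dimensional subspace it contains, hence the only one that \fitsO\ $\OO$; if $\dim\mathscr{U}_{\boldsymbol\Omega}>\r$ then it contains infinitely many $\r$-dimensional subspaces, each of which \fitsO\ $\OO$. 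So there is exactly one $\r$-dimensional subspace that \fitsO\ $\OO$ iff $\dim\spn\{\hat a_{\oi{i}}\}_{\i=1}^{\N}=\d-\r$; and, since that span never has dimension more than $\d-\r$, this is the same as asking that some $\d-\r$ of the vectors $\hat a_{\oi{i}}$ be linearly independent.

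It then remains to prove the combinatorial lemma: for $\bb{\O}\subseteq\OO$, the vectors $\{\hat a_\o:\o\in\bb{\O}\}$ are linearly independent (for almost every $\Sstar$) iff \eqref{uniquenessThmEq} holds. The ``only if'' direction is a counting argument: if some nonempty $\O\subseteq\bb{\O}$ has $\m<\n+\r$, take the non-\degenerate\ $\s$ supplied by \existsNonDegenerateAss; every $\hat a_\o$ with $\o\in\O$ is supported on $\J$ and orthogonal to $\s$, hence orthogonal to the restriction $S_\J$ of $\s$ to $\J$, which is $\r$-dimensional because $\s$ is non-\degenerate\ and $\m\geq\r+1$. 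So the $\n$ vectors $\{\hat a_\o\}_{\o\in\O}$, regarded as vectors of $\R^\m$, lie in the $(\m-\r)$-dimensional orthogonal complement of $S_\J$, and $\m-\r<\n$ forces a dependence. For the ``if'' direction I would show that \eqref{uniquenessThmEq} is exactly what makes the $\d\times\n$ matrix with columns $\{\hat a_\o:\o\in\bb{\O}\}$ attain full column rank for generic $\Sstar$: induct on $\n$, driving the induction with a Hall/matching argument---every set $T$ of those columns meets at least $|T|+\r$ rows, so any $\r$ rows can be deleted without violating Hall's condition---to strip off one column while keeping \eqref{uniquenessThmEq}, and reducing ``for almost every $\Sstar$'' to exhibiting one non-vanishing maximal minor. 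Combining the two directions: uniqueness holds iff some size-$(\d-\r)$ subcollection of $\OO$ gives linearly independent $\hat a$'s, iff some $\bb{\O}\subseteq\OO$ of size $\d-\r$ satisfies \eqref{uniquenessThmEq}; the ``only if'' half of this runs contrapositively, since if no such $\bb{\O}$ exists then every size-$(\d-\r)$ subcollection contains a deficient $\O$, is therefore linearly dependent, and so $\dim\spn\{\hat a_{\oi{i}}\}_{\i=1}^{\N}<\d-\r$.

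The hard part will be the ``if'' direction just above. The columns $\hat a_\o$ are not freely generic: two observation sets of $\OO$ that $\K$ assigns to the same subspace of $\Sstar$ produce columns that are algebraically entangled, so one cannot simply invoke the classical fact that a matrix with a prescribed zero pattern achieves rank equal to its maximum matching. The real work is to show these forced correlations never drop the rank below what the support pattern alone allows, and this is precisely where the slack ``$\m\geq\n+\r$''---as opposed to merely $\m\geq\n$---and the non-\degenerate\ hypothesis get consumed. It is also why uniqueness, unlike the ``all of a kind'' conclusion of \allOfAKindThm\ which in addition forces $\ki{i}=\ki{\ii}$, tolerates columns drawn from several subspaces: with $\d-\r$ columns the correlations can be absorbed, whereas with $\d-\r+1$ they cannot.
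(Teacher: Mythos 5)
Your proposal follows the same architecture as the paper's proof: you identify fitting $\OO$ with containment in $\UUO=\ker\AA$ (the paper's Lemmas \ref{SSOsubspaceLem} and \ref{UUcontainsAllSLem} and Corollaries \ref{UUfitsOCor} and \ref{dimUUrCor}), reduce uniqueness to $\dim\ker\AA=\r$, i.e.\ to $\AA$ having $\d-\r$ linearly independent rows, and then characterize independence of a subcollection of rows by the counting condition \eqref{uniquenessThmEq} --- which is exactly the paper's \independenceLem. Your ``only if'' direction of that combinatorial lemma (if some $\O$ has $\m<\n+\r$, the corresponding rows, restricted to the coordinates of $\J$, lie in the $(\m-\r)$-dimensional orthogonal complement of $S_\J$ and must be dependent) is a clean repackaging of the paper's \LCor\ and \aoWithzerosLem, and it is sound: non-degeneracy together with $\m\geq\r+1$ does give $\dim S_\J=\r$. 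Where you genuinely diverge is the hard converse, that \eqref{uniquenessThmEq} forces independence for almost every $\Sstar$. The paper proves this contrapositively through the notion of a \basis\ (a minimal dependence): \basisLem\ shows, via a cofactor expansion of an invertible block of $\A$, that a minimal dependence forces $\m=\n+\r$ exactly, the non-vanishing of an explicit polynomial in the entries of the $\aoi{i}$'s supplying the measure-zero argument. You instead propose an induction on $\n$ driven by Hall's condition, terminating in the exhibition of one non-vanishing maximal minor. That is a legitimate alternative route, but as stated it is only a plan: you correctly observe that rows of $\AA$ assigned by $\K$ to the same subspace of $\Sstar$ are algebraically entangled, so the classical fact that a generic matrix with prescribed support attains the rank of its maximum matching does not apply off the shelf, and showing that your chosen minor is not the identically zero polynomial in the entries of $\Sstar$ is precisely the work done by the paper's determinant argument in \basisLem. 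Until that single step is carried out, your proof has a gap at exactly the point where the paper concentrates its effort; everything surrounding it matches the paper's proof.
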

\end{framed}

Notice that the requirement of \allOfAKindThm\ is slightly stronger than the requirement of \uniquenessThm: \allOfAKindThm\ requires $\OO$ to contain a set $\bb{\O}$ of size $\d-\r+1$ that satisfies \eqref{uniquenessThm} for every one of its $\d-\r$ subsets of size $\d-\r$, while \uniquenessThm\ only requires one $\bb{\O}$ of size $\d-\r$ that satisfies \eqref{uniquenessThm}.

In other words, once we know that there is only one $\r$-dimensional subspace that \fitsO\ $\OO$, we only need a little bit more to make sure that all the vectors of $\X$ indeed lie in the same subspace.

\begin{myExample}
\label{uniquenessEg}
\normalfont
With the same setup as in \fittingOEg, it is clear that $\OO$ is the only possible set of size $\d-\r$, so take $\bb{\O} = \OO$.  One can trivially verify that $\m \geq \n+\r$ for every $\O \subset \OO$, so $\OO$ satisfies the conditions of \allOfAKindThm, hence there is only one $\r$-dimensional subspace that \fitsO\ $\OO$.

Nevertheless, observe that $\OO$ does not satisfy the conditions of \allOfAKindThm, as it does not even have $\d-\r+1$ sets.

This is one case where we can guarantee that there is only one $\r$-dimensional subspace that \fitsO\ $\OO$, but we cannot yet guarantee that $\k_{i}=\k_{\ii}$ for every $(\i,\ii)$, nor that the subspace that \fitsO\ $\OO$ belongs to $\Sstar$.

Notice that the difference between this $\OO$ and the $\OO$ from \allOfAKindEgb, where we can guarantee that $\k_{i}=\k_{\ii}$ for every $(\i,\ii)$, is just one $\o$.
$\blacksquare$
\end{myExample}

\pagebreak
\section{Uniqueness}
\label{uniquenessSec}
Determining when there is {\em only one} $\r$-dimensional subspace that \fitsO\ $\OO$ is essential for the proof of \allOfAKindThm.  The answer is given by \uniquenessThm.  We will thus start with the proof of the later, and will leave the proof of the former for \textsection \ref{allOfAKindSec}.

\subsection{The subspace $\UU$}
\label{setSSec}

With no further ado, we begin our analysis.  Consider the set:
\[
\phantomsection
\label{UUDef}
\UU:=\left\{\u \in \R^\d: \bigcap_{\i \in \I} \{\uoi{i} \in \sstaroi{i}\} \right\}.
\]
Observe that $\UU$ is a function of $\Sstar$, $\K$ and $\O$ (see \fittingOIsFuncRmk).

In order to show \uniquenessThm\ we will prove that $\UU$ is a subspace, that it contains all the $\r$-dimensional subspaces that \fitO\ $\O$, that it \fitsO\ $\O$, and will determine its dimension.  We will then conclude that there is only one $\r$-dimensional subspace that \fitsO\ $\OO$ iff $\dim \UUO=\r$.

We begin our work towards these goals.  Similar to the definition of $\UU$, let
\[
\phantomsection
\label{uuDef}
\uu:=\{\u \in \R^\d: \uo \in \sstaro\}.
\]
In other words, $\uu$ is the set of all $\u$'s such that $\uo$ \fitsxo\ in $\sstaro$.  Notice that $\uu$ is a function of $\sstar$ and $\o$.

\begin{myLemma}
\label{SSOsubspaceLem}
$\uu$ and $\UU$ are subspaces.
\end{myLemma}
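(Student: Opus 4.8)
The plan is to show directly that each set is closed under vector addition and scalar multiplication, which, since both are nonempty subsets of $\R^\d$ (they contain $\bs{0}$), suffices to conclude they are subspaces. The key observation is that both $\uu$ and $\UU$ are defined by membership conditions of the form ``$\uo \in \sstaro$'' (or an intersection of such conditions), and each $\sstaro$ is itself a subspace of $\R^{|\o|}$ — indeed, by \degenerateDef\ and \sstarNonDegenerateAss, $\sstaro$ is an $\r$-dimensional subspace. The crucial point is that the coordinate-restriction map $\u \mapsto \uo$ is linear, so the preimage of a subspace under it is a subspace.

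First I would handle $\uu$. Take $\u, \bs{v} \in \uu$ and scalars $\alpha, \beta \in \R$. Since restriction to the coordinates of $\o$ is a linear operation, $(\alpha \u + \beta \bs{v})_\o = \alpha \uo + \beta \bs{v}_\o$. Because $\uo \in \sstaro$ and $\bs{v}_\o \in \sstaro$, and $\sstaro$ is a subspace, the right-hand side lies in $\sstaro$; hence $\alpha \u + \beta \bs{v} \in \uu$. Together with $\bs{0} \in \uu$ this shows $\uu$ is a subspace.

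Next I would treat $\UU$. By definition $\UU = \{\u \in \R^\d : \uoi{i} \in \sstaroi{i} \text{ for every } \i \in \I\}$, which is exactly $\bigcap_{\i \in \I} \uui{i}$, where each $\uui{i}$ is the set $\uu$ built from $\oi{i}$ and $\sstari{i}$. By the first part, each $\uui{i}$ is a subspace, and an arbitrary intersection of subspaces of $\R^\d$ is again a subspace; hence $\UU$ is a subspace. (Alternatively, one can repeat the two-line linearity argument directly on the conjunction of conditions defining $\UU$.)

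I do not anticipate a genuine obstacle here — the statement is essentially a bookkeeping lemma establishing that the central object $\UU$ of the uniqueness analysis is a linear space. The only point requiring a little care is to invoke \sstarNonDegenerateAss\ (via \fittingoRmk) to know that each $\sstaro$ is genuinely a subspace (an $\r$-dimensional one), so that the preimage argument applies; after that the closure properties are immediate from linearity of coordinate restriction and the fact that intersections of subspaces are subspaces.
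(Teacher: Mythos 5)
Your proposal is correct and follows essentially the same route as the paper: closure of $\uu$ under linear combinations via linearity of the restriction $\u \mapsto \uo$ and the fact that $\sstaro$ is a subspace, followed by writing $\UU$ as an intersection of the $\uui{i}$'s and invoking that intersections of subspaces are subspaces. The extra remarks about nonemptiness and non-degeneracy are harmless but not needed for the argument.
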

\begin{proof}
Let $\u, v \in \uu$, and $w=\varsigma\u+\zeta v$ for some scalars $\varsigma, \zeta$.  Since $\sstaro$ is a subspace, $w_\o \in \sstaro$, so $w \in \uu$, hence $\uu$ is a subspace.

It is easy to see that $\UU=\bigcap_{\o \in \O} \uu$.  Since intersections of subspaces are subspaces, we conclude that $\UU$ is a subspace.
\end{proof}

\begin{myLemma}
\label{UUcontainsAllSLem}
$\uu$ contains all the $\r$-dimensional subspaces that \fito\ $\o$, and $\UU$ contains all the $\r$-dimensional subspaces that \fitO\ $\O$.
\end{myLemma}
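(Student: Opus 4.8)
\noindent\emph{Proof proposal.} The plan is to reduce the statement about $\UU$ to the one about $\uu$. From the proof of \hyperref[SSOsubspaceLem]{Lemma~\ref{SSOsubspaceLem}} we already have $\UU=\bigcap_{\o\in\O}\uu$, and by \fittingODef\ (see also the equivalent reformulations in the remarks following it) an $\r$-dimensional subspace $\s$ \fitsO\ $\O$ exactly when, for every $\i\in\I$, $\s$ \fitso\ $\oi{i}$ with respect to $\sstaroi{i}$. So it suffices to prove the $\uu$-version: \emph{if an $\r$-dimensional subspace $\s$ \fitso\ $\o$, then $\s\subset\uu$.}

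For that, I would fix such an $\s$ and an arbitrary $\u\in\s$. By \hyperref[soDef]{Definition~\ref{soDef}}, restricting $\u$ to the coordinates indexed by $\o$ gives $\uo\in\so$ (writing $\u=\U c$ for a basis $\U$ of $\s$, we get $\uo=\Uo c\in\spn\{\Uo\}=\so$). Next I would invoke \fittingoRmk: since $\sstar$ is non-\degenerate\ (\sstarNonDegenerateAss) and $\s$ is $\r$-dimensional, $\s$ \fitso\ $\o$ is equivalent to $\so=\sstaro$. Hence $\uo\in\sstaro$, which is precisely the condition defining $\uu$, so $\u\in\uu$; as $\u$ was arbitrary, $\s\subset\uu$.

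To finish the $\UU$ part, let $\s$ be an $\r$-dimensional subspace that \fitsO\ $\O$. For each $\o=\oi{i}$ with $\i\in\I$ we then have that $\s$ \fitso\ $\oi{i}$, so by the previous paragraph $\s\subset\uu$ (the copy of $\uu$ associated with $\sstaroi{i}$). Intersecting over $\o\in\O$ yields $\s\subset\bigcap_{\o\in\O}\uu=\UU$, as claimed.

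The argument is essentially an unwinding of the definitions, so I do not anticipate a real obstacle. The only points that need care are: (i) the step $\sstaro\subset\so\Rightarrow\so=\sstaro$, which is the unique place where $\r$-dimensionality of $\s$ together with $\sstar$ being non-\degenerate\ enters (conveniently packaged in \fittingoRmk); and (ii) keeping track that the factors $\uu$ appearing in $\UU=\bigcap_{\o\in\O}\uu$ are each taken with respect to the possibly distinct subspace $\sstaroi{i}\in\Sstar$ prescribed by $\K$ (cf. \fittingORmk).
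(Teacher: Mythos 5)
Your proposal is correct and follows essentially the same route as the paper: fix an arbitrary $\u\in\s$, use \fittingoRmk\ (resp.\ \fittingOIsFuncRmk) to get $\so=\sstaro$ (resp.\ $\soi{i}=\sstaroi{i}$ for every $\i\in\I$), and conclude $\uo\in\sstaro$ so that $\u\in\uu$ (resp.\ $\u\in\UU$). The only cosmetic difference is that you route the $\UU$ part through the decomposition $\UU=\bigcap_{\o\in\O}\uu$ while the paper simply repeats the elementwise argument, and your closing caveat about each factor being taken with respect to its own $\sstaroi{i}$ is a point the paper handles implicitly.
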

\begin{proof}
Let $\s$ be an $\r$-dimensional subspace that \fitso\ $\o$, and $\u \in \s$.  Since $\so=\sstaro$ (see \fittingoRmk), $\uo \in \sstaro$, so $\u \in \uu$.  Since $\u$ was arbitrary, we have that $\s \subset \uu$.  Since $\s$ was arbitrary, we conclude that $\uu$ contains all the $\r$-dimensional subspaces that \fito\ $\o$.

Now let $\s$ be an $\r$-dimensional subspace that \fitsO\ $\O$, and $\u \in \s$.  Since $\soi{i}=\sstaroi{i}$ for every $\i \in \I$ (see \fittingOIsFuncRmk), $\uoi{i} \in \sstaroi{i}$ for every $\i \in \I$, hence $\u \in \UU$.  Since $\u$ was arbitrary, we have that $\s \subset \UU$.  Since $\s$ was arbitrary, we conclude that $\UU$ contains all the $\r$-dimensional subspaces that \fitO\ $\O$.
\end{proof}

\begin{myCorollary}
\label{UUfitsOCor}
$\uu$ \fitso\ $\o$.  Furthermore, $\UU$ \fitsO\ $\O$ whenever there is an $\r$-dimensional subspace that \fitsO\ $\O$.
\end{myCorollary}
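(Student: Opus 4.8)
The plan is to reduce both assertions to the restriction–inclusion description of fitting recorded in \fittingoRmk\ and \fittingOIsFuncRmk: a subspace \fito\ $\o$ precisely when $\sstaro$ is contained in its restriction to the coordinates of $\o$, and it \fitsO\ $\O$ precisely when $\sstaroi{i}$ is contained in its restriction to the coordinates of $\oi{i}$ for every $\i \in \I$. For $\UU$ we must use these \emph{inclusion} forms rather than the equality forms, since $\UU$ may well have dimension larger than $\r$. So in each case it suffices to show that every restricted target vector is the restriction of \emph{some} element of the set in question.

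For $\uu$ \fitso\ $\o$ this is immediate from the definition of $\uu$. Given any $\xo \in \sstaro$, let $\u \in \R^\d$ be the vector that agrees with $\xo$ on the coordinates of $\o$ and is zero elsewhere. Then $\uo = \xo \in \sstaro$, so $\u \in \uu$ by definition, and $\xo = \uo$ is thus the restriction of an element of $\uu$. Since $\xo \in \sstaro$ was arbitrary, $\sstaro$ lies in the restriction of $\uu$ to the coordinates of $\o$, i.e., $\uu$ \fitso\ $\o$; note that no hypothesis was needed here.

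For $\UU$ \fitsO\ $\O$ the naive coordinate-patching argument breaks down: a vector realizing a prescribed $\xoi{i} \in \sstaroi{i}$ on the block $\oi{i}$ must \emph{simultaneously} satisfy $\uoi{\ii} \in \sstaroi{\ii}$ on every overlapping block $\oi{\ii}$, and an arbitrary patch need not. This is exactly where the hypothesis enters. Assume $\s$ is an $\r$-dimensional subspace that \fitsO\ $\O$; by \UUcontainsAllSLem, $\s \subset \UU$. Fix $\i \in \I$ and any $\xoi{i} \in \sstaroi{i}$. Since $\s$ \fitsO\ $\O$, \fittingOIsFuncRmk\ gives $\sstaroi{i} \subset \soi{i}$, so $\xoi{i} \in \soi{i}$ and hence $\uoi{i} = \xoi{i}$ for some $\u \in \s$. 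But $\u \in \s \subset \UU$, so $\xoi{i}$ is the restriction of an element of $\UU$. Letting $\i$ and $\xoi{i}$ range over $\I$ and $\sstaroi{i}$ respectively, we obtain that $\sstaroi{i}$ is contained in the restriction of $\UU$ to the coordinates of $\oi{i}$ for every $\i \in \I$, which is precisely $\UU$ \fitsO\ $\O$.

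The only genuine subtlety — the main obstacle, such as it is — is recognizing this asymmetry between the two halves: the $\uu$ statement is free because a single block imposes no compatibility constraints, whereas for $\UU$ one truly needs a single global witness compatible with all the blocks at once, and the only source of such a witness is an $\r$-dimensional subspace that already \fitsO\ $\O$, which is guaranteed to sit inside $\UU$ by \UUcontainsAllSLem. A secondary thing to keep straight is to invoke the inclusion version of \fittingOIsFuncRmk, since $\UU$ need not be $\r$-dimensional.
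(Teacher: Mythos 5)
Your proof is correct and follows essentially the same route as the paper: the nontrivial half (that $\UU$ \fitsO\ $\O$) is handled exactly as in the paper, by taking an $\r$-dimensional $\s$ that \fitsO\ $\O$, placing it inside $\UU$ via \UUcontainsAllSLem, and letting the fitting property transfer. The only (harmless) deviation is in the first half, where you zero-pad each $\xo \in \sstaro$ directly instead of observing that $\sstar$ itself is an $\r$-dimensional subspace that \fitso\ $\o$ and hence $\sstar \subset \uu$ by \UUcontainsAllSLem; both are one-line arguments and your care about using the inclusion rather than equality form of \fittingOIsFuncRmk\ for the possibly higher-dimensional $\UU$ is well placed.
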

\begin{proof}
$\sstar$ is an $\r$-dimensional subspace that clearly \fitso\ $\o$, so by \UUcontainsAllSLem, $\sstar \subset \uu$, which implies $\uu$ \fitso\ $\o$ (see \fittingoRmk).

Now suppose there is an $\r$-dimensional subspace $\s$ that \fitsO\ $\O$, such that $\sstaroi{i} = \soi{i}$ for every $\i \in \I$ (see \fittingOIsFuncRmk).  By \UUcontainsAllSLem, $\s \subset \UU$, which implies $\UU$ \fitso\ $\O$.
\end{proof}

\begin{myRemark}
\normalfont
Notice the importance of the requirement that $\s$ is an $\r$-dimensional subspace in \UUcontainsAllSLem\ and \UUfitsOCor.

In the case when $\s$ is assumed to be an $\r$-dimensional subspace that \fitsO\ $\OO$, since $\soi{i}=\sstaroi{i}$ for every $\i$ (see \fittingOIsFuncRmk), we know that every $\u \in \s$ will satisfy $\uoi{i} \in \sstaroi{i}$ for every $\i$, hence $\s \subset \UUO$.

If we drop the assumption that $\s$ is $\r$-dimensional, we only know that $\sstaroi{i} \subset \soi{i}$ for every $\i$, whence we only know that there exist $\u$'s in $\s$ that will satisfy $\uoi{i} \in \sstaroi{i}$ for every $\i$, but there might also be some other $\u$'s in $\s$ that won't satisfy this, whence we cannot conclude that $\s \subset \UUO$.  For example, if $\s=\R^\d$, it is clear that $\s$ will \fitO\ $\OO$, but won't be contained in $\UUO$.
$\blacksquare$
\end{myRemark}

Observe that if there are more than one $\r$-dimensional subspaces that \fitO\ $\O$, since they are contained in $\UU$, $\dim\UU>\r$.  Moreover, since $\UU$ is a subspace, if $\dim \UU > \r$, there are infinitely many $\r$-dimensional subspaces contained in $\UU$.  It is easy to see that infinitely many of such subspaces will \fitO\ $\O$.  In other words, using these results, we have the following.
\begin{myCorollary}
\label{dimUUrCor}
There is only one $\r$-dimensional subspace that \fitsO\ $\O$ iff $\dim \UU=\r$.  Furthermore, if there are more than one $\r$-dimensional subspaces that \fitO\ $\O$, there are infinitely many.
\end{myCorollary}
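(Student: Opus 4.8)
The plan is to deduce the corollary from the three facts about $\UU$ already in hand: it is a subspace (Lemma~\ref{SSOsubspaceLem}), it contains every $\r$-dimensional subspace that \fitsO\ $\O$ (\UUcontainsAllSLem), and it \fitsO\ $\O$ as soon as some $\r$-dimensional subspace does (\UUfitsOCor). I first record a standing observation: a subspace that \fitsO\ $\OO$ also \fitsO\ every $\O \subseteq \OO$, since the condition in \fittingODef\ for $\O$ only quantifies over the sub-collection $\I$. Hence \existsNonDegenerateAss\ provides an $\r$-dimensional subspace $\s^\circ$ that \fitsO\ $\O$; by \UUcontainsAllSLem\ we get $\s^\circ \subseteq \UU$ and $\dim \UU \geq \r$, and by \UUfitsOCor\ we get that $\UU$ \fitsO\ $\O$.

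The ``if'' direction is then immediate. If $\dim \UU = \r$, then $\UU$ is itself an $\r$-dimensional subspace that \fitsO\ $\O$, and by \UUcontainsAllSLem\ any $\r$-dimensional subspace $\s$ that \fitsO\ $\O$ satisfies $\s \subseteq \UU$; equality of dimensions forces $\s = \UU$, so $\UU$ is the unique such subspace.

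Both the ``only if'' direction and the final assertion of the corollary reduce to a single claim: \emph{if $\dim \UU > \r$, then infinitely many $\r$-dimensional subspaces \fitO\ $\O$.} (For ``only if'' combine the contrapositive with $\dim \UU \geq \r$; for the ``furthermore'', note that two distinct $\r$-dimensional subspaces that \fitO\ $\O$ both lie in $\UU$ by \UUcontainsAllSLem, so their sum has dimension $\geq \r+1$.) To prove the claim, recall from \fittingOIsFuncRmk\ that an $\r$-dimensional $\s$ \fitsO\ $\O$ precisely when its restriction to $\oi{i}$ equals $\sstaroi{i}$ for every $\i \in \I$; moreover, if $\s \subseteq \UU$ then, by the definition of $\UU$, the $\oi{i}$-restriction of every vector of $\s$ already lies in $\sstaroi{i}$, so since $\sstaroi{i}$ is $\r$-dimensional ($\sstar$ being non-degenerate), the required equality holds as soon as restriction to $\oi{i}$ is injective on $\s$. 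It therefore suffices to exhibit infinitely many $\r$-dimensional subspaces of $\UU$ on which restriction to $\oi{i}$ is injective for all $\i \in \I$ simultaneously. Fix $\s^\circ$ as above, with basis $u_1, \dots, u_\r$; since $\dim \UU > \r$, pick $w \in \UU \setminus \s^\circ$ and set $\s^{(t)} := \spn\{u_1 + t\,w,\, u_2, \dots, u_\r\} \subseteq \UU$ for $t \in \R$. These are pairwise distinct $\r$-dimensional subspaces. For a fixed $\i \in \I$, the $\oi{i}$-restrictions of $u_1 + t\,w, u_2, \dots, u_\r$ form an $(\r+1) \times \r$ matrix with entries affine in $t$, and at $t = 0$ these restrictions span the restriction of $\s^\circ$, which is $\sstaroi{i}$; hence some $\r \times \r$ minor of this matrix is a polynomial in $t$ that does not vanish at $t = 0$, so it vanishes for only finitely many $t$. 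Discarding the finitely many bad values of $t$ over all $\i \in \I$ leaves infinitely many $t$ for which $\s^{(t)}$ \fitsO\ $\O$, which proves the claim and hence the corollary.

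The step I expect to require the most care is exactly this last passage — converting ``$\dim \UU > \r$'' into infinitely many honest fitting subspaces — since one must be sure that deforming $\s^\circ$ within $\UU$ does not break the equality between the restriction of $\s$ to $\oi{i}$ and $\sstaroi{i}$: keeping the deformed subspaces inside $\UU$ secures the inclusion into $\sstaroi{i}$, while injectivity of the restriction being an open, generically satisfied condition secures the reverse inclusion. Equivalently and more abstractly, the $\r$-dimensional subspaces of $\UU$ form a positive-dimensional Grassmannian on which the rank of each coordinate restriction is lower semicontinuous and attains $\r$ at $\s^\circ$, so a whole neighborhood of $\s^\circ$ consists of fitting subspaces. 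Everything else is routine bookkeeping with Lemma~\ref{SSOsubspaceLem}, \UUcontainsAllSLem\ and \UUfitsOCor.
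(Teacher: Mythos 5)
Your proposal is correct and follows essentially the same route as the paper, which justifies this corollary by the observation immediately preceding it: any two distinct fitting $\r$-dimensional subspaces both lie in $\UU$, forcing $\dim\UU>\r$, and conversely $\dim\UU>\r$ yields infinitely many $\r$-dimensional subspaces of $\UU$ of which ``it is easy to see'' infinitely many \fitO\ $\O$. Your perturbation argument (keeping $\s^{(t)}\subseteq\UU$ for the inclusion $\soi{i}\subseteq\sstaroi{i}$ and using a nonvanishing minor, polynomial in $t$, for the reverse inclusion) is a correct and welcome filling-in of exactly that ``easy to see'' step, which the paper leaves implicit.
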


\subsection{One at a time}
\label{oneAtATimeSec}
We continue our analysis by studying $\uu$.  The key idea is that every hyperplane, i.e., every $(\d-1)$-dimensional subspace of $\R^\d$, is characterized by its orthogonal direction.

\begin{figure}[H]
\centering
\includegraphics[width=4cm]{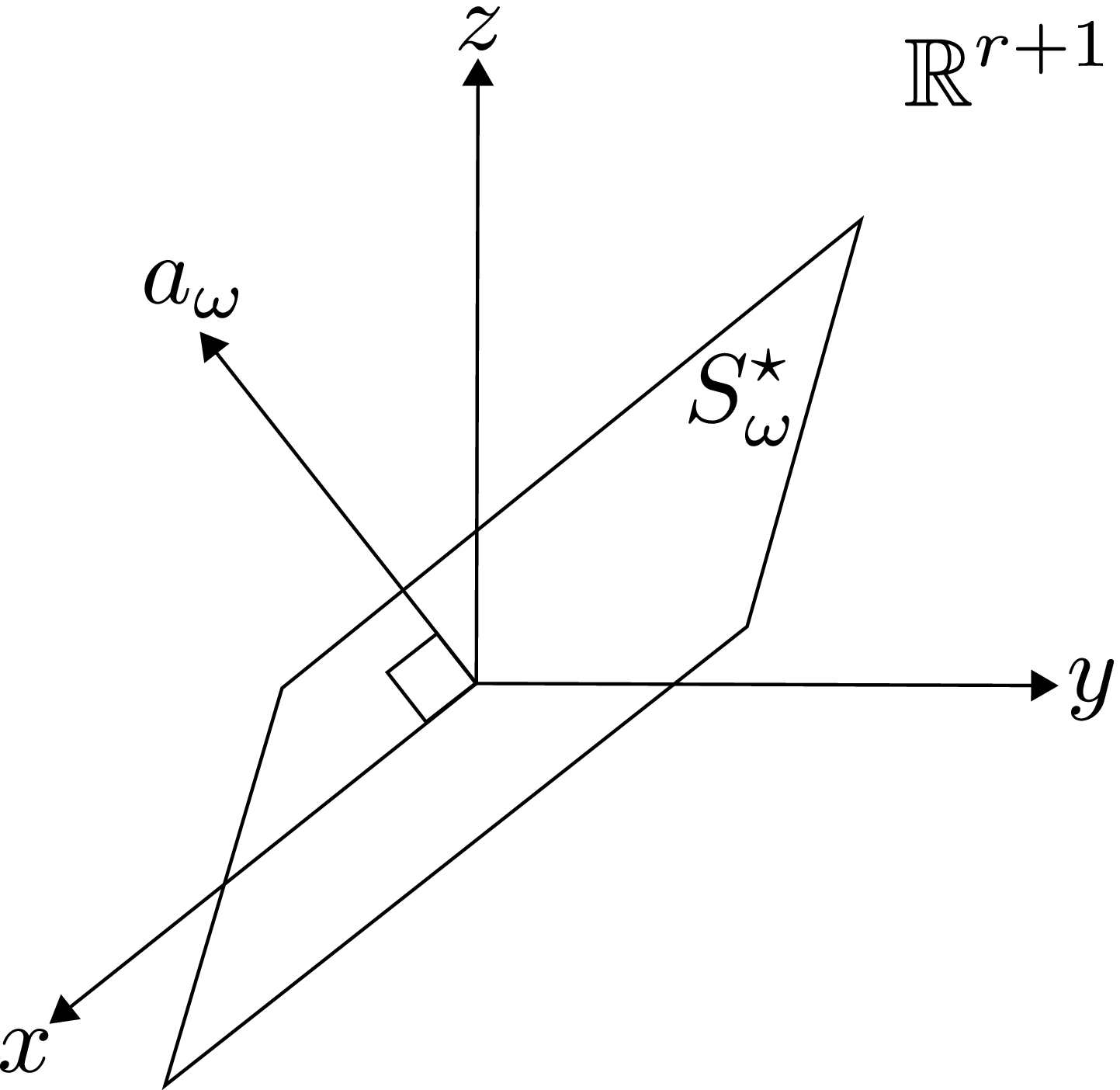}
\caption{Every hyperplane is characterized by its orthogonal direction.}
\label{orthogonalVectorFig}
\end{figure}

By definition, $\sstaro$ is a subspace of $\R^{\r+1}$.  Since $\dim\sstar=\r$, and $\sstar$ is non-\degenerate, it is easy to see that $\dim\sstaro=\r$, i.e., $\sstaro$ is a hyperplane in $\R^{\r+1}$. As such, it is characterized by its orthogonal direction, which can be fully specified by a unique \----up to scalar multiplication\---- non-zero vector of $\R^{\r+1}$ orthogonal to $\sstaro$, say \phantomsection\label{aoDef}$\ao$.  For such $\ao$ we have:
\begin{align*}
\sstaro = \{\uo \in \R^{\r+1}:\< \uo, \ao \> = 0\} = \ker \aoT.
\end{align*}
If we define \phantomsection\label{aDef}$\a$ as the row vector in $\R^\d$ with the entries of $\ao$ in the positions of $\o$, and zeros elsewhere, it is clear that
\begin{align*}
\uu = \{\u \in \R^\d:\< \uo, \ao \> = 0\} = \{\u \in \R^\d:\< \u, \a \> = 0\} = \ker \a.
\end{align*}
Thus $\uu$ is also a hyperplane: the $(\d-1)$-dimensional subspace of $\R^\d$ characterized by $\a$, containing every $\u \in \R^\d$ that satisfies $\uo \in \sstaro$.

Notice that just as $\uu$, $\a$ is a function of $\sstar$ and $\o$, and since $\sstar$ is non-\degenerate,

\begin{myLemma}
\label{aEntriesLem}
$\a$ has exactly $\r+1$ non-zero entries in the positions of $\o$.
\end{myLemma}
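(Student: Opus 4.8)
The plan is to reduce the statement to a claim about the single vector $\ao \in \R^{\r+1}$: by its very construction $\a$ is zero outside $\o$ and agrees with $\ao$ in the positions of $\o$, so it suffices to prove that \emph{every} coordinate of $\ao$ is non-zero. The whole argument then rests on the identity $\sstaro = \ker \aoT$ established just above, together with the non-degeneracy of $\sstar$ (Assumption \sstarNonDegenerateAss).

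First I would argue by contradiction: suppose the coordinate of $\ao$ indexed by some $j \in \o$ equals $0$. Let $e_j$ be the canonical vector of $\R^{\r+1}$ supported on that coordinate (viewing the $\r+1$ coordinates of $\R^{\r+1}$ as indexed by $\o$). Then $\langle e_j, \ao\rangle = 0$, so $e_j \in \ker \aoT = \sstaro$. To contradict this, write $\sstaro = \spn\{\Ustaro\}$, where $\Ustaro$ is the $(\r+1)\times\r$ matrix whose rows are the entries, in the positions of $\o$, of a fixed basis of $\sstar$ (see \ref{soDef}); then $e_j = \Ustaro\, c$ for some $c \in \R^{\r}$. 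Deleting the row indexed by $j$ makes the left-hand side vanish, so $U^\star_\ups\, c = 0$, where $\ups := \o\setminus\{j\}$ has $|\ups| = \r$ and $U^\star_\ups$ is the $\r\times\r$ submatrix of $\Ustaro$ formed by the remaining rows. But the Remark following \degenerateDef\ states that non-degeneracy of $\sstar$ is precisely the assertion that every such $|\ups|\times\r$ submatrix with $|\ups|\le\r$ is full-rank; hence $U^\star_\ups$ is invertible, $c=0$, and therefore $e_j = \Ustaro\, c = 0$ — a contradiction. Consequently no coordinate of $\ao$ vanishes, and since $\a$ carries exactly these $\r+1$ non-zero entries into the positions of $\o$ and is zero elsewhere, $\a$ has exactly $\r+1$ non-zero entries, all in the positions of $\o$.

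I do not anticipate a genuine obstacle here; the only subtlety is invoking non-degeneracy in the correct, matrix-level form (every $|\ups|\times\r$ submatrix of a basis with $|\ups|\le\r$ is full-rank, used here with $|\ups|=\r$) rather than the weaker-looking statement about $\sstar$ itself. If a matrix-free exposition is preferred, the same contradiction is available: the coordinate projection $\R^{\r+1}\to\R^{\r}$ dropping position $j$ maps $\sstaro$ onto $S_\ups = \R^{\r}$ by non-degeneracy, hence is an isomorphism on the $\r$-dimensional space $\sstaro$, so its kernel $\spn\{e_j\}$ meets $\sstaro$ only at $0$, again contradicting $e_j \in \sstaro$.
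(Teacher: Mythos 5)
Your proof is correct and follows essentially the same route as the paper's: both argue by contradiction that a zero entry of $\ao$ would force a restriction of $\sstar$ to at most $\r$ coordinates to be degenerate, contradicting \sstarNonDegenerateAss. The only cosmetic difference is that you exhibit the canonical vector $e_j$ as a spurious element of $\sstaro = \ker \aoT$ and eliminate it via the invertible $\r \times \r$ minor on $\ups = \o \setminus \{j\}$, whereas the paper restricts the orthogonality relation to the support $\ups$ of $\ao$ and observes directly that $\dim S^\star_\ups \leq |\ups| - 1 < |\ups|$.
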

\begin{proof}
Suppose for contradiction that $\ao$ has at least one zero entry.  Use $\ups \subset \{1,...,\d\}$ to denote the set of the position of the non-zero entries of $\ao$, and use it analogous to $\o$.  Since $\ao$ is orthogonal to $\sstaro$, we have that $\<\ao,\uo\>=\<\a_\ups,u_\ups\>=0$ for every $u_\ups \in S^\star_\ups$, i.e., $S^\star_\ups \subset \ker \a_\ups$.  Then
\begin{align*}
\dim S^\star_\ups \leq \dim \ker \a^\T_\ups = |\ups|-1 < |\ups|,
\end{align*}
which is a contradiction, as $\sstar$ is non-\degenerate\ by \sstarNonDegenerateAss.  The statement follows directly by the definition of $\a$ as the row vector in $\R^\d$ with the entries of $\ao$ in the positions of $\o$.
\end{proof}

\subsection{Several at Once}
\label{severalAtOnceSec}
Our next step is precisely the most obvious one.  Let \phantomsection\label{aoiDef}$\aoi{i} \in \R^{\r+1}$ be a vector in the orthogonal direction of $\sstaroi{i}$, and \phantomsection\label{aiDef}$\ai{i}$ be the row vector in $\R^\d$ with the entries of $\aoi{i}$ in the positions of $\oi{i}$, and zeros elsewhere.  With \phantomsection\label{uuiDef}$\uui{i}$ as shorthand for \hyperref[uuDef]{$\mathscr{U}_{\omega_i}$}, then
\begin{align*}
\UU=\bigcap_{\i \in \I} \uui{i} = \bigcap_{\i \in \I} \ker \ai{i} = \ker \A,
\end{align*}
where \phantomsection\label{ADef}$\A$  is the $\n \times \d$ matrix with $\{\ai{i}\}_{\i \in \I}$ as its rows.  In the particular case when $\O=\OO$, we use $\AA$ instead of $\A$, i.e., \phantomsection\label{AADef}$\AA$ is the $\N \times \d$ matrix with rows $\{\ai{i}\}_{\i =1}^\N$.  Notice that by \aEntriesLem, $\A$ has exactly $\m$ non-zero columns in the positions of $\J$, while $\AA$ has no zero columns by \unionOOAss.

This way, each $\ai{i}$ defines a hyperplane with all the vectors $\u \in \R^\d$ that satisfy $\uoi{i} \in \sstaroi{i}$, namely $\uui{i}$.  $\UU$ is the intersection of all such hyperplanes, hence it contains all the vectors $\u \in \R^\d$ that satisfy $\uoi{i} \in \sstaroi{i}$ simultaneously for every $\i$.

\begin{figure}[H]
\centering
\includegraphics[width=4.5cm]{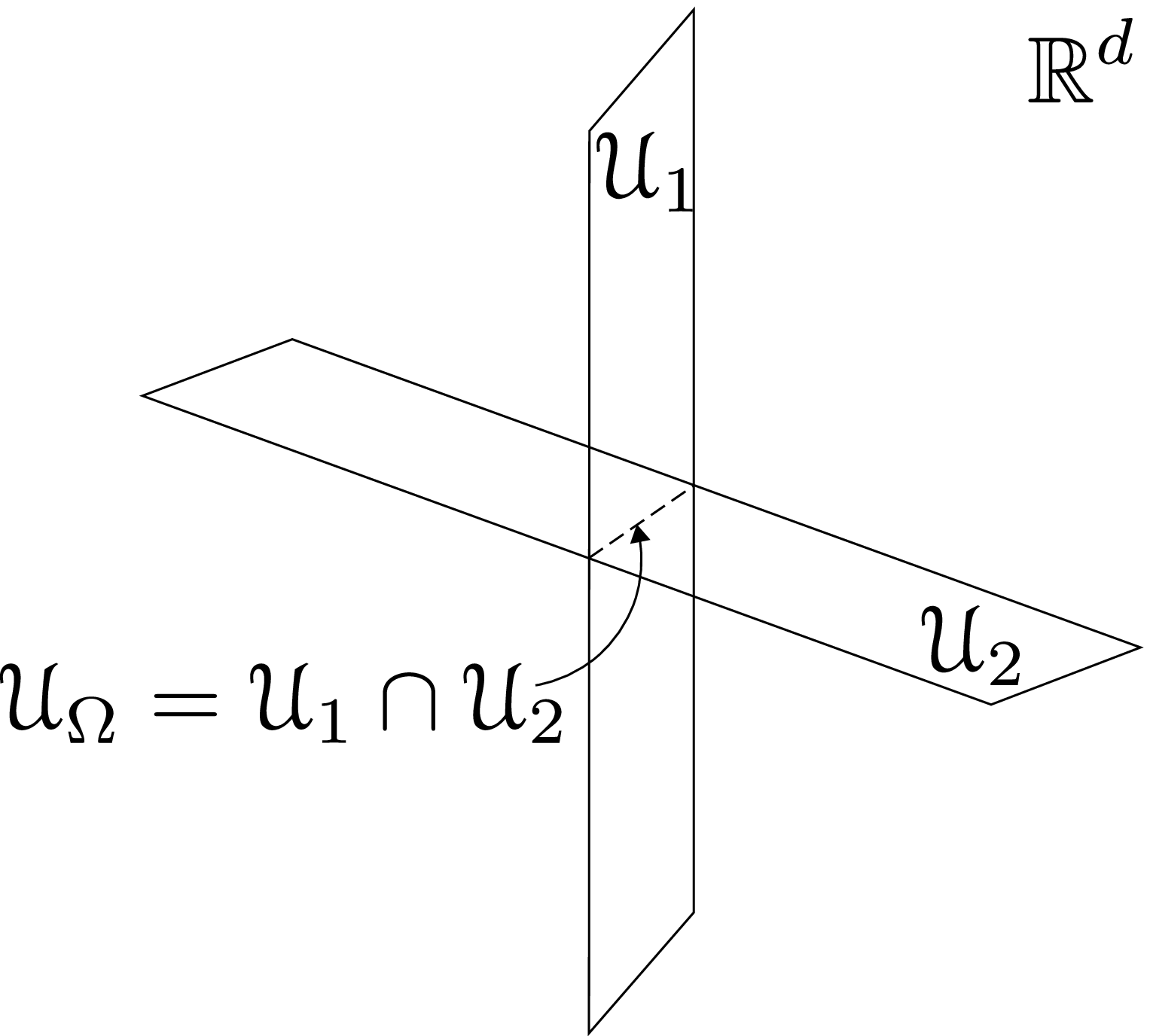}
\caption{$\UU = \ker \A  = \bigcap_{\i \in \I} \ker \ai{i} =\bigcap_{\i \in \I} \uui{i}$.  This figure could be an illustration of the setup in \fittingOEg.}
\label{planesFig}
\end{figure}

This way, $\A$ characterizes $\UU$ just as $\a$ characterizes $\uu$; and just as $\UU$, $\A$ is a function of $\Sstar$, $\K$ and $\O$ (see \fittingORmk).

If we let \phantomsection\label{LDef}$\L$ be the number of linearly independent rows in $\A$, we immediately know from elemental linear algebra\cite{friedberg} that $\dim \UU = \dim \ker \A = \d - \L$.  We state this as a lemma, as it is an observation that will come up in our subsequent analysis.
\begin{myLemma}[$\dim \UU$]
\label{dimUULem}
$\dim \UU = \d-\L$.
\end{myLemma}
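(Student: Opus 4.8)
The plan is to read off the dimension of $\UU$ directly from the characterization $\UU = \ker \A$ established just above, together with the rank–nullity theorem. First I would recall from the preceding discussion that each $\uui{i}$ is the hyperplane $\ker \ai{i}$, that $\UU = \bigcap_{\i \in \I} \uui{i} = \bigcap_{\i \in \I}\ker\ai{i}$, and that the latter equals $\ker \A$ where $\A$ is the $\n \times \d$ matrix whose rows are the $\ai{i}$, $\i \in \I$. This identification is what does the real work; the rest is elementary linear algebra.

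Next I would apply rank–nullity to the linear map $\R^\d \to \R^\n$ given by $\u \mapsto \A\u$. Its image has dimension equal to the row rank of $\A$, which by definition of $\L$ is exactly $\L$ (the row rank and column rank of a matrix coincide, so $\L$ is simultaneously the number of linearly independent rows and $\operatorname{rank}\A$). Hence
\begin{align*}
\dim \UU \;=\; \dim \ker \A \;=\; \d - \operatorname{rank} \A \;=\; \d - \L,
\end{align*}
which is the claim.

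There is essentially no obstacle here: the lemma is a bookkeeping statement recording a fact already derived in the surrounding text, and the only thing to be careful about is that $\L$ was defined as the number of linearly independent \emph{rows} of $\A$, so one must invoke (or silently use) the equality of row rank and column rank to identify $\L$ with $\operatorname{rank}\A$. I would also note in passing that $\A$ has exactly $\m$ nonzero columns (in the positions of $\J$), by \aEntriesLem, so in fact $\L \le \m$ and the kernel always contains the $\d-\m$ coordinate directions outside $\J$; this observation is not needed for the statement but clarifies why $\dim\UU$ is at least $\d-\m$ and foreshadows how the combinatorial conditions on $\OO$ will later force $\L$ up to $\d-\r$.
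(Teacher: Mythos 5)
Your proposal is correct and matches the paper's own argument, which likewise treats the lemma as an immediate consequence of the identification $\UU=\ker\A$ together with rank–nullity. The extra remark identifying $\L$ with $\operatorname{rank}\A$ via the equality of row and column rank is a fair point of care that the paper leaves implicit.
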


It is clear that we are interested on determining when $\O$ defines an $\A$ with linearly independent rows.  But before moving to that, let us discuss one very nice property of $\A$.

\begin{myLemma}
\label{aoWithzerosLem}
There exists no $\mathscr{A}$ with a row with fewer than $\r+1$ non-zero entries such that $\ker \A = \ker \mathscr{A}$.
\end{myLemma}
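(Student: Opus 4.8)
The plan is to argue by contradiction, mirroring the argument used to prove \aEntriesLem. Suppose some matrix $\mathscr{A}$ satisfies $\ker\mathscr{A}=\ker\A=\UU$ and has a nonzero row $b$ with at most $\r$ nonzero entries; let $\ups\subset\{1,\dots,\d\}$ be the set of positions of these nonzero entries, so $1\leq|\ups|\leq\r$. The goal is to show such a $b$ cannot exist, i.e.\ that $b$ would have to be the zero row.

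First I would locate a non-\degenerate\ $\r$-dimensional subspace inside $\UU$: by \existsNonDegenerateAss\ there is a non-\degenerate\ $\r$-dimensional subspace $\s$ that \fitsO\ $\OO$; since $\O\subset\OO$, $\s$ also \fitsO\ $\O$, and hence $\s\subset\UU$ by \UUcontainsAllSLem. Next, since the kernel of a matrix is contained in the kernel of each of its rows, $\s\subset\UU=\ker\mathscr{A}\subset\ker b$, so $\<b,\u\>=0$ for every $\u\in\s$. Because $b$ is supported on $\ups$ we have $\<b,\u\>=\<b_\ups,\u_\ups\>$, whence $\<b_\ups,\u_\ups\>=0$ for every $\u_\ups\in S_\ups$, i.e., $S_\ups\subset\ker b_\ups$.

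Finally, since $b_\ups$ is a nonzero row vector of $\R^{|\ups|}$, $\dim\ker b_\ups=|\ups|-1$, so $\dim S_\ups\leq|\ups|-1<|\ups|$. But $\s$ is non-\degenerate\ and $|\ups|\leq\r$, so $\dim S_\ups=|\ups|$, a contradiction; hence every nonzero row of any $\mathscr{A}$ with $\ker\mathscr{A}=\ker\A$ has at least $\r+1$ nonzero entries. The only genuinely non-routine step is the first one: recognizing that the ``fullness'' of $\UU$ on every coordinate set of size at most $\r$ — which is exactly what rules out a short row — is inherited from the non-\degenerate\ subspace supplied by \existsNonDegenerateAss\ through \UUcontainsAllSLem; everything after that is the same elementary dimension count that proves \aEntriesLem.
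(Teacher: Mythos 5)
Your proposal is correct and follows essentially the same route as the paper's proof: invoke \existsNonDegenerateAss\ and \UUcontainsAllSLem\ to place a non-\degenerate\ $\r$-dimensional subspace inside $\ker\A=\ker\mathscr{A}$, then run the same dimension count as in \aEntriesLem\ on the support $\ups$ of the short row to contradict non-\degenerate ness. The only (harmless) difference is bookkeeping: you fix the non-\degenerate\ subspace up front and explicitly exclude the zero row, whereas the paper takes an arbitrary $\s$ that \fitsO\ $\O$ and concludes every such $\s$ would be \degenerate, contradicting \existsNonDegenerateAss.
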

\begin{proof}
Very similar to the proof of \aEntriesLem, suppose for contradiction that there exists an $\mathscr{A}$ with a row $\var{a}$ with fewer than $\r+1$ non-zero entries such that $\ker \A = \ker \mathscr{A}$.  Use $\ups \subset \{1,...,\d\}$ to denote the set of the positions of the non-zero entries of $\var{a}$, and use it analogous to $\o$.

Let $\s$ be an $\r$-dimensional subspace that \fitsO\ $\O$.  By \UUcontainsAllSLem, $\s \subset \UU$, so $\s \subset \UU = \ker\A=\ker \mathscr{A}$, hence every $\u \in \s$ must satisfy $\< \var{a}_\ups,u_\ups \> =0$, i.e., $S_\ups \subset \ker \var{a}^\T_\ups$.  Then
\begin{align*}
\dim S_\ups \leq \dim \var{a}^\T_\ups = |\ups| -1 < |\ups|,
\end{align*}
i.e., $\s$ is \degenerate.  Since $\s$ was arbitrary, we know that this holds for every $\s$ that \fitsO\ $\O$, which contradicts \existsNonDegenerateAss.
\end{proof}

As one simple consequence of \aoWithzerosLem\ we obtain an extremely useful result:

\begin{myCorollary}[$\L$]
\label{LCor}
$\m \geq \L+\r$.
\end{myCorollary}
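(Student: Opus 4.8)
The plan is to read off the bound from the kernel description $\UU=\ker\A$ established in \textsection\ref{severalAtOnceSec}, once $\A$ is collapsed to the columns it actually acts on. By \aEntriesLem, $\A$ has exactly $\m$ nonzero columns, located in the positions of $\J$; deleting the zero columns yields an $\n\times\m$ matrix of the same rank $\L$. Since $\A\u$ depends on $\u$ only through its entries in the positions of $\J$, the restriction of $\UU=\ker\A$ to the coordinates in $\J$ is precisely the kernel of that $\n\times\m$ matrix, which by rank--nullity has dimension $\m-\L$. The point of this reduction is that the final count will involve $\m$ rather than the weaker $\d$ one would obtain from \dimUULem\ alone.

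Next I would exhibit an $\r$-dimensional space sitting inside that kernel. By \existsNonDegenerateAss\ there is a non-\degenerate\ $\r$-dimensional subspace that \fitsO\ $\OO$, hence \fitsO\ $\O$ since $\O\subset\OO$; so by \UUfitsOCor, $\UU$ \fitsO\ $\O$. Assuming $\O\neq\emptyset$ (otherwise there is no $\A$ and nothing to prove), fix some $\i\in\I$. By the definition of fitting $\O$, the restriction of $\UU$ to the coordinates in $\oi{i}$ contains $\sstaroi{i}$, which is $\r$-dimensional because $\sstar$ is non-\degenerate; hence that restriction has dimension at least $\r$. On the other hand, since $\oi{i}\subset\J$, this restriction is a coordinate projection of the restriction of $\UU$ to $\J$, so its dimension is at most $\m-\L$. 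Chaining the two inequalities gives $\r\le\m-\L$, i.e.\ $\m\ge\L+\r$.

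I do not expect a real obstacle; the only step needing a moment's care is the linear-algebra bookkeeping that (i) the rank of the column-reduced $\n\times\m$ matrix equals $\L$, and (ii) passing from $\ker\A\subset\R^\d$ to its image under restriction to the coordinates in $\J$, and then further to those in $\oi{i}$, can only decrease dimension — this is exactly what lets $\m$ (rather than $\d$) appear in the conclusion. Everything else is a direct appeal to \UUfitsOCor\ and rank--nullity.
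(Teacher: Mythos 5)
Your proof is correct, but it takes a genuinely different route from the paper's. The paper argues by contradiction on the \emph{row space}: assuming $\m < \L + \r$, it row-reduces $\A$ to an echelon form whose top $\L$ rows have at most $\r$ non-zero entries, and then invokes \aoWithzerosLem\ (no matrix with the same kernel as $\A$ may have a row supported on fewer than $\r+1$ coordinates), which in turn rests on \existsNonDegenerateAss. You instead work directly with the \emph{kernel}: after discarding the zero columns (justified by \aEntriesLem, and rank-preserving), rank--nullity gives that the coordinate projection of $\UU=\ker\A$ onto the positions of $\J$ has dimension exactly $\m-\L$; since this projection must further project onto the $\r$-dimensional $\sstaroi{i}$ for any $\i\in\I$ (via \existsNonDegenerateAss, \UUfitsOCor\ and the non-degeneracy of the $\sstar$'s, with projections only decreasing dimension), you get $\m-\L\geq\r$ directly. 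The two arguments are dual --- a short-support row in the row space versus a low-dimensional coordinate projection of the kernel --- and both bottom out in \existsNonDegenerateAss\ and \sstarNonDegenerateAss. Yours buys a positive, contradiction-free count that bypasses \aoWithzerosLem\ entirely; the paper's isolates the ``no short rows'' property as a standalone lemma, which it reuses later in the proof of \basisLem. Your parenthetical handling of the empty $\O$ is also the right call, since the inequality only makes sense when there is at least one row in $\A$.
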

\begin{proof}
Suppose without loss of generality that $\A$ has all its zero columns \----if any\---- in the first block, and that its first $\L$ non-zero columns are linearly independent; otherwise we can simply permute the columns of $\A$ accordingly.  Let $\var{A}$ be the $\n \times \m$ matrix with only the non-zero columns of $\A$, and suppose for contradiction that $\m < \L+\r$.  We can then transform $\A$ into the following reduced row echelon form:
\begin{align*}
\left[\begin{matrix} \\ \\ \\ \\ \end{matrix}\right.
\overbrace{
\begin{array}{c}
\\ \multirow{2}{*}{\Scale[2]{0}} \\ \\
\begin{array}{c} \hspace{.35cm}\hspace{.35cm} \end{array}
\end{array}}^{\d-\m}
\overbrace{
\begin{array}{|c}
\\ \multirow{2}{*}{\Scale[2]{\var{A}}} \\ \\
\begin{array}{c} \hspace{.35cm}\hspace{.35cm} \end{array}
\end{array}}^{\m}
\left.\begin{matrix} \\ \\ \\ \\ \end{matrix}\right]
=\A \sim \mathscr{A}=
\left[\begin{matrix} \\ \\ \\ \\ \end{matrix}\right.
\overbrace{
\begin{array}{c}
\vspace{.1cm}\\ \multirow{2}{*}{\Scale[2]{0}} \\ \\
\begin{array}{c} \hspace{.35cm}\hspace{.35cm} \end{array}\vspace{.1cm}
\end{array}}^{\d-\m}
\overbrace{
\underbrace{
\begin{array}{|c}
 \\ \Scale[2]{I} \\ \\ \hline
\begin{array}{c} \hspace{.35cm}\bs{0}\hspace{.35cm} \end{array}
\end{array}}_{\L}
\underbrace{
\begin{array}{|c}
\\ \Scale[2]{\var{B}} \\ \\ \hline
\begin{array}{c} \hspace{.35cm} \bs{0} \hspace{.35cm} \end{array}
\end{array}}_{\m-\L<\r}}^{\m}
\left.\begin{matrix} \\ \\ \\ \\ \end{matrix}\right]
\begin{matrix}
\left. \begin{matrix} \\ \\ \\ \end{matrix} \right\} \L \hspace{.65cm} \vspace{.1cm}  \\ 
\left. \begin{matrix} \\ \end{matrix} \right\} \n-\L.
 \end{matrix}
\end{align*}
We know from elemental linear algebra\cite{friedberg} that $\ker\A=\ker\mathscr{A}$, nevertheless the top $\L$ rows of $\mathscr{A}$ have fewer than $\r+1$ non-zero entries, which is a contradiction by \aoWithzerosLem.
\end{proof}

\subsection{Independence}
\label{independenceSec}

From our discussion in \textsection \ref{severalAtOnceSec}, it is clear that we are interested on determining when $\OO$ defines an $\AA$ with linearly independent rows.

With this in mind we use the following.
\begin{myDefinition}[Independent set]
\index{independent set $\O$}
\index{dependent set $\O$}
\label{independenceDef}
We say $\O$ is {\em independent} iff {\em all} rows of $\A$ are linearly independent.  Otherwise we say $\O$ is {\em dependent}.
\end{myDefinition}

With this definition we can now say that our next goal is to be able to identify when $\O$ is \independentO.  The answer to this is given by the following.
\begin{framed}
\begin{myLemma}[Characterization of \independentO\ sets]
\label{independenceLem}
$\bb{\O}$ is \independentO\ iff
\begin{align*}
\m \geq \n + \r \hspace{1cm} \text{$\forall$ $\O \subset \bb{\O}$}.
\end{align*}
\end{myLemma}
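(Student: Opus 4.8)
The plan is to prove both directions by relating linear independence of the rows of $\A$ to a Hall-type marriage condition on $\OO$. Recall from \aEntriesLem\ that each row $\ai{i}$ of $\A$ has its nonzero entries exactly in the positions of $\oi{i}$, so linear dependence among a subcollection of rows $\{\ai{i} : \i \in \I\}$ is an inherently combinatorial phenomenon: it can only occur if those rows ``overlap enough.'' Precisely, I would first observe that if $\{\ai{i}:\i \in \I\}$ is linearly dependent, then it contains a minimal dependent subset $\O$, and for this minimal subset the matrix $\A$ (restricted to the rows of $\O$ and its $\m$ nonzero columns, indexed by $\J$) has rank exactly $\n - 1$, so $\m \le \n - 1 + \ell$ for the number $\ell$ of independent rows$\ldots$ but this is exactly the place where \LCor\ does the work for us: applied to the minimal dependent $\O$ we get $\m \geq \ell + \r = (\n-1) + \r$, and I need the reverse inequality $\m \le \n - 1 + \r$ to get a violation of the displayed condition. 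So the real content is an upper bound on $\m$ for a minimal dependent set.

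For the forward direction (if $\bb{\O}$ is independent then the inequality holds for every $\O \subset \bb{\O}$): suppose some $\O \subset \bb{\O}$ violates it, i.e. $\m \le \n + \r - 1$. I would show $\O$ must be dependent, hence $\bb{\O}$ is dependent. The rows $\{\ai{i}:\i\in\O\}$, viewed only in their $\m$ nonzero columns, live in $\R^\m$, but each such row is constrained to lie in the one-dimensional orthogonal complement of $\sstaroi{i}$ inside $\R^{\oi{i}}$; the key is a dimension count showing that $\n$ such rows spanning a space of dimension $\le \m - \r < \n$ cannot be independent. Concretely, I would argue that the $\n$ rows, each having $\r+1$ nonzero entries among $\m \le \n+\r-1$ columns, force a nontrivial linear combination: this is where I would set up the counting carefully, perhaps by induction on $\n$, peeling off a column that is hit by few rows.

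For the converse (if the inequality holds for every $\O \subset \bb{\O}$ then $\bb{\O}$ is independent): assume $\bb{\O}$ is dependent and derive a violating $\O$. Take a minimal dependent subset $\O \subset \bb{\O}$; then every proper subset of $\O$ is independent, and the whole collection has $\ell = \n - 1$ independent rows. By \LCor\ applied to $\O$, $\m \geq \ell + \r = \n - 1 + \r$; I then need to rule out $\m \geq \n + \r$. If $\m \geq \n + \r$, I want to contradict minimality or the dependence itself — the idea being that a dependent set of $\n$ rows whose nonzero-column support has size $\geq \n + \r$ is ``too spread out'' and must actually contain a smaller dependent subset or be independent. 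This is the step I expect to be the main obstacle: converting the global support-size bound into a statement about a smaller dependent subset likely requires a careful examination of the reduced row echelon form of $\A$ restricted to $\O$ (as in the proof of \LCor), showing that the pivot structure forces a row with fewer than $\r+1$ nonzero entries unless $\m \le \n - 1 + \r$, whence \aoWithzerosLem\ again yields the contradiction. Once both directions are in place, the lemma follows, and I would close by noting the statement applies verbatim with $\bb{\O}$ and its subsets $\O$ as the quantifier ranges.
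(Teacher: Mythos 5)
Your skeleton matches the paper's: both directions by contrapositive, with the easy direction (a subset with $\m<\n+\r$ forces dependence) coming from \LCor, and the hard direction reduced to an upper bound on the support of a minimal \dependentO\ subset. The easy direction is fine as you set it up \----once you simply cite \LCor\ rather than re-derive it by ``peeling off columns,'' since a purely combinatorial count of supports cannot work there: $\n$ generic vectors with $\r+1$ nonzero entries each and small joint support can perfectly well be linearly independent; what forces dependence is that every row annihilates the $\r$-dimensional fitting subspace guaranteed by \existsNonDegenerateAss, which is exactly what \LCor\ (via \aoWithzerosLem) encodes.

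The genuine gap is in the converse, and it is precisely the step you flag as the main obstacle. You propose to rule out $\m \geq \n+\r$ for a minimal \dependentO\ set by examining the reduced row echelon form and invoking \aoWithzerosLem. That tool only cuts one way: in RREF each pivot column carries a single nonzero entry, so \aoWithzerosLem\ forces each of the $\n-1$ pivot rows to have at least $\r$ nonzero entries among the non-pivot columns, which yields the \emph{lower} bound $\m \geq (\n-1)+\r$ \----the same inequality \LCor\ already gave you\---- and nothing caps how many non-pivot columns there can be. No deterministic pivot-structure argument can deliver the upper bound, because the upper bound is false for some $\Sstar$ satisfying \rDimensionAss--\existsNonDegenerateAss: the paper's own discussion of ``aligned'' subspaces in \textsection \ref{intuitivelySpeakingSec} exhibits exactly the measure-zero configurations where a \basis\ violates $\m=\n+\r$. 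The missing idea is genericity. The paper's \basisLem\ shows that if a minimal dependence held with $\m>\n+\r$, then a certain explicitly constructed determinant expression in the entries of the $\ao$'s would have to vanish, verifies that this expression is a non-zero polynomial (using that the coefficients $\beta_\ii$ of a minimal dependence are all non-zero), and concludes that the bad event has measure zero over $\Sstar$. Without an argument of this polynomial-vanishing type, your plan for the hard direction does not close.
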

\end{framed}

\begin{myRemark}
\normalfont
Notice that \independenceLem\ defines a matroid\cite{oxley}.
$\blacksquare$
\end{myRemark}

This lemma represents a central part of Theorems \ref{allOfAKindThm} and \ref{uniquenessThm}, and is possibly the most transcending result of the document.  In order to prove it we will require the following.
\begin{myDefinition}[Dependent set, redundant set]
\index{independent set $\o$}
\index{dependent set $\o$}
\index{redundant set}
\label{redundantDef}
We say $\o$ is {\em dependent} on $\O$ or  {\em redundant} iff $\a$ is linearly dependent on the rows of $\A$.  Otherwise, we say $\o$ is independent of $\O$.
\end{myDefinition}

The next definition will play a crucial role in the proofs of Lemmas \ref{independenceLem} and \ref{basisLem} in this section, and also in Lemmas \ref{kCharLem}, \ref{allOfAKindLem} and \ref{basisCharLem}, in \textsection \ref{allOfAKindSec}.

\begin{myDefinition}[Basis]
\index{basis}
\label{basisDef}
We say $\O$ is a {\em basis} of $\o$ iff $\O$ is an \independentO\ set such that $\o$ is \dependento\ on $\O$ but is not \dependento\ on any proper subset of $\O$.  
\end{myDefinition}

\begin{myRemark}
\normalfont
Observe that \bases\ are not unique.  In fact, for a single $\o$ there could be several \bases, and even of different sizes.
$\blacksquare$
\end{myRemark}

\begin{myExample}
\label{severalBasesEg}
\normalfont
Suppose $\r=2$ and
\begin{align*}
\OO = \left[\begin{matrix} 
	\see & \miss & \see & \miss & \miss & \see \\
	\see & \see & \miss & \miss & \miss & \miss \\
	\see & \see & \see & \see & \miss &\miss \\
	\miss & \see & \see & \see & \see & \miss \\
	\miss & \miss & \miss & \see & \see & \see \\
	\miss & \miss & \miss & \miss & \see & \see
	\end{matrix}\right].
\end{align*}

$\oi{3}$ is trivially \dependento\ on itself, so $\{\oi{3}\}$ is a trivial \basis\ of $\oi{3}$.  One can verify using \independenceLem\ that $\oi{3}$ is \dependento\ on $\{\oi{1},\oi{2}\}$.  Then $\{\oi{1},\oi{2}\}$ is also a \basis\ of $\oi{3}$.  Finally, $\oi{3}$ is also \dependento\ on $\{\oi{4},\oi{5},\oi{6}\}$, so $\{\oi{4},\oi{5},\oi{6}\}$ is also a \basis\ of $\oi{3}$.  Notice that these three different \bases\ are not even of the same size.
$\blacksquare$
\end{myExample}

The crux of \independenceLem, and all our further results for that matter, lies in the following statement.
\begin{framed}
\begin{myLemma}[Property of \bases]
\label{basisLem}
Let $\O$ be a \basis\ of $\o$.  Then $\m = \n + \r$.
\end{myLemma}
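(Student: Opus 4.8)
The plan is to translate the combinatorial definition of a \basis\ into a statement about the rows of $\A$ and then prove the two inequalities $\m\ge\n+\r$ and $\m\le\n+\r$. So, first, recall the vectors $\a$ and $\{\ai{i}\}_{\i\in\I}$ of \textsection\ref{severalAtOnceSec}: by \aEntriesLem\ each $\ai{i}$ has exactly $\r+1$ non-zero entries, located in the positions of $\oi{i}$, and $\a$ likewise has $\r+1$ non-zero entries, in the positions of $\o$. Saying $\O$ is \independentO\ means the rows $\{\ai{i}\}_{\i\in\I}$ are linearly independent; saying $\o$ is \dependento\ on $\O$ means $\a$ lies in their span; and since the $\ai{i}$ are linearly independent, the expansion
\begin{align*}
\a \;=\; \sum_{\i\in\I} c_{\i}\,\ai{i}
\end{align*}
is unique, so ``$\o$ is \dependento\ on $\O$ but on no proper subset of $\O$'' is exactly the statement that every $c_{\i}\ne0$. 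One elementary remark is used repeatedly below: since the support of $\a$ equals $\o$ while the support of any linear combination of the $\ai{i}$ lies in $\bigcup_{\i}\oi{i}$, we have $\o\subseteq\bigcup_{\i}\oi{i}$, so adjoining $\a$ to $\A$ as an extra row creates no new non-zero columns and leaves $\m$ unchanged.

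The inequality $\m\ge\n+\r$ is then immediate: as $\O$ is \independentO, the number $\L$ of linearly independent rows of $\A$ equals $\n$, so \LCor\ gives $\m\ge\L+\r=\n+\r$. For the reverse inequality I would induct on $\n$. When $\n=1$, $\a=c_{1}\ai{1}$ with $c_{1}\ne0$ forces $\o=\oi{1}$ and hence $\m=\r+1$. When $\n\ge2$, notice that any coordinate lying in exactly one of the $\oi{i}$ must also lie in $\o$, since at such a coordinate only the term $c_{\i}\ai{i}$ contributes to $\a$ and $c_{\i}\ne0$ keeps $\a$ non-zero there; hence the ``private'' coordinates of the $\oi{i}$ are pairwise-disjoint subsets of $\o$, of total size at most $\r+1$, and what remains is to bound by $\n-1$ the number of coordinates shared by two or more of the $\oi{i}$.

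To handle that, I would pick such a coordinate $\j\in\oi{1}\cap\oi{2}$. The relation above gives $c_{1}(\ai{1})_{\j}+c_{2}(\ai{2})_{\j}=0$, so $\ai{2}':=\ai{2}-\tfrac{(\ai{2})_{\j}}{(\ai{1})_{\j}}\ai{1}$ vanishes at $\j$, and the same cancellation eliminates $\ai{1}$ from the linear relation, leaving $c_{2}\ai{2}'+\sum_{\i\ge3}c_{\i}\ai{i}=0$ with every coefficient non-zero and $\{\ai{2}',\ai{3},\dots,\ai{\n}\}$ spanning an $(\n-1)$-dimensional space --- a smaller configuration of the same kind, with support-union $\operatorname{supp}(\ai{2}')\cup\bigcup_{\i\ge3}\oi{i}$, to which I would apply the inductive hypothesis after relating that union back to $\bigcup_{\i}\oi{i}$.

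The point I expect to be genuinely hard is that this peeling does not on its own preserve the hypothesis that each set has size $\r+1$: $\operatorname{supp}(\ai{2}')$ can exceed $\r+1$ unless $|\oi{1}\cap\oi{2}|=\r$. So the induction must be run in a mixed-dimension form --- the natural generalization of the setup alluded to in \textsection\ref{aboutOurAssumptionsSec} --- and one must show the overlap pattern of the $\o$'s is forced to be as tight as possible. This is where ``almost every $\Sstar$'' genuinely enters: a purely combinatorial count (no coordinate is private, hence double counting) only yields $\m\le\tfrac{1}{2}(\n+1)(\r+1)$, and \aoWithzerosLem\ by itself does not close the remaining gap; a ``loose'' overlap would force a coincidental alignment among the $\ai{i}$ --- equivalently, a non-generic rank drop in a restriction of the subspaces of $\Sstar$ --- which has Lebesgue measure zero and is therefore excluded for almost every $\Sstar$.
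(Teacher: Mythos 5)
Your lower bound $\m \geq \n+\r$ is exactly the paper's (independence gives $\L=\n$, then \LCor); the problem is the reverse inequality, and the gap there is twofold. First, the intermediate combinatorial claim you reduce to --- that the coordinates shared by two or more of the $\oi{i}$ number at most $\n-1$ --- is false. Take $\r=2$, $\d=5$, $\oi{1}=\{1,2,3\}$, $\oi{2}=\{2,3,4\}$, $\oi{3}=\{3,4,5\}$, with all columns in one $\sstar$; then $\ai{1},\ai{2},\ai{3}$ span the $3$-dimensional space $(\sstar)^\perp$, which generically contains a vector $\a$ supported exactly on $\o=\{1,\j,5\}$ for a suitable $\j\in\{2,3,4\}$, and one checks that $\O=\{\oi{1},\oi{2},\oi{3}\}$ is a \basis\ of this $\o$. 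The lemma's conclusion $\m=5=\n+\r$ holds, but the shared coordinates are $\{2,3,4\}$ --- three of them, exceeding $\n-1=2$ --- while the private ones number only two, not $\r+1$. The two bounds do not hold separately; only their sum does, so a peeling induction built on bounding them separately cannot close.

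Second, and more fundamentally: your last paragraph correctly locates where ``almost every $\Sstar$'' must enter, but then asserts the one thing that has to be proved, namely that a configuration with $\m>\n+\r$ forces an alignment among the $\ai{i}$ that is cut out by a polynomial in the entries of the $\aoi{1},\aoi{2},\dots$ which is \emph{not identically zero}. That non-vanishing is the entire content of the hard direction and is where the paper spends all of its effort: assuming $\m>\n+\r$, it extracts an invertible $\n\times\n$ block $\var{B}$ of $\A$, expresses the coefficient vector $\beta$ of the dependence $\beta\A=\a$ through cofactors of $\var{B}$, and shows --- using crucially that every $\beta_\i\neq0$ because the dependence is minimal, i.e., the \basis\ property --- that the resulting constraint $\sum_{i}\pm_{i\n}|\var{B}_{i\n}|=0$ is a non-zero polynomial in at least one entry of $\aoi{1}$ or $\aoi{2}$. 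Without an argument of this kind you cannot exclude that for some combinatorial patterns the ``coincidental'' alignment holds identically in $\Sstar$; declaring the bad set to have measure zero at that point is assuming the conclusion.
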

\end{framed}

\begin{proof}
Let $\O$ be a \basis\ of $\o$.  If $\O$ is a trivial \basis, i.e., if $\O=\{\o\}$, it is trivially true that $\m=\n+\r$.  Suppose then that $\O$ is non-trivial.

First observe that $\o \subset \J$; otherwise $\a$ would have a nonzero entry corresponding to a zero column of $\A$, whence $\a$ could not possibly be linearly dependent on $\A$.

We can assume without loss of generality that $\A$ has all its zero columns \----if any\---- in the first block and $\a$ has its non-zero entries in the last $\r+1$ columns; otherwise, we may just permute the columns of $\a$ and $\A$ accordingly.  Also assume without loss of generality that the first non-zero entry of $\a$ is $1$ \----otherwise we can just scale the row\---- and let $\hat{\var{a}}$ denote the $1 \times \r$ row with the remaining non-zero entries of $\a$, such that we can write:
\begin{align}
\label{firstDecompositionAEq}
\left[\begin{matrix} \\ \\ \\ \\ \end{matrix}\right.
\begin{matrix}
\\ \A \\ \\ \hline
\a
\end{matrix}
\left.\begin{matrix} \\ \\ \\ \\ \end{matrix}\right]
=
\left[\begin{matrix} \\ \\ \\ \\ \end{matrix}\right.
\overbrace{
\begin{array}{c}
\vspace{.1cm}\\ \multirow{2}{*}{\Scale[2]{0}} \\ \\
\begin{array}{c} \hspace{.35cm}\hspace{.35cm} \end{array}\vspace{.1cm}
\end{array}}^{\d-\m}
\overbrace{
\underbrace{
\begin{array}{|c}
\\\Scale[2]{B} \\ \\ \hline
\begin{array}{c|r} \hspace{.5cm}\bs{0}\hspace{.5cm} & 1 \end{array}
\end{array}}_{\m-\r}
\underbrace{
\begin{array}{|c}
\\ \Scale[2]{C} \\ \\ \hline
\hspace{.5cm} \hat{\var{a}} \hspace{.5cm}
\end{array}}_{\r}}^{\m}
\left.\begin{matrix} \\ \\ \\ \\ \end{matrix}\right]
\begin{matrix}
\left. \begin{matrix} \\ \\ \\ \end{matrix} \right\} \n \\
\left. \begin{matrix} \\ \end{matrix} \right\} 1. \hspace{.1cm}
 \end{matrix}
\end{align}
Recall that we want to show that $\m=\n+\r$.  Since $\O$ is \independentO, we know $\n=\L$, so by \LCor\ we immediately know that $\m \geq \n+\r$.  As we specified in \textsection \ref{resultsSec}, all our statements hold for almost every $\Sstar$, so it suffices to show that $\m>\n+\r$ only in a set of measure zero.  Suppose then that $\m>\n+\r$.  This implies that $B$ has strictly more than $\n$ columns.

Observe that the rows of $B$ are linearly independent.  To see this, very similar to what we did in \LCor, suppose for contradiction that they are not.  This implies that we can transform $A$ into the following reduced row echelon form:
\begin{align*}
\A \sim \mathscr{A}=
\left[\begin{matrix} \\ \\ \\ \\ \end{matrix}\right.
\overbrace{
\begin{array}{c}
\vspace{.1cm}\\ \multirow{2}{*}{\Scale[2]{0}} \\ \\
\begin{array}{c} \hspace{.35cm}\hspace{.35cm} \end{array}\vspace{.1cm}
\end{array}}^{\d-\m}
\overbrace{
\underbrace{
\begin{array}{|c|}
 \\ \Scale[2]{\mathscr{B}} \\ \\ \hline
\begin{array}{c} \hspace{.35cm}\bs{0}\hspace{.35cm} \end{array}
\end{array}}_{\L}
\underbrace{
\begin{array}{c}
\vspace{.1cm}\\ \multirow{2}{*}{\Scale[2]{\var{C}}} \\ \\
\begin{array}{c} \hspace{.35cm}\hspace{.35cm} \end{array}\vspace{.1cm}
\end{array}}_{\r}}^{\m}
\left.\begin{matrix} \\ \\ \\ \\ \end{matrix}\right]
\begin{matrix}
\begin{matrix} \\ \\ \\ \end{matrix}  \\ 
\left. \begin{matrix} \\ \end{matrix} \right\} >0.
 \end{matrix}
\end{align*}
We know from elemental linear algebra\cite{friedberg} that $\ker \A = \ker \mathscr{A}$.  Nevertheless, the last row of $\mathscr{A}$ has at most $\r$ non-zero entries, which is a contradiction by \aoWithzerosLem.

Let $\var{a}$ be the $1 \times \m$ row with only the entries of $\A$ in the positions of $\J$, and $\var{A}$ be the $\n \times \m$ matrix with only the columns of $\A$ in the positions of $\J$.

Going back to \eqref{firstDecompositionAEq}, since the rows of $B$ are linearly independent and $\m-\r>\n$, we know, $B$ has $\n$ linearly independent columns.  Let $\var{B}$ denote the $\n \times \n$ block of $B$ that contains $\n$ linearly independent columns, and $\mathscr{B}$ the $\n \times (\m-\n-\r)$ remaining block of $B$.  We can thus assume without loss of generality that:
\begin{align*}
\left[\begin{matrix} \\ \\ \\ \\ \end{matrix}\right.
\begin{matrix}
\\ \var{A} \\ \\ \hline
\var{a}
\end{matrix}
\left.\begin{matrix} \\ \\ \\ \\ \end{matrix}\right]
=
\left[\begin{matrix} \\ \\ \\ \\ \end{matrix}\right.
\overbrace{
\underbrace{
\begin{array}{c}
 \\ \Scale[2]{\mathscr{B}} \\ \\ \hline
\begin{array}{c} \hspace{.5cm}\bs{0}\hspace{.5cm} \end{array}
\end{array}}_{\m - \n - \r \geq 1}
\underbrace{
\begin{array}{|c}
\\ \Scale[2]{\var{B}} \\ \\ \hline
\begin{array}{c|r} \hspace{.5cm}\bs{0}\hspace{.5cm} & 1 \end{array}
\end{array}}_{\n}}^{B}
\underbrace{
\begin{array}{|c}
\\ \Scale[2]{C} \\ \\ \hline
\hspace{.5cm} \hat{\var{a}} \hspace{.5cm}
\end{array}}_{\r}
\left.\begin{matrix} \\ \\ \\ \\ \end{matrix}\right]
\begin{matrix}
\left. \begin{matrix} \\ \\ \\ \end{matrix} \right\} \n \\
\left. \begin{matrix} \\ \end{matrix} \right\} 1. \hspace{.1cm}
 \end{matrix}
\end{align*}
Notice that the column of $B$ corresponding to the $1$ in $\var{a}$ must belong to $\var{B}$ (otherwise, we have that $\beta \var{B}=0$, with $\beta$ as in \eqref{aLdOnAEq}, which implies that $\var{B}$ has a linearly dependent row, hence a linearly dependent column).

We can further assume without loss of generality that the first non-zero entry of every row of $\var{A}$ is $1$; otherwise we may just scale each row.  Finally, we may also assume that the first column of $\var{A}$, namely the first column of $\mathscr{B}$ has all its $\flat$ non-zero entries \----all ones by construction\---- on the top rows; otherwise we may just permute the rows of $\var{A}$ accordingly, such that we have:
\begin{align}
\label{AconstructionEq}
\left[\begin{matrix} \\ \\ \\ \\ \end{matrix}\right.
\begin{matrix}
\\ \var{A} \\ \\ \hline
\var{a}
\end{matrix}
\left.\begin{matrix} \\ \\ \\ \\ \end{matrix}\right]
=
\begin{matrix}
\flat \left. \begin{matrix} \vspace{.1cm} \\ \end{matrix} \right\{ \\
\left. \begin{matrix} \\ \vspace{.25cm} \\ \end{matrix} \right.
\end{matrix}
\left[\begin{matrix} \\ \\ \\ \\ \end{matrix}\right.
\overbrace{
\underbrace{
\begin{array}{c}
\multirow{2}{*}{\Scale[2]{\bs{1}}} \\
\\
\bs{0}\vspace{.175cm} \\ \hline
\bs{0}
\end{array}}_{1}
\underbrace{
\begin{array}{|c}
\\ \Scale[2]{\mathcal{B}} \\ \\ \hline
\begin{array}{c} \hspace{.5cm}\bs{0}\hspace{.5cm} \end{array}
\end{array}}_{\m - \n - \r -1 \geq 0}}^{\mathscr{B}}
\underbrace{
\begin{array}{|c}
\\ \Scale[2]{\var{B}} \\ \\ \hline
\begin{array}{c|r} \hspace{.5cm}\bs{0}\hspace{.5cm} & 1 \end{array}
\end{array}}_{\n}
\underbrace{
\begin{array}{|c}
\\ \Scale[2]{C} \\ \\ \hline
\hspace{.5cm} \hat{\var{a}} \hspace{.5cm}
\end{array}}_{\r}
\left.\begin{matrix} \\ \\ \\ \\ \end{matrix}\right]
\begin{matrix}
\left. \begin{matrix} \\ \\ \\ \end{matrix} \right\} \n \\
\left. \begin{matrix} \\ \end{matrix} \right\} 1. \hspace{.1cm}
 \end{matrix}
\end{align}
Since $\a$ is minimally linearly dependent on $\A$, we know there exists a unique row vector $\beta \in \R^{\n}$ with all non-zero entries such that
\begin{align}
\label{aLdOnAEq}
\beta \A= \a,
\end{align}
which implies $1<\flat \leq \n$.  In particular, using \eqref{aLdOnAEq} on the $\var{B}$ block of \eqref{AconstructionEq}, we have that $\beta \var{B}=\left[\begin{array}{c|r} \bs{0} & 1 \end{array}\right]$, and since $\var{B}$ is full-rank, we can solve for $\beta$:
\begin{align}
\label{betaEq}
\beta= \left[\begin{array}{c|r} \bs{0} & 1 \end{array}\right] \var{B}^{-1},
\end{align}
i.e., $\beta$ is the the last row of $\var{B}^{-1}$.  We know from elemental linear algebra\cite{friedberg} that
\begin{align*}
\var{B}^{-1} = \frac{1}{|\var{B}|} \var{B}^*,
\end{align*}
where $|\cdot|$ and $\cdot^*$ denote the determinant and the adjoint matrix of $\cdot$ , respectively.  Since $\var{B}^*$ is the transpose of the cofactor matrix of $\var{B}$, we have that
\begin{align}
\label{inverseElementEq}
\var{b}^{-1}_{\n i} = \pm_{i\n} \frac{|\var{B}_{i\n}|}{|\var{B}|},
\end{align}
where $\var{B}_{i \n}$ denotes the $(\n-1) \times (\n-1)$ minor of $\var{B}$ obtained by removing the $\i^{th}$ row and the $\n^{th}$ column of $\var{B}$, and $\pm_{i \n}$ denotes the $(\i,\n)^{th}$ entry of the following matrix:
\begin{align*}
\pm := \left[ \begin{matrix}
+ & - & + & - & \cdots \\
- & + & - & + & \cdots \\
+ & - & + & - & \cdots \\
- & + & - & + & \cdots \\
\vdots & \vdots & \vdots & \vdots & \ddots
\end{matrix} \right],
\end{align*}
For example, if:
\begin{align*}
\var{B} = \left[ \begin{array}{ccc|c}
\var{b}_{11} & \var{b}_{12} & \var{b}_{13} & \var{b}_{14} \\ \hline
\var{b}_{21} & \var{b}_{22} & \var{b}_{23} & \var{b}_{24} \\
\var{b}_{31} & \var{b}_{32} & \var{b}_{33} & \var{b}_{34} \\
\var{b}_{41} & \var{b}_{42} & \var{b}_{43} & \var{b}_{44}
\end{array} \right],
\end{align*}
then
\begin{align*}
\pm_{1\n} |\var{B}_{1 \n}| = -
\left| \begin{matrix}
\var{b}_{21} & \var{b}_{22} & \var{b}_{23} \\
\var{b}_{31} & \var{b}_{32} & \var{b}_{33} \\
\var{b}_{41} & \var{b}_{42} & \var{b}_{43}
\end{matrix} \right|.
\end{align*}
Now observe that for $\i \neq 1$, we have:
\begin{align*}
|\var{B}_{i \n}| = \sum_{\ii=1}^{\n-1} \pm_{1 \ii} \var{b}_{1 \ii} |\var{B}^{1\ii}_{i\n}|,
\end{align*}
where $\var{B}^{1\ii}_{i\n}$ is the $(\n-2) \times (\n-2)$ minor of $\var{B}_{i \n}$ obtained by removing the first row and the $\ii^{th}$ column of $\var{B}_{i\n}$.  In our example,
\begin{align*}
\var{B} = \left[ \begin{array}{cc|c|c}
\var{b}_{11} & \var{b}_{12} & \var{b}_{13} & \var{b}_{14} \\ \hline
\var{b}_{21} & \var{b}_{22} & \var{b}_{23} & \var{b}_{24} \\ \hline
\multicolumn{2}{c|}{\multirow{2}{*}{\Scale[1]{\var{B}^{13}_{2\n}}}} & \var{b}_{33} & \var{b}_{34} \\
\multicolumn{2}{c|}{} & \var{b}_{33} & \var{b}_{44}
\end{array} \right].
\end{align*}
Recall that $\beta_i \neq 0$ for every $\i$, and since $\beta_i = \frac{\pm_{i\n} |\var{B}_{i \n}|}{|\var{B}|}$, it is clear that $|\var{B}_{i \n}| \neq 0$ for every $\i$.  This implies that there is at least one $\ii$ for which $\var{b}_{1 \ii} \neq 0$.

Now let us look back at \eqref{aLdOnAEq}.  Using the first column of \eqref{AconstructionEq} we obtain $\beta \left[\begin{array}{c|r} \bs{1} & \bs{0} \end{array}\right]^\var{T} = 0$.  Substituting \eqref{betaEq}, \eqref{inverseElementEq}, and factoring out the common term $|\var{B}|$, we obtain:
\begin{align}
\label{fVarietyEq}
\sum_{i=1}^\flat \pm_{i \n}|\var{B}_{i \n}| = 0.
\end{align}
We will now show that $f:=\sum_{i=1}^\flat \pm_{i \n}|\var{B}_{i \n}|$ is a non-zero polynomial.  Write:
\begin{align*}
f:=\sum_{\i=1}^\flat \pm_{i \n}|\var{B}_{i \n}| &= \pm_{1\n} |\var{B}_{1 \n}| + \sum_{i=2}^\flat \pm_{i\n}|\var{B}_{i\n}| \\
&= \pm_{1\n}|\var{B}_{1\n}| + \sum_{i=2}^\flat \pm_{i\n} \sum_{\ii=1}^{\n-1} \pm_{1 \ii} \var{b}_{1 \ii} |\var{B}^{1\ii}_{i\n}| \\
&=\pm_{1\n}|\var{B}_{1\n}| + \sum_{\ii=1}^{\n-1} \var{b}_{1 \ii} \underbrace{ \sum_{i=2}^\flat \pm_{i\n} \pm_{1 \ii} |\var{B}^{1\ii}_{i\n}|}_{=:\var{c}_\ii},
\end{align*}
where $\var{c}_\ii$ is a constant polynomial of $\var{b}_{1\ii}$.  By simple inspection, one can see that $|\var{B}_{1 \n}|$ does not depend on $\var{b}_{1\ii}$, i.e., it is also a constant polynomial of $\var{b}_{1\ii}$.  Thus, if $\var{c}_\ii \neq 0$ for some $\ii$ for which $\var{b}_{1\ii} \neq 0$, we can immediately conclude that $f$ is a non-zero polynomial of $\var{b}_{1\ii}$.

On the other hand, if $\var{c}_\ii=0$ for every $\ii$ for which $\var{b}_{1\ii} \neq 0$, then $f=\pm_{1\n}|\var{B}_{1\n}|$.  Similar to what we did for $\i \neq 1$, we can write:
\begin{align*}
|\var{B}_{1 \n}| = \sum_{\ii=1}^{\n-1} \pm_{2 \ii} \var{b}_{2 \ii} |\var{B}^{2\ii}_{1\n}|,
\end{align*}
where $\var{B}^{2\ii}_{i\n}$ is the $(\n-2) \times (\n-2)$ minor of $\var{B}_{1 \n}$ obtained by removing the second row and the $\ii^{th}$ column of $\var{B}_{1\n}$.  Again, since $\beta_1 \neq 0$, there is at least one $\ii$ for which $\var{b}_{2 \ii} \neq 0$, thus $f$ is a non-zero polynomial of $\var{b}_{2\ii}$.

Observe that every $\var{b}_{i \ii}$ is either zero, or one of the entries of $\aoi{i}$.  Thus $f$ is a non-zero polynomial of at least one of the entries of $\aoi{1}$ or $\aoi{2}$, say $\aoi{i\ii}$.

Since $\flat>1$, $\aoi{i\ii} \neq 1$, hence $f$ is a non-zero polynomial in $\R[\aoi{i\ii}]$.  We thus know from elemental measure theory and algebraic geometry\cite{murray} that the variety defined by \eqref{fVarietyEq} has measure zero.  In other words, there is only a subset of measure zero over $\aoi{i\ii}$, hence over the set of all sets $\Sstar$, for which $\m>\n+\r$, which concludes the proof.
\end{proof}

We are now ready to present the proof of \independenceLem.

\begin{proof}(\independenceLem).  We prove both directions by contrapositive.  Explicitly, we will show that $\bb{\O}$ is \dependentO\ iff $\exists$ $\O \subset \bb{\O}$ with $\m<\n+\r$.

($\Rightarrow$) Let $\bb{\O}$ be \dependentO.  By definition, it contains a \redundant\ $\o$ with \basis\ $\bar{\O} \subset \OO \backslash \o$.  By \basisLem, $\bar{\n}=\bar{\m}-\r$.

Take $\O:=\bar{\O} \cup \o$.  It is clear that $\m=\bar{\m}$.  Nevertheless, $\n=\bar{\n}+1$.  Thus $\m<\n+\r$, and we have the first implication.

($\Leftarrow$) Suppose $\exists$ $\O \subset \bb{\O}$ with $\m<\n+\r$.  By \LCor, $\n>\L$, which implies $\O$ is be \dependentO. Of course, since $\O \subset \bb{\O}$, $\bb{\O}$ is also \dependentO, which concludes the second part of the proof.
\end{proof}

\begin{myExample}
\label{liEg}
\normalfont
By \independenceLem, the $\OO$'s from Examples \ref{observationSetsEg}, \ref{fittingOEg} and \ref{independentNotBasisEg} are \independentO, for their respective $\r$'s.  On the other hand, the $\OO$'s from Examples \ref{mainThmEg}, \ref{severalBasesEg}, \ref{infinitelyManySubspacesEg} and \ref{allOfAKindEgb} are \dependentO, for their respective $\r$'s.

Notice that $\OO$ may be \dependentO\ and still satisfy that there is only one subspace that \fitsO\ it.  Such is the case of Examples \ref{mainThmEg}, \ref{severalBasesEg} and \ref{allOfAKindEgb}: they are \dependentO, but they contain an \independentO\ set of size $\d-\r$.  In other words, they have some \redundant\ vectors.
$\blacksquare$
\end{myExample}

\begin{myRemark}
\normalfont
As we will see in \textsection \ref{KcharacterizationSec}, \redundant\ vectors are not useless; they are required to guarantee that different incomplete vectors belong to the same subspace.  In fact, it is easy to see that the conditions of \allOfAKindThm\ require the existence of \redundant\ vectors in $\OO$.
$\blacksquare$
\end{myRemark}

\subsection{All you need is $\d-\r$}
\label{allYouNeedSec}
Recall that we are interested on determining conditions to guarantee that there is only one $\r$-dimensional subspace that \fitsO\ $\OO$.  Observe that this is not implied by $\OO$ being \independentO\ nor viceversa.  Nevertheless, as a simple consequence of \dimUUrCor, \dimUULem\ and \independenceDef, we obtain the following.

\begin{myCorollary}
\label{uniquenessIndependenceCor}
Suppose $\n=\d-\r$.  There is only one $\r$-dimensional subspace that \fitsO\ $\O$ iff $\O$ is \independentO.
\end{myCorollary}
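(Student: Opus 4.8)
The plan is to simply chain together the three ingredients already established. First I would invoke \dimUUrCor\ to replace the uniqueness statement with a dimension statement: there is only one $\r$-dimensional subspace that \fitsO\ $\O$ if and only if $\dim \UU = \r$. Next I would substitute the value of $\dim \UU$ supplied by \dimUULem, namely $\dim \UU = \d - \L$, so that the condition becomes $\d - \L = \r$, equivalently $\L = \d - \r$.

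At this point the hypothesis $\n = \d - \r$ enters the argument: the condition $\L = \d - \r$ is exactly $\L = \n$. Since $\L$ is the number of linearly independent rows of the $\n \times \d$ matrix $\A$, we always have $\L \leq \n$, so $\L = \n$ holds precisely when all $\n$ rows of $\A$ are linearly independent. By \independenceDef\ this is exactly the definition of $\O$ being \independentO, which closes the chain of equivalences and completes the proof.

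There is essentially no obstacle here; the statement is a direct corollary. The only point worth emphasizing is the role of the hypothesis $\n = \d - \r$: it is what lets us identify $\d - \L = \r$ with the independence of $\O$. Without it, $\dim \UU = \r$ still forces $\L = \d - \r$, but this need neither equal $\n$ (when $\n > \d - \r$, which would force $\O$ to be \dependentO) nor even be attainable (when $\n < \d - \r$, which forces $\dim \UU > \r$, so uniqueness fails outright). I would add a one-line remark making the necessity of this hypothesis transparent, but otherwise the proof is just the three-step substitution above.
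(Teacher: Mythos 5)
Your proposal is correct and follows exactly the route the paper intends: the paper derives this corollary as a direct consequence of Corollary \ref{dimUUrCor}, Lemma \ref{dimUULem}, and Definition \ref{independenceDef}, which is precisely your three-step chain $\dim\UU=\r \Leftrightarrow \L=\d-\r=\n \Leftrightarrow$ all rows of $\A$ independent. Your closing remark on the role of the hypothesis $\n=\d-\r$ is accurate and a worthwhile addition, but the argument itself matches the paper's.
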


In other words, Corollary \ref{uniquenessIndependenceCor}, is telling us that all we need is that $\OO$ has $\d-\r$ \independentO\ $\o$'s to guarantee that there is only one $\r$-dimensional subspace that \fitsO\ $\OO$.  \uniquenessThm\ is precisely a combination of this observation and the characterization of \independentO\ sets.

\begin{myExample}
\normalfont
Consider $\OO$ as in \observationSetsEg.  $\OO$ satisfies the conditions of Corollary \ref{uniquenessIndependenceCor}, so there is only one $\r$-dimensional subspace that \fitsO\ $\OO$.
$\blacksquare$
\end{myExample}

\subsection{Proof of \uniquenessThm}
\label{proofUniquenessSec}
We conclude this section with the proof of \uniquenessThm, which comes immediately as a consequence of Corollaries \ref{dimUUrCor} and \ref{LCor} and Lemmas \ref{dimUULem} and \ref{independenceLem}.  Observe that \uniquenessThm\ essentially states that there exits only one $\r$-dimensional subspace that \fitsO\ $\OO$ iff $\OO$ contains an \independentO\ set of size $\d-\r$.

\begin{proof}(\uniquenessThm)
($\Rightarrow$) We prove this by contrapositive.  Suppose $\nexists$ $\bb{\O} \subset \OO$ of size $\d-\r$ such that $\m \geq \n+\r$ for every $\O \subset \bb{\O}$. By \independenceLem, we know there is no \independentO\ set of size $\d-\r$ in $\OO$, i.e., there are no $\d-\r$ linearly independent rows in $\AA$.  By \dimUULem, $\dim \UU>\r$, thus by \dimUUrCor\ there exist infinitely many $\r$-dimensional subspaces that \fitO\ $\OO$.

($\Leftarrow$) Suppose there exists an $\bb{\O} \subset \OO$ of size $\d-\r$ such that $\m \geq \n+\r$ for every $\O \subset \bb{\O}$. By \independenceLem\ $\bb{\O}$ is \independentO, i.e., $\AA$ contains at least $\d-\r$ linearly independent rows.  Furthermore, by \LCor\ $\AA$ contains exactly $\d-\r$ linearly independent rows.  By \dimUULem, $\dim \UU=\r$, thus by \dimUUrCor\ there is only one $\r$-dimensional subspace that \fitsO\ $\bb{\O}$.  Since $\bb{\O} \subset \OO$, there is also only one $\r$-dimensional subspace that \fitsO\ $\OO$.
\end{proof}

\pagebreak
\section{All of a kind}
\label{allOfAKindSec}

Notice that so far, \uniquenessThm\ is only stating when there will be only one $\r$-dimensional subspace that \fitsO\ $\OO$.  It is not yet implying anything about $\K$.  For all we know such subspace could \fitO\ $\OO$, even if the $\x$'s belong to different subspaces.

In other words, even if there is only one $\r$-dimensional subspace that \fitsO\ $\OO$, there is yet nothing that assures us that all the columns of $\X$ indeed lie in one $\r$-dimensional subspace.

\begin{myExample}
\label{allOfAKindEga}
\normalfont
Consider the same setup as in \fittingOEg.  One can easily verify using \uniquenessThm\ that there is only one $\r$-dimensional subspace that \fitsO\ $\OO$ .  But suppose that the columns of $\X$ don't lie in the same $\r$-dimensional subspace, i.e., $\ki{1} \neq \ki{2}$, as in \prologueaEg.

One can see that there are infinitely many $\r$-dimensional subspaces that will satisfy $\soi{1}=\sstaroi{1}$; the $\r$-dimensional subspaces contained in $\uui{1}$, e.g., any of the subspaces in \severalOptionsFiga.  These will \fito\ $\oi{1}$.  There are also infinitely many $\r$-dimensional subspaces that will satisfy $\soi{2}=\sstaroi{2}$; the $\r$-dimensional subspaces contained in $\uui{2}$, e.g., any of the subspaces in \severalOptionsFigb.  These will \fito\ $\oi{2}$.

\begin{figure}[H]
     \begin{center}
        \subfigure[$\r$-dimensional subspaces that satisfy $\soi{1}=\sstaroi{1}$, hence \fito\ $\oi{1}$.]{
           \label{severalOptionsFiga}
            \includegraphics[width=5cm]{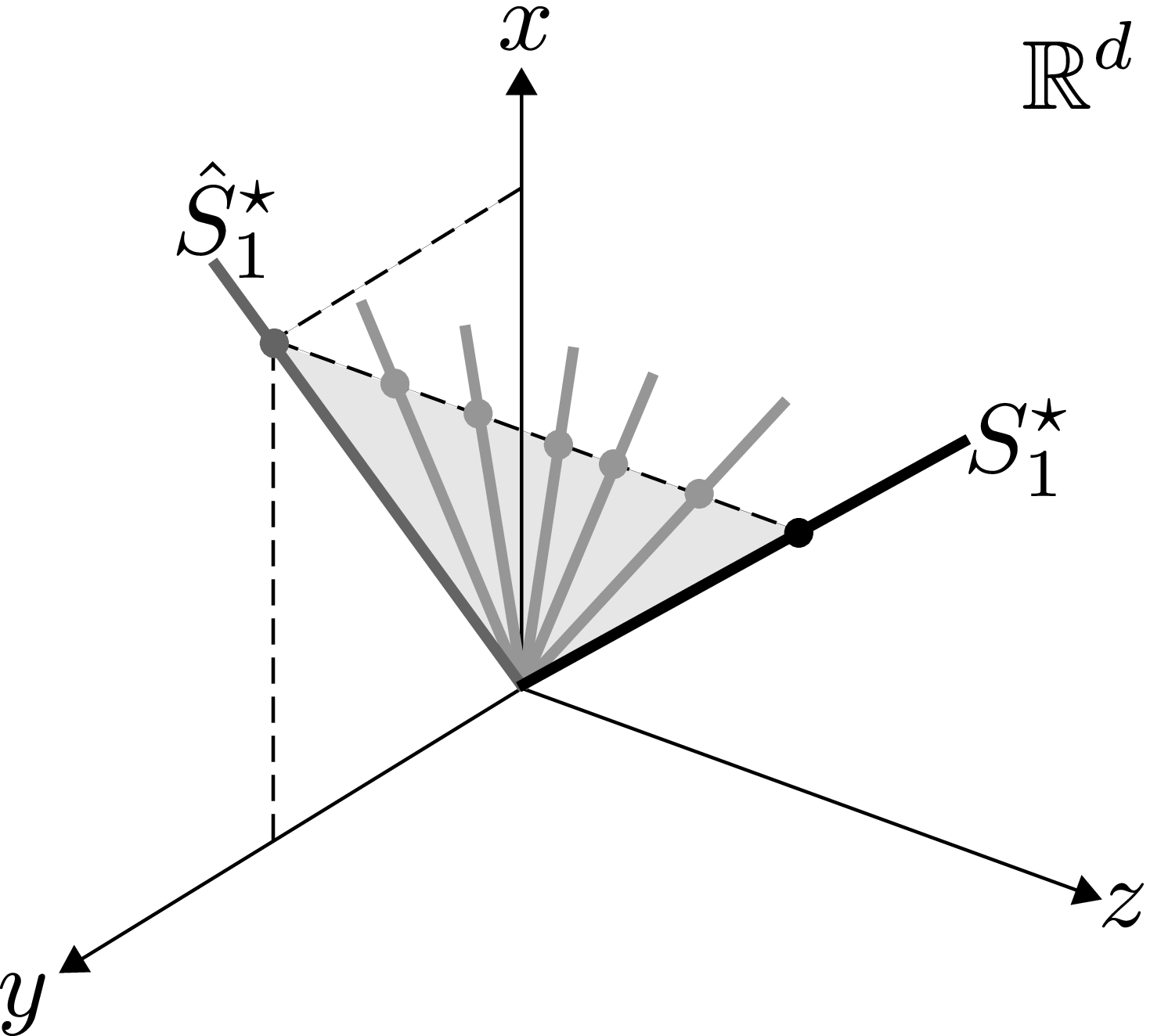}
        } \hspace{.5cm}
        \subfigure[$\r$-dimensional subspaces that satisfy $\soi{2}=\sstaroi{2}$, hence \fito\ $\oi{2}$.]{
           \label{severalOptionsFigb}
           \includegraphics[width=5cm]{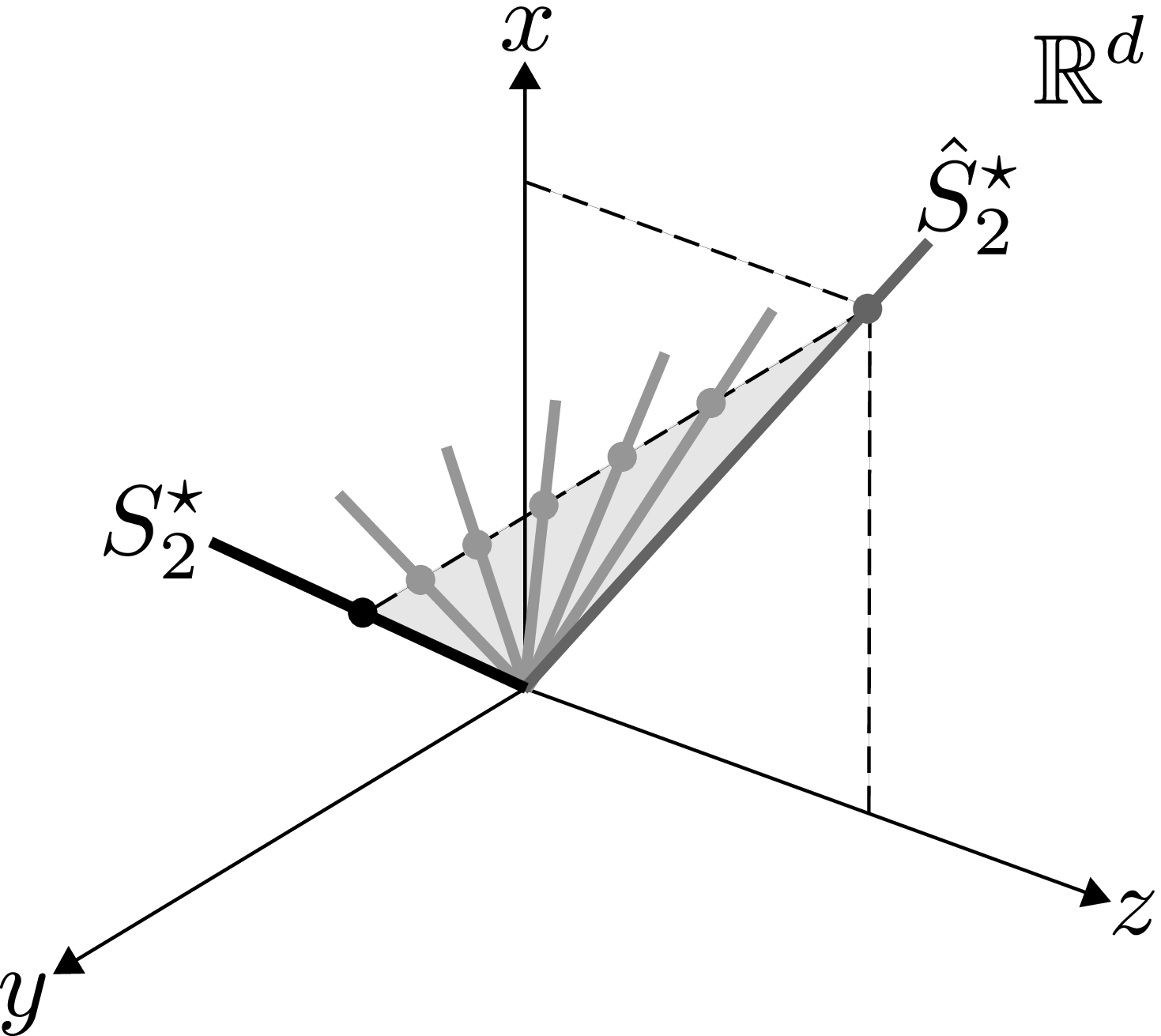}
        }
    \end{center}
    \caption{There are infinitely many $\r$-dimensional subspaces that satisfy $\soi{i}=\sstaroi{i}$, hence \fito\ $\oi{i}$; namely, the subspaces whose projection onto $\Rdo{\omega_i}$ is the same as the projection of $\sstari{i}$ onto $\Rdo{\omega_i}$.}
   \label{severalOptionsFig}
\end{figure}

And despite the columns of $\X$ don't lie in the same $\r$-dimensional subspace, there is one $\r$-dimensional subspace, $\s$, that satisfies $\soi{1} = \sstaroi{1}$ and $\soi{2} = \sstaroi{2}$ simultaneously, and thus \fitsO\ $\OO$: $\s=\UUO=\uui{1} \cap \uui{2}$.

\begin{figure}[H]
\centering
\includegraphics[width=5cm]{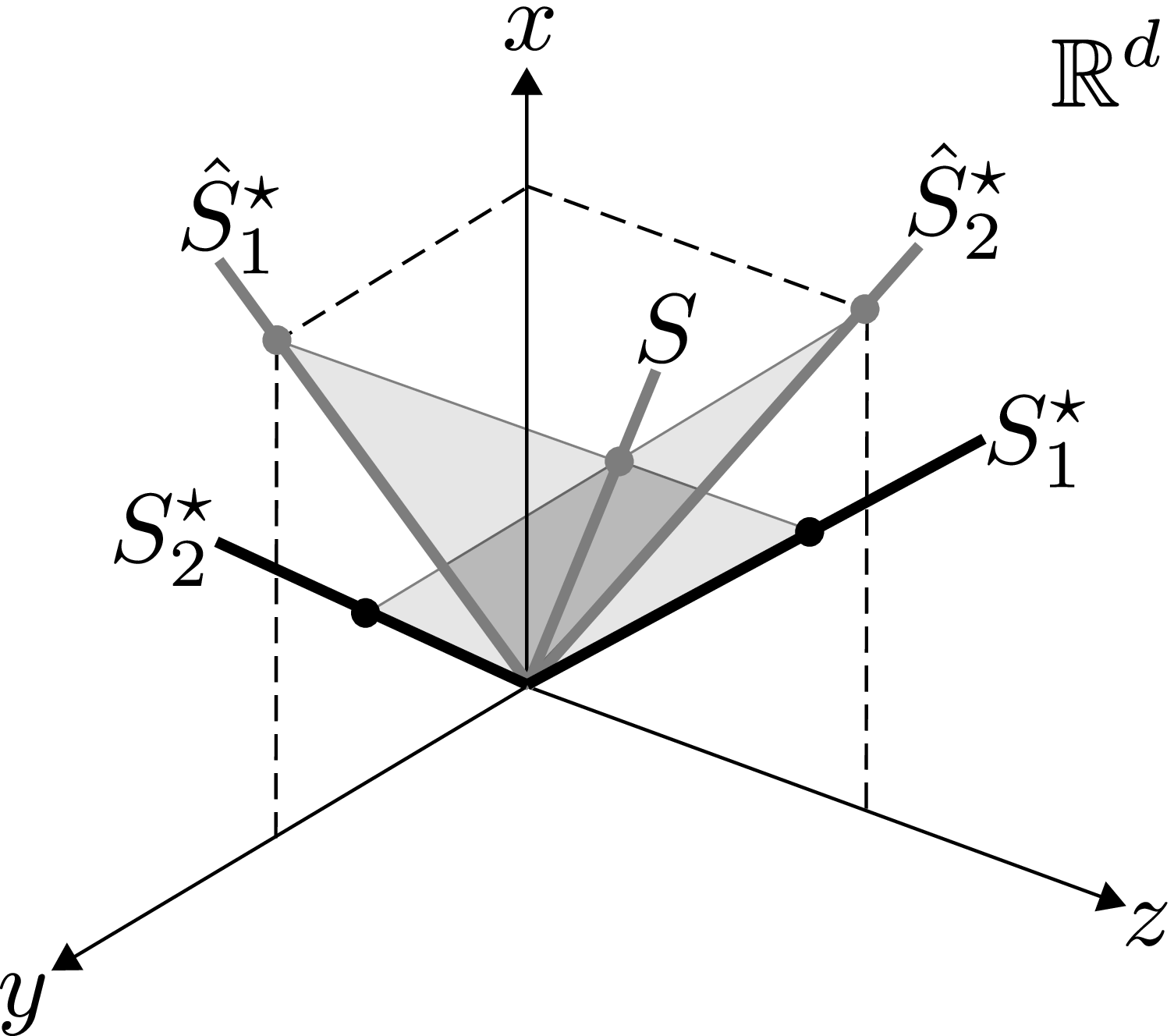}
\label{differentSsFig}
\caption{Despite $\ki{1} \neq \ki{2}$, there is one $\r$-dimensional subspace $\s$ that satisfies $\soi{1} = \sstaroi{1}$ and $\soi{2} = \sstaroi{2}$ simultaneously, and thus \fitsO\ $\OO$: $\s=\UUO=\uui{1} \cap \uui{2}$.}
\end{figure}

For an explicit instance of the example above, take \prologuebEg.  It is easy to see that $\s$ \fitsO\ $\OO$ despite the columns of $\X$ do not lie in a $1$-dimensional subspace.
$\blacksquare$
\end{myExample}

We want to know when can we be sure that if there is only one $\r$-dimensional subspace that \fitsO\ $\OO$, it is because all the columns of $\X$ indeed lie in an $\r$-dimensional subspace.

\begin{myExample}
\label{allOfAKindEgb}
\normalfont
Continuing with \allOfAKindEga, suppose we had an additional $\oi{3}=\{1,3\}$, i.e., suppose
\begin{align*}
\OO = \left[\begin{matrix}
\see & \see & \miss \\
\see & \miss & \see \\
\miss & \see & \see
\end{matrix}\right],
\end{align*}
Then almost surely, $\UUO=\uui{1} \cap \uui{2} \cap \uui{3}$ will only contain an $\r$-dimensional subspace iff all the columns of $\X$ lie in an $\r$-dimensional subspace.

\begin{figure}[H]
\centering
\includegraphics[width=5cm]{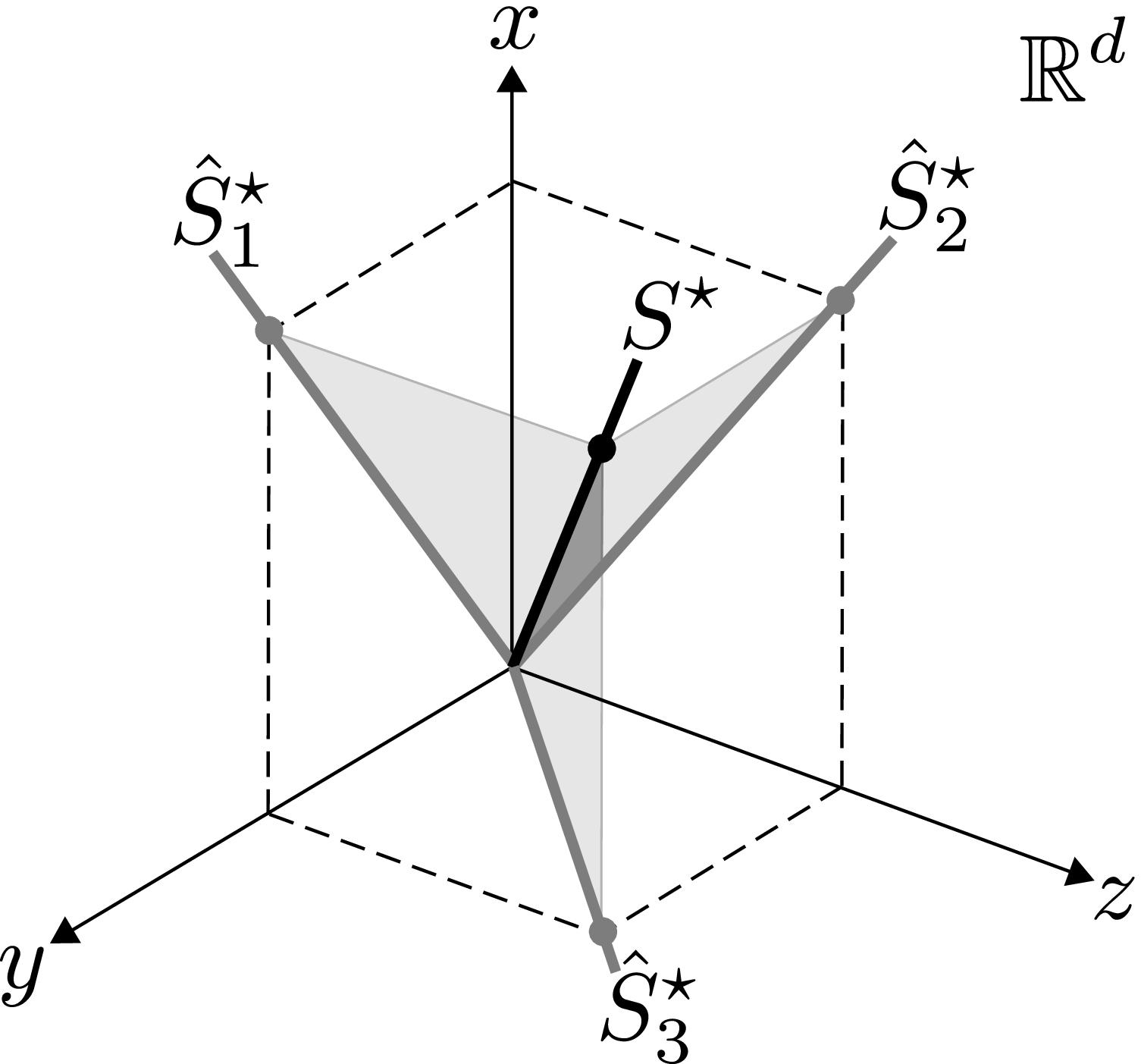}
\caption{Almost surely, $\UUO$, the intersection of three hyperplanes in $\R^3$, each corresponding to one of the $\uui{i}$'s, is a line iff $\ki{1}=\ki{2}=\ki{3}$, whence $\UUO=S^\star$.}
\label{mustMatchFig}
\end{figure}

$\blacksquare$
\end{myExample}

In other words, we want to derive necessary and sufficient conditions on $\OO$ to guarantee that all the columns of $\X$ indeed lie in an $\r$-dimensional subspace, i.e., that $\ki{i}=\ki{\ii}$ for every $(\i,\ii)$.

We want those conditions to be sufficient in the sense that if $\OO$ satisfies such conditions, then it {\em must} be true that $\ki{i}=\ki{\ii}$ for every $(\i,\ii)$.

We want those conditions to be necessary in the sense that if $\OO$ does not satisfy these conditions, it cannot be guaranteed that $\ki{i}=\ki{\ii}$ for every $(\i,\ii)$, i.e., $\ki{i}$ {\em could} be different from $\ki{\ii}$ for some $(\i,\ii)$, implying that the columns of $\X$ {\em might} not all lie in the same $\r$-dimensional subspace.

The answer to this question, the main one of this document, is given by \allOfAKindThm, which we will show in this section.

\subsection{Characterization of $\K$}
\label{KcharacterizationSec}
We start the work towards the proof of \allOfAKindThm\ with the following lemma.  It is essential for our further analysis, as will allow us to determine when columns of $\X$ lie in the same $\r$-dimensional subspace.

\begin{framed}
\begin{myLemma}[Characterization of $\K$]
\label{kCharLem}
Let $\O$ be a \basis\ of $\oi{i}$.  Then $\ki{i}=\ki{\ii}$ for every $\ii \in \I$.
\end{myLemma}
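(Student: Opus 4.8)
The plan is to carry out, in a sharpened form, the genericity argument behind \basisLem. First I would dispose of the trivial case: if $\O=\{\oi{i}\}$ then $\I=\{\i\}$ and there is nothing to prove, so assume from now on that $\O$ is a non-trivial \basis\ of $\oi{i}$; in particular $\oi{i}\notin\O$, and, exactly as at the start of the proof of \basisLem, $\oi{i}\subset\J$. By \basisLem, $\m=\n+\r$, so the whole argument takes place inside $\R^{\J}$, a space of dimension exactly $\n+\r$.

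The structure is to confront two facts. Since $\O$ is \independentO, the $\n$ rows $\{\ai{\ii}\}_{\ii\in\I}$ are linearly independent, and since $\A$ has its nonzero columns precisely in $\J$ they stay independent after restriction to those columns; hence the set $V:=\{v\in\R^{\J}:\<v,\ai{\ii}\>=0\ \forall\,\ii\in\I\}$ --- equivalently, the $\J$-restriction of $\UU$, i.e.\ all $v$ with $v_{\oi{\ii}}\in\sstaroi{\ii}$ for every $\ii\in\I$ --- is an $\r$-dimensional subspace of $\R^{\J}$, literally the kernel of the rank-$\n$ matrix obtained by restricting $\A$ to the columns in $\J$. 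On the other hand, because $\oi{i}$ is \dependento\ on $\O$, the row $\ai{i}$ lies in $\spn\{\ai{\ii}:\ii\in\I\}$, so $\<v,\ai{i}\>=0$ and hence $v_{\oi{i}}\in\sstaroi{i}$ for every $v\in V$. Thus $V$ is a \emph{single} $\r$-dimensional subspace of $\R^{\J}$ whose restriction to each $\oi{\ii}$ ($\ii\in\I$) equals $\sstaroi{\ii}$ and whose restriction to $\oi{i}$ is contained in $\sstaroi{i}$. (For almost every $\Sstar$ this last containment is an equality, by a dimension count in the spirit of \LCor\ and \aoWithzerosLem\ applied to the submatrix of $\A$ obtained by deleting the columns of one $\oi{\ii}$, but only the containment will actually be needed below.)

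Now the proof reduces to a measure-zero statement. By the previous paragraph $V$ is the kernel of the restricted matrix, hence a rational function of the hyperplanes $\{\sstaroi{\ii}\}_{\ii\in\I}$, and therefore of the subspaces $\{\sstark{\ki{\ii}}\}_{\ii\in\I}$; the relation $V|_{\oi{i}}\subset\sstaroi{i}$ then becomes a polynomial relation among the subspaces indexed by $\I\cup\{\i\}$. If $\ki{i}$ and all the $\ki{\ii}$, $\ii\in\I$, coincide, say all equal $\k$, this relation holds identically: by non-degeneracy $\sstark{\k}$ restricted to $\J$ is an $\r$-dimensional subspace that restricts to $\sstaroi{\ii}$ on every $\oi{\ii}$, so it must equal $V$, and then $V|_{\oi{i}}=\sstark{\k}|_{\oi{i}}=\sstaroi{i}$ automatically. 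So it suffices to show that whenever the indices do \emph{not} all coincide, this relation defines a \emph{proper} subvariety of the admissible configurations; combined with \existsNonDegenerateAss\ and the standing convention that statements need only hold for almost every $\Sstar$, this forces $\ki{i}=\ki{\ii}$ for every $\ii\in\I$. The mechanism for this step would mirror \basisLem\ closely: write the minimal dependence $\ai{i}=\sum_{\ii\in\I}\beta_{\ii}\ai{\ii}$ with every $\beta_{\ii}\neq0$, expand the resulting polynomial by cofactors, and exhibit one entry of some $\aoi{\ii}$ or of $\aoi{i}$ with respect to which it is not identically zero.

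The step I expect to be the genuine obstacle is exactly that non-vanishing claim when $\K$ is \emph{mixed} on $\I\cup\{\i\}$, i.e.\ when some $\ki{\ii}$ equal $\ki{i}$ and some do not: then $V$ itself depends on $\sstark{\ki{i}}$ through the agreeing indices, so one cannot simply vary $\sstark{\ki{i}}$ to destroy the relation. I expect to handle this with a matroid-exchange/induction argument: pick $\ii_0\in\I$ with $\ki{\ii_0}\neq\ki{i}$, use $\beta_{\ii_0}\neq0$ to solve for $\ai{\ii_0}$ and conclude that $\oi{\ii_0}$ is \dependento\ on $\O'':=(\O\setminus\{\oi{\ii_0}\})\cup\{\oi{i}\}$, which one checks is again \independentO\ of size $\n$; then pass to a \basis\ of $\oi{\ii_0}$ inside $\O''$ --- it contains $\oi{i}$ --- and apply a suitable induction to force $\ki{\ii_0}=\ki{i}$, a contradiction. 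Making the induction bottom out cleanly, in particular handling the case where that sub-\basis\ has the same size as $\O$ itself, is the delicate point, and is where I would concentrate the effort.
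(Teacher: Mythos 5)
Your proposal follows essentially the same route as the paper: the paper's entire proof consists of writing the minimal dependence $\ai{i}=\sum_{\ii\in\I}\beta_{\ii}\ai{\ii}$ with every $\beta_{\ii}\neq 0$, observing that each $\ai{\ii}$ is a nonzero function of $\oi{\ii}$ and $\sstari{\ii}$ alone, and asserting that for almost every $\Sstar$ such an equality can only hold if $\sstari{i}=\sstari{\ii}$ for every $\ii\in\I$. The mixed-$\K$ difficulty you isolate is genuine but is dispatched in the paper by that single genericity sentence, so your additional machinery (the restricted kernel $V$ and the exchange induction) goes beyond, rather than diverges from, the published argument.
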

\end{framed}

\begin{proof}
Let $\O$ be a \basis\ of $\oi{i}$.  Since $\ai{i}$ is minimally linearly dependent on $\A$ by definition, we may write
\begin{align*}
\ai{i} = \sum_{\ii \in \I} \beta_{\ii} \ai{\ii},
\end{align*}
where $\beta_{\ii} \neq 0$ for every $\ii \in \I$.  On the other hand, $\ai{i}$ is a non-zero function of $\oi{i}$ and $\sstari{i}$, say $f_\i(\oi{i},\sstari{i})$.  Similarly, $\ai{\ii}$ is a non-zero function of $\oi{\ii}$ and $\sstari{\ii}$, say $f_{\ii}(\oi{\ii},\sstari{\ii})$.  We thus have that
\begin{align*}
f_\i(\oi{i},\sstari{i}) = \sum_{\ii \in \I} \beta_{\ii} f_{\ii}(\oi{\ii},\sstari{\ii}).
\end{align*}
The subspaces in $\Sstar$ keep no relation between each other for almost every $\Sstar$.  Thus, almost surely, the only way that these equality can hold is iff $\sstari{i}=\sstari{\ii}$ for every $(\i,\ii)$, i.e., iff $\ki{i}=\ki{\ii}$ for every $\ii \in \I$.
\end{proof}

Of comparable importance is the converse of the previous lemma.  It will allow us to determine when columns of $\X$ {\em might} lie in different subspaces.  This, together with our former result give us a complete way to characterize $\K$.

\begin{framed}
\begin{myLemma}[Converse characterization of $\K$]
\label{kCharConverseLem}
Let $\oi{i}$ be \independento\ of $\O$.  Then $\ki{i}$ {\em might} be different from $\ki{\ii}$ for every $\ii \in \I$.
\end{myLemma}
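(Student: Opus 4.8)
The plan is to read ``$\ki{i}$ \emph{might} be different from $\ki{\ii}$'' as a non-identifiability statement --- ``there is a setup satisfying \rDimensionAss--\existsNonDegenerateAss, with the same observation patterns, in which $\ki{i}\neq\ki{\ii}$ for every $\ii\in\I$'' --- and then to \emph{construct} such a setup rather than derive any equation. Starting from the setup we are implicitly handed, a valid one in which $\oi{i}$ is \independento\ of $\O$ (so $\ai{i}$ is linearly independent of the rows of $\A$), I would modify only the single vector $\xii{i}$: reassign it to a fresh generic $\r$-dimensional subspace $S'$ adjoined to $\Sstar$ (permissible, since \rDimensionAss\ places no bound on $\KK$), leaving every $\sstari{\ii}$ with $\ii\in\I$ --- hence the matrix $\A$, its rank, and $\L$ --- untouched. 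Because $S'$ is distinct from all $\sstari{\ii}$, this immediately yields $\ki{i}\neq\ki{\ii}$ for every $\ii\in\I$, so everything hinges on verifying that the new setup is still valid.

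The only assumption that could fail is \existsNonDegenerateAss, so the heart of the argument is to show that some non-\degenerate\ $\r$-dimensional subspace still \fitsO\ $\O\cup\oi{i}$. The key step will be the observation that reassigning $\xii{i}$ does not change the genericity of $\ai{i}$ relative to $\A$: as $\sstari{i}$ ranges over all $\r$-dimensional subspaces of $\R^\d$, its restriction $\sstaroi{i}$ ranges over all hyperplanes of $\R^{\r+1}$, so the orthogonal row $\ai{i}$ ranges over \emph{every} direction supported on $\oi{i}$; lying in the row space of $\A$ is a proper Zariski-closed condition on that direction --- proper precisely because the original $\ai{i}$, itself generic, lay outside that row space --- so the fresh generic $\ai{i}$ avoids it too. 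Hence appending $\ai{i}$ to $\A$ still raises the rank to $\L+1$, so $\mathscr{U}_{\O\cup\oi{i}}$, the kernel of that matrix, still has dimension $\d-\L-1$; by \dimUULem\ and \dimUUrCor\ this was at least $\r$ in the original valid setup (equivalently $\L+1\leq\d-\r$), and that inequality is unchanged, so the dimension is at least $\r$ here as well. For almost every $\Sstar$ this subspace contains a non-\degenerate\ $\r$-dimensional $\s$, and arguing exactly as in \UUfitsOCor\ ($\soi{\ii}\subset\sstaroi{\ii}$ with both $\r$-dimensional, hence equal, for every $\ii$, and likewise $\soi{i}=\sstaroi{i}$), such $\s$ \fitsO\ $\O\cup\oi{i}$. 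Assumptions \sstarNonDegenerateAss--\unionOOAss\ are untouched, so the new setup is valid and realizes $\ki{i}\neq\ki{\ii}$ for every $\ii\in\I$; a concrete instance of exactly this ambiguity is already visible in \allOfAKindEga.

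The step I expect to be the real obstacle is the middle one: making rigorous that ``$\oi{i}$ \independento\ of $\O$'' is a feature of the \emph{observation patterns} that is insensitive to which subspace $\xii{i}$ is drawn from, so that $\ki{i}\neq\ki{\ii}$ is just as compatible with the data as $\ki{i}=\ki{\ii}$ would be. That is where the dominance of the restriction map onto $\mathrm{Gr}(\r,\r+1)$, the openness of linear independence, and the ``almost every $\Sstar$'' convention of \textsection\ref{resultsSec} all have to be combined carefully; once that is settled, the rank count, the dimension bound, and the fitting step are routine and parallel \UUfitsOCor\ and the proof of \uniquenessThm.
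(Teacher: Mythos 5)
Your proposal is correct in substance, but it takes a genuinely different --- and considerably more ambitious --- route than the paper. The paper's entire proof is two sentences: since the claim is only that $\ki{i}$ \emph{might} differ from $\ki{\ii}$, it declares that an example suffices and points to \allOfAKindEga\ (two vectors in $\R^3$ with $\r=1$ lying in different lines, yet admitting a common fitting line). Your argument instead shows, for an \emph{arbitrary} $\O$ with $\oi{i}$ \independento\ of it, that reassigning $\xii{i}$ to a fresh generic subspace replaces $\ai{i}$ by a generic direction supported on $\oi{i}$ that still avoids the row space of $\A$, so the augmented matrix keeps rank $\L+1$, the kernel keeps dimension at least $\r$, and \existsNonDegenerateAss\ survives --- hence the conflicting assignment $\ki{i}\neq\ki{\ii}$ is always realizable. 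This buys real generality: a single $3\times 2$ example literally establishes the ``might'' only for that one configuration, whereas the lemma is later invoked (in \allOfAKindLem) for arbitrary $\d$, $\r$ and $\O$, and your Zariski-openness argument is essentially what is needed to make that invocation airtight within the paper's ``almost every $\Sstar$'' convention. One caveat, which you share with the paper rather than introduce: your verification of \existsNonDegenerateAss\ covers $\O\cup\oi{i}$, but if $\OO$ strictly contains $\O\cup\oi{i}$ the reassignment must also preserve fitting against the remaining sets of $\OO$ (the same rank count works provided $\oi{i}$ is not \redundant\ in $\OO$), so the ``might'' is most cleanly read as relative to the information carried by $\O\cup\oi{i}$ alone.
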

\end{framed}

\begin{proof}
Since we only need to show that $\ki{i}$ {\em might} be different from $\ki{\ii}$ for every $\ii \in \I$, it suffices an example.  Take \allOfAKindEga.
\end{proof}

\subsection{All you need is $\d-\r$ of a kind}
\label{allYouNeed2Sec}
We now know from \kCharLem\ that the columns of $\X$ corresponding to a set and its \basis\ belong to the same subspace.  We also know from \uniquenessThm\ that there is only one $\r$-dimensional subspace that \fitsO\ $\OO$ if $\OO$ contains an \independentO\ set of size $\d-\r$.  Combining these two ideas we obtain the following lemma, which intuitively tells us that if we find a set with a \basis\ of size $\d-\r$, then {\em all} columns correspond to the same subspace.  Conversely, it tells us that if there is one set for which we cannot find a \basis\ of size $\d-\r$, the columns of $\X$ {\em might} belong to different subspaces.

\begin{framed}
\begin{myLemma}[All of a kind]
\label{allOfAKindLem}
$\ki{i}=\ki{\ii}$ for every $(\i,\ii)$ if $\exists$ $\o$ with a \basis\ of size $\d-\r$.
Conversely, $\ki{i}$ {\em might} be different from $\ki{\ii}$ for some $(\i,\ii)$ if $\nexists$ \basis\ of size $\d-\r$ for some $\o$.
\end{myLemma}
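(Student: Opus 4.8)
The plan is to prove the two assertions separately. For the first (forward) statement I would turn a \basis\ of size $\d-\r$ into a rigidity conclusion: \kCharLem\ and Corollary~\ref{uniquenessIndependenceCor} pin down \emph{which} subspace does the fitting, and then a genericity step about restrictions of subspaces forces all the $\ki{i}$ to coincide. The converse is an existence claim, so it suffices to exhibit one admissible configuration in which the columns genuinely lie in distinct subspaces.

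\emph{Forward direction.} Assume $\O$ is a \basis\ of $\oi{i}$ with $|\O|=\d-\r$. First, \kCharLem\ applied to $\O$ and $\oi{i}$ gives $\ki{\ii}=\ki{i}$ for every $\ii\in\I$; since all columns indexed by $\I$ then lie in $\sstari{i}$, trivially $\sstari{i}$ \fitsO\ $\O$. Second, a \basis\ is \independentO\ by definition, so $\O$ is an \independentO\ set of size $\d-\r$, and Corollary~\ref{uniquenessIndependenceCor} gives exactly one $\r$-dimensional subspace that \fitsO\ $\O$. Third, by \existsNonDegenerateAss\ there is a non-\degenerate\ $\r$-dimensional $\s$ that \fitsO\ $\OO$, hence \fitsO\ $\O$; by the uniqueness just established, $\s=\sstari{i}$. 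Fourth, since $\s=\sstari{i}$ \fitsO\ $\OO$, \fittingOIsFuncRmk\ gives $\soi{j}=\sstaroi{j}$ for every $\j$, i.e.\ the restriction of $\sstari{i}$ to $\oi{j}$ equals the restriction of $\sstari{j}$ to $\oi{j}$; for almost every $\Sstar$ two distinct members of $\Sstar$ cannot agree after restriction to any coordinate set of size $\r+1$ (this is one algebraic relation that is not identically satisfied, over finitely many coordinate sets and index pairs), so $\sstari{j}=\sstari{i}$, i.e.\ $\ki{j}=\ki{i}$. As $\j$ was arbitrary, $\ki{i}=\ki{\ii}$ for every $(\i,\ii)$.

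\emph{Converse direction.} Suppose no $\o$ admits a \basis\ of size $\d-\r$. Then $\d>\r+1$: otherwise $\d-\r=1$ and every $\oi{i}=\{1,\dots,\d\}$ is a trivial \basis\ of itself of size $1=\d-\r$. Fix a generic non-\degenerate\ $\r$-dimensional subspace $\s$. For each $\i$ the set $\{\u\in\R^\d:\uoi{i}\in\soi{i}\}$ is a hyperplane of $\R^\d$ containing $\s$, and since $\d-1>\r$ it contains infinitely many $\r$-dimensional subspaces; pick $\sstari{i}$ among them, non-\degenerate, so that the $\sstari{i}$ are not all equal (e.g.\ $\sstari{1}\neq\s$ and $\sstari{i}=\s$ otherwise). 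By construction $\soi{i}=\sstaroi{i}$ for every $\i$, so $\s$ \fitsO\ $\OO$ and \existsNonDegenerateAss\ holds; assumptions \textbf{A1}--\textbf{A5} hold trivially; and $\ki{i}\neq\ki{\ii}$ for some $(\i,\ii)$. Since the statement only asserts that $\ki{i}$ \emph{might} differ from $\ki{\ii}$, exhibiting this one configuration completes the proof.

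\emph{Expected main obstacle.} The delicate part is the genericity bookkeeping, not any single computation. In the forward direction I must check that ``$\O$ is a \basis\ of $\oi{i}$ of size $\d-\r$'' coincides, for almost every $\Sstar$, with the combinatorial condition of \independenceLem\ (so that \kCharLem\ is available), and that the closing step discards only a null set of $\Sstar$. In the converse direction the point is that the constructed configuration really is admissible --- $\s$ and each $\sstari{i}$ can be taken non-\degenerate\ while still restricting correctly --- and that this does not clash with the forward direction: it cannot, since the constructed $\Sstar$ is non-generic and the forward statement is an ``almost every $\Sstar$'' statement. With these in place the lemma follows, and \allOfAKindThm\ will follow by combining it with \uniquenessThm\ and the elementary fact that $\OO$ contains a set of size $\d-\r+1$ satisfying \eqref{allOfAKindThmEq} exactly when some $\o$ admits a \basis\ of size $\d-\r$.
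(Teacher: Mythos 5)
Your forward direction is essentially the paper's own argument: apply \kCharLem\ to the \basis\ to equalize the $\ki{\ii}$'s over $\I$, invoke Corollary~\ref{uniquenessIndependenceCor} to get uniqueness of the fitting subspace, and close with the genericity observation that two distinct members of $\Sstar$ almost surely do not have equal restrictions to any $\r+1$ coordinates. That part is correct and matches the paper.

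The converse is where there is a genuine gap. Your construction \----fix a generic $\s$ first, then choose each $\sstari{i}$ inside the hyperplane $\uui{i}$ determined by $\s$\---- never uses the hypothesis that no $\o$ has a \basis\ of size $\d-\r$; whenever $\d>\r+1$ it produces a non-constant $\K$ admitting a fitting subspace for \emph{every} $\OO$, including those satisfying the forward hypothesis. You notice this and dismiss it because the constructed $\Sstar$ is non-generic, but that is exactly the problem: under the standing convention that all statements hold for almost every $\Sstar$, a counterexample confined to a null set of $\Sstar$'s does not defeat an ``almost every $\Sstar$'' guarantee, so it cannot establish necessity. The necessity claim only has content if, when the combinatorial condition fails, a \emph{non-null} family of configurations with $\ki{i}\neq\ki{\ii}$ still admits an $\r$-dimensional fitting subspace; and that is precisely where the hypothesis must enter, because reassigning columns to distinct \emph{generic} subspaces changes the rows $\ai{i}$ of $\AA$, and whether $\dim\UU=\d-\L$ remains $\geq\r$ afterwards is controlled by the dependence structure of $\OO$. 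The paper therefore argues structurally: it forms the set $\O$ of all elements of all \bases\ of $\o$, shows by an exchange argument that every element of $\O$ is \independento\ of $\O^\c$, and only then invokes \kCharConverseLem, whose justification is the \emph{generic} two-subspace configuration of \allOfAKindEga\ in which $\sstari{1}$ and $\sstari{2}$ are unrelated and $\UU=\uui{1}\cap\uui{2}$ is still $\r$-dimensional. To repair your proof you would need to choose the subspaces for the two independent blocks generically and independently, and verify that the absence of a size-$(\d-\r)$ \basis\ is what keeps $\L\leq\d-\r$ in that perturbed configuration.
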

\end{framed}

\begin{proof}
($\Leftarrow$)  Suppose $\exists$ $\o$ with a \basis\ $\O$ of size $\d-\r$.  By \kCharLem, $\ki{i}=\ki{\ii}$ for every $\ii \in \I$.  Then $\sstari{i}$ clearly \fitsO\ $\O$.  Since $\O$ is \independentO\ and $\n=\d-\r$, by Corollary \ref{uniquenessIndependenceCor} there is only one $\r$-dimensional subspace that \fitsO\ $\O$, so $\sstari{i}$ is the only $\r$-dimensional subspace that \fitsO\ $\O$.  Finally, observe that $\sstari{i}$ cannot \fitO\ every $\O$ unless $\ki{i}=\ki{\ii}$ for every $\ii \in \I$.

($\Rightarrow$)  Suppose $\nexists$ $\o$ with \basis\ of size $\d-\r$.

Let $\O$ be the set of sets that $\o$ can be \dependento\ on, i.e.
\begin{align*}
\text{$\O = \{\bb{\o} \in \bb{\O} : \bb{\O} \subset \OO$ is a \basis\ of $\o \}$}.
\end{align*}
We will show that $\ki{i}$ {\em might} be different for the elements of $\I$ and the elements of $\I^\c$.  By \kCharConverseLem, it suffices to show that every $\bb{\o} \in \O$ is \independento\ of $\O^\c$.

Let $\bb{\o} \in \O$, and $\bb{\O}$ be a \basis\ of $\o$ that contains $\bb{\o}$.  Suppose for contradiction that $\bb{\o}$ is \dependento\ on $\O^\c$ and let $\bar{\O} \subset \O^\c$ be a \basis\ of $\bb{\o}$.

By our definition of \basis, $\o$ is not \dependento\ on $\bb{\O} \backslash \bb{\o}$.  Nevertheless, since $\bar{\O}$ is a \basis\ of $\bb{\o}$, $\o$ is \dependento\ on $(\bb{\O} \backslash \bb{\o}) \cup \bar{\O}$.  This implies that there is a \basis\ of $\o$ in $(\bb{\O} \backslash \bb{\o}) \cup \bar{\O}$ that contains {\em at least} one set of $\bar{\O}$, say $\bar{\o}$.  Then $\bar{\o}$ belongs to $\O$.  Since $\bar{\O} \subset \O^\c$, $\bar{\o}$ belongs to $\O^\c$ as well, which is a contradiction.

This implies that $\bb{\o}$ is \independento\ of $\O^\c$.  Since $\bb{\o}$ was arbitrary, we conclude that every $\bb{\o} \in \O$ is \independento\ of $\O^\c$.  Thus, by \kCharConverseLem, $\ki{i}$ {\em might} be different from $\ki{\ii}$ for $i \in \I$ and $\ii \in \I^\c$.
\end{proof}

\begin{myRemark}
\label{oneThenAllRmk}
\normalfont
This lemma implies that {\em if} there is an $\o$ with a \basis\ of size $\d-\r$, then {\em every} $\o$ has a \basis\ of size $\d-\r$.  Conversely, {\em if} there is an $\o$ with no \basis\ of size $\d-\r$, then {\em no} $\o$ has a \basis\ of size $\d-\r$.  In other words, we have the following.
$\blacksquare$
\end{myRemark}

\begin{framed}
\begin{myCorollary}[One, then All]
\label{oneThenAllCor}
There exists an $\o$ with a \basis\ of size $\d-\r$ iff every $\o$ has a \basis\ of size $\d-\r$.
\end{myCorollary}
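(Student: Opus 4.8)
The plan is to obtain this corollary as a purely logical consequence of Lemma~\ref{allOfAKindLem}, exactly as announced in Remark~\ref{oneThenAllRmk}: the linear-algebraic substance is already spent in Lemma~\ref{allOfAKindLem} and \kCharLem, and what remains is only to chain the two halves of Lemma~\ref{allOfAKindLem} through the intermediate statement ``$\ki{i}=\ki{\ii}$ for every $(\i,\ii)$.''

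First I would dispatch the easy direction: if every $\o$ has a \basis\ of size $\d-\r$, then since $\OO$ is nonempty there is at least one $\o$ with a \basis\ of size $\d-\r$. For the converse, suppose some $\o$ admits a \basis\ $\bb{\O}$ of size $\d-\r$. Applying \kCharLem\ to $\bb{\O}$ (viewed as a \basis\ of that $\o$) gives that the columns of $\X$ corresponding to the sets in $\bb{\O}$ and to $\o$ itself all lie in a single subspace; feeding this into the ($\Leftarrow$) half of Lemma~\ref{allOfAKindLem} --- which, via Corollary~\ref{uniquenessIndependenceCor} and \existsNonDegenerateAss, propagates ``one common subspace'' from a single $(\d-\r+1)$-element block to all of $\X$ --- yields $\ki{i}=\ki{\ii}$ for \emph{every} $(\i,\ii)$. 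Now I would read the ($\Rightarrow$) half of Lemma~\ref{allOfAKindLem} in contrapositive form: were there some $\o$ with no \basis\ of size $\d-\r$, that lemma would permit $\ki{i}\neq\ki{\ii}$ for some $(\i,\ii)$, contradicting the equality just forced. Hence every $\o$ has a \basis\ of size $\d-\r$, and composing the two implications closes the argument.

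The only delicate point --- and the place where I expect whatever friction there is --- is turning the ``$\ki{i}$ \emph{might} be different'' clause of Lemma~\ref{allOfAKindLem} into an honest contrapositive: one must be careful that, under the corollary's hypothesis, the equality $\ki{i}=\ki{\ii}$ is \emph{forced} by $\OO$ rather than merely happening to hold for the given data, and one should revisit the proof of Lemma~\ref{allOfAKindLem} to make sure its borderline cases (for instance, when the collection of $\o$'s appearing in \bases\ of a given $\oi{i}$ already exhausts $\OO$) are consistent with this reading. Once that quantifier bookkeeping is pinned down, nothing further is needed, since every step above is either a trivial observation or an implication already established earlier in the document.
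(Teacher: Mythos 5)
Your proposal matches the paper's own (implicit) derivation: the paper offers no separate proof of Corollary~\ref{oneThenAllCor} beyond Remark~\ref{oneThenAllRmk}, which asserts exactly the chaining you describe --- the ($\Leftarrow$) half of Lemma~\ref{allOfAKindLem} forces $\ki{i}=\ki{\ii}$ for every $(\i,\ii)$ once one $\o$ has a \basis\ of size $\d-\r$, and the ($\Rightarrow$) half, read contrapositively, then rules out any $\o$ lacking such a \basis. The quantifier subtlety you flag about the ``might'' clause is real but is inherited from the paper's own phrasing of Lemma~\ref{allOfAKindLem}, so your treatment is as rigorous as the source.
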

\end{framed}

\subsection{Bases Characterization}
\label{basisCharSec}
\allOfAKindLem\ tells us that the columns of $\X$ {\em must} belong to the same subspace iff there is an $\o$ with a \basis\ of size $\d-\r$.  The only remaining step towards the proof of \allOfAKindThm\ is to determine when will $\OO$ contain an $\o$ with such a \basis.  The following lemma makes use of \independenceLem\ to give us a characterization of \bases.  This is then used in \allOfAKindThm\ to determine when will $\OO$ contain an $\o$ with a \basis\ of size $\d-\r$.

\begin{framed}
\begin{myLemma}[Bases Characterization]
\label{basisCharLem}
Let $\o$ be given and $\O$ be an \independentO\ set.  Let $\bar{\O} = \O \cup \o$.  $\O$ is a \basis\ of $\o$ iff $\bar{\m}<\bar{\n}+\r$ and $\bb{\m} \geq \bb{\n}+\r$ for every $\bb{\O} \subsetneq \bar{\O}$.
\end{myLemma}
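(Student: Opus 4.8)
The plan is to reduce the whole statement to the characterization of \independentO\ sets (\independenceLem), using only two elementary facts besides it: a subset of an \independentO\ set is \independentO\ (immediate from \independenceLem), and if the rows indexed by a set $\O^*$ are linearly independent while $\a$ is not in their span, then the rows indexed by $\O^* \cup \{\o\}$ are linearly independent (elementary linear algebra). Since $\O$ is assumed \independentO\ and $\bar{\O} = \O \cup \o$, I work throughout with $\o \notin \O$, so that $\bar{\n} = \n + 1$ (a \basis\ containing $\o$ must be the trivial $\{\o\}$, which is immaterial for the applications); as elsewhere in the paper, the quantified sets are taken nonempty.

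For the direction ($\Rightarrow$), suppose $\O$ is a \basis\ of $\o$. Because $\o$ is \dependento\ on $\O$, the row $\a$ lies in the span of the rows of $\A$, hence $\bar{\O}$ is \dependentO, and by \independenceLem\ some nonempty subset of $\bar{\O}$ violates $\m \geq \n + \r$. I then claim that every proper subset $\bb{\O} \subsetneq \bar{\O}$ is \independentO: if $\bb{\O} \subseteq \O$ this holds because $\O$ is \independentO; and if $\o \in \bb{\O}$, then $\O^* := \bb{\O} \setminus \{\o\}$ is a proper subset of $\O$ (properness is forced since $\bb{\O} \neq \bar{\O}$), hence \independentO, and $\o$ is \independento\ of $\O^*$ because $\O$ is a \basis\ of $\o$, so $\a$ is not in the span of the rows indexed by $\O^*$ and therefore $\bb{\O}$ is \independentO. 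Granting the claim, \independenceLem\ gives $\bb{\m} \geq \bb{\n} + \r$ for every proper subset of $\bar{\O}$, so the only subset of $\bar{\O}$ that can violate the inequality is $\bar{\O}$ itself, i.e. $\bar{\m} < \bar{\n} + \r$, which is the first required condition. (Alternatively, one gets $\bar{\m} < \bar{\n} + \r$ at once from \basisLem\ plus the observation $\o \subseteq \J$ — else $\a$ would have a nonzero entry in a zero column of $\A$, contradicting that $\o$ is \dependento\ on $\O$ — which yields $\bar{\m} = \m = \n + \r = \bar{\n} + \r - 1$.)

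For the direction ($\Leftarrow$), assume $\bar{\m} < \bar{\n} + \r$ and $\bb{\m} \geq \bb{\n} + \r$ for every $\bb{\O} \subsetneq \bar{\O}$. Since $\O$ is \independentO\ by hypothesis, it remains to show that $\o$ is \dependento\ on $\O$ but not on any proper subset of $\O$. For the first part, $\bar{\m} < \bar{\n} + \r$ forces $\bar{\O}$ to be \dependentO\ by \independenceLem; in a linear dependence among the rows of the matrix of $\bar{\O}$, the coefficient of $\a$ cannot be zero, for otherwise the rows of $\A$ would be dependent, contradicting that $\O$ is \independentO; hence $\a$ lies in the span of the rows of $\A$, i.e. $\o$ is \dependento\ on $\O$. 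For the second part, suppose toward a contradiction that $\o$ is \dependento\ on some $\O^* \subsetneq \O$; then $\O^* \cup \{\o\}$ is \dependentO, while every subset of $\O^* \cup \{\o\}$ is a proper subset of $\bar{\O}$ (it omits some element of $\O \setminus \O^*$, which is different from $\o$), so by hypothesis each such subset satisfies $\m \geq \n + \r$, whence \independenceLem\ makes $\O^* \cup \{\o\}$ \independentO — a contradiction. Thus $\O$ is a \basis\ of $\o$.

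With \independenceLem\ in hand this is essentially bookkeeping about which sets are proper subsets of $\bar{\O}$; the one step needing real care is the last, where a putative dependence of $\o$ on a proper subset $\O^*$ has to be turned into a dependent set $\O^* \cup \{\o\}$ all of whose subsets are proper subsets of $\bar{\O}$, so that the proper-subset hypothesis — applied to those subsets — yields the contradiction and thereby encodes the minimality built into the notion of a \basis. The degenerate cases (the empty set, excluded by the nonemptiness convention; the trivial \basis\ $\{\o\}$, excluded by $\o \notin \O$) each take one sentence, and no new measure-zero or algebraic-geometry input is needed beyond what \independenceLem\ already carries.
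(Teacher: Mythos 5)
Your proof is correct and follows essentially the same route as the paper's: both directions are reduced to \independenceLem, with dependence of $\bar{\O}$ forced onto $\bar{\O}$ itself in the forward direction and minimality extracted from the proper-subset inequalities in the converse. You merely make explicit two steps the paper leaves implicit (why every proper subset of $\bar{\O}$ is \independentO, and why the coefficient of $\a$ in a dependence relation for $\bar{\O}$ must be nonzero), which is a welcome tightening rather than a deviation.
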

\end{framed}
\begin{proof}
($\Rightarrow$)  Suppose $\O$ is a \basis\ of $\o$.  Then $\bar{\O}=\O \cup \o$ is \dependentO.  By \independenceLem, $\bar{\m}<\bar{\n}+\r$.  Let $\bb{\O} \subsetneq \bar{\O}$.  By our definition of \basis, $\bb{\O}$ is \independentO.  Again, by \independenceLem, $\bb{\m} \geq \bb{\n}+\r$.  Since $\bb{\O}$ was arbitrary, we conclude that $\bb{\m} \geq \bb{\n}+\r$ for every $\bb{\O} \subsetneq \bar{\O}$, as desired.

($\Leftarrow$)  Assume that $\bb{\m} \geq \bb{\n}+\r$ for every $\bb{\O} \subsetneq \bar{\O}=\O \cup \o$.  This implies by \independenceLem\ that $\O$ is \independentO\ and $\o$ is not \dependento\ on any subset of $\O$.  Further assume that $\bar{\m}<\bar{\n}+\r$.  By \independenceLem, $\bar{\O}$ is \dependentO.  Then $\o$ is \dependento\ on $\O$, and so $\O$ is a \basis\ of $\o$.
\end{proof}

\subsection{Proof of \allOfAKindThm}
\label{proofAllOfAKindSec}
We are finally ready to give the proof of \allOfAKindThm, which comes directly as a consequence of Lemmas \ref{independenceLem}, \ref{allOfAKindLem} and \ref{basisCharLem}.  Notice that, in a nutshell, the condition of \allOfAKindThm\ is that $\OO$ contains a set $\o$ with a \basis\ $\bb{\O}$ of size $\d-\r$.

\begin{proof}
($\Rightarrow$) Assume $\exists$ $\bb{\O} \subset \OO$ of size $\d-\r+1$ such that $\m \geq \n+\r$ for every $\O \subsetneq \bb{\O}$.  Let $\o \in \bb{\O}$.  By \basisCharLem, $\bb{\O} \backslash \o$ is a \basis\ of $\o$ of size $\d-\r$.  By \allOfAKindLem, $\ki{i}=\ki{\ii}$ for every $(\i,\ii)$.  By the same arguments as in the proof of such lemma, $\sstari{i}$ is the only $\r$-dimensional subspace that \fitsO\ $\OO$.

($\Leftarrow$)  Assume $\nexists$ $\o$ with a \basis\ of size $\d-\r$.  By \allOfAKindLem\ $\ki{i}$ {\em might} be different from $\ki{\ii}$ for some $(\i,\ii)$.  Furthermore, $\OO$ {\em might} not even contain an \independentO\ set of size $\d-\r$, whence, by \uniquenessThm, there could be infinitely many $\r$-dimensional subspaces that \fitO\ $\OO$.  Moreover, even if $\OO$ contains such a set, implying by \independenceLem\ that there is only one $\r$-dimensional subspace that \fitsO\ $\OO$, since $\ki{i}$ {\em might} be different from $\ki{\ii}$ for some $(\i,\ii)$, such subspace {\em might} not even be equal to any of the subspaces in $\Sstar$.
\end{proof}

We conclude the section with a nice converse that comes as a direct consequence of \allOfAKindThm.

\begin{framed}
\begin{myCorollary}[Converse of \allOfAKindThm]
\label{converseAllOfAKindCor}
Assume $\OO$ satisfies the assumptions of \allOfAKindThm.  There exists no $\r$-dimensional that \fitsO\ $\OO$ iff $\ki{i} \neq \ki{\ii}$ for some $(\i,\ii)$.
\end{myCorollary}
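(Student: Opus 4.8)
The plan is to prove the two implications separately; each follows directly from \allOfAKindThm\ and the definitions in \textsection \ref{setupSec}, so there is essentially no computation to carry out.

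First I would dispatch the implication ``if no $\r$-dimensional subspace \fitsO\ $\OO$ then $\ki{i}\neq\ki{\ii}$ for some $(\i,\ii)$'' by contrapositive. Suppose $\ki{i}=\ki{\ii}$ for every $(\i,\ii)$, so that all columns of $\X$ lie in one subspace $\sstar\in\Sstar$. Being a member of $\Sstar$, $\sstar$ is $\r$-dimensional by \rDimensionAss; and since $\sstari{i}=\sstar$ for every $\i$ and $\sstar$ contains every vector of itself, $\sstar$ \fitsxo\ $\hatxi{i}$ for every $\xii{i}\in\sstari{i}$ and every $\i$, which by \fittingxoDef\ and \fittingODef\ is exactly the statement that $\sstar$ \fitsO\ $\OO$. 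Thus an $\r$-dimensional subspace fitting $\OO$ does exist, which is the contrapositive of what was claimed.

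For the converse, ``if $\ki{i}\neq\ki{\ii}$ for some $(\i,\ii)$ then no $\r$-dimensional subspace \fitsO\ $\OO$,'' I would argue by contradiction. Suppose some $\r$-dimensional $\s$ \fitsO\ $\OO$. By the standing hypothesis, $\OO$ contains an $\bb{\O}\subset\OO$ of size $\d-\r+1$ with $\m\geq\n+\r$ for every $\O\subsetneq\bb{\O}$, so $\s$ and $\OO$ together satisfy the hypotheses of \allOfAKindThm; applying it yields $\ki{i}=\ki{\ii}$ for every $(\i,\ii)$, contradicting the choice of $(\i,\ii)$. Hence no such $\s$ exists.

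I do not anticipate any genuine obstacle, since the corollary is little more than a repackaging of \allOfAKindThm. The single point to treat with care is the interplay with \existsNonDegenerateAss, which already posits the existence of a subspace fitting $\OO$; in reading this corollary that assumption should be regarded as suspended, so that the biconditional genuinely characterizes \emph{when}, under the combinatorial condition of \allOfAKindThm, a fitting $\r$-dimensional subspace exists, namely precisely when all the columns of $\X$ lie in a common subspace of $\Sstar$.
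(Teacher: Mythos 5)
Your proof is correct and matches the paper's intent: the paper offers no explicit proof, merely asserting the corollary is a direct consequence of Theorem \ref{allOfAKindThm}, and your two implications (the trivial direction that a common $\sstar$ fits $\OO$, and the application of Theorem \ref{allOfAKindThm} to any fitting $\s$ to force all $\ki{i}$ equal) are exactly the intended unpacking. Your remark that \existsNonDegenerateAss\ must be read as suspended here is a legitimate and worthwhile clarification, since otherwise one direction of the biconditional would be vacuous.
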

\end{framed}

\pagebreak
\section{Intuitively speaking}
\label{intuitivelySpeakingSec}
In this section we give some intuitive explanations of our results from \textsection \ref{uniquenessSec}, as well as the key ideas behind them.

The idea behind $\uu$ is that every $\r$-dimensional subspace $\s$ that \fitso\ $\o$ must satisfy $\so = \sstaro$.  Recall that $\hats$ is the projection of $\s$ onto $\Rdo{\omega}$, so essentially, the condition $\so = \sstaro$ is telling us that the projections of $\s$ and $\sstar$ onto $\Rdo{\omega}$ are the same, i.e., the projection onto $\Rdo{\omega}$ of every vector in $\s$ lies in $\hatsstar$.

\begin{figure}[H]
\centering
\includegraphics[width=4.5cm]{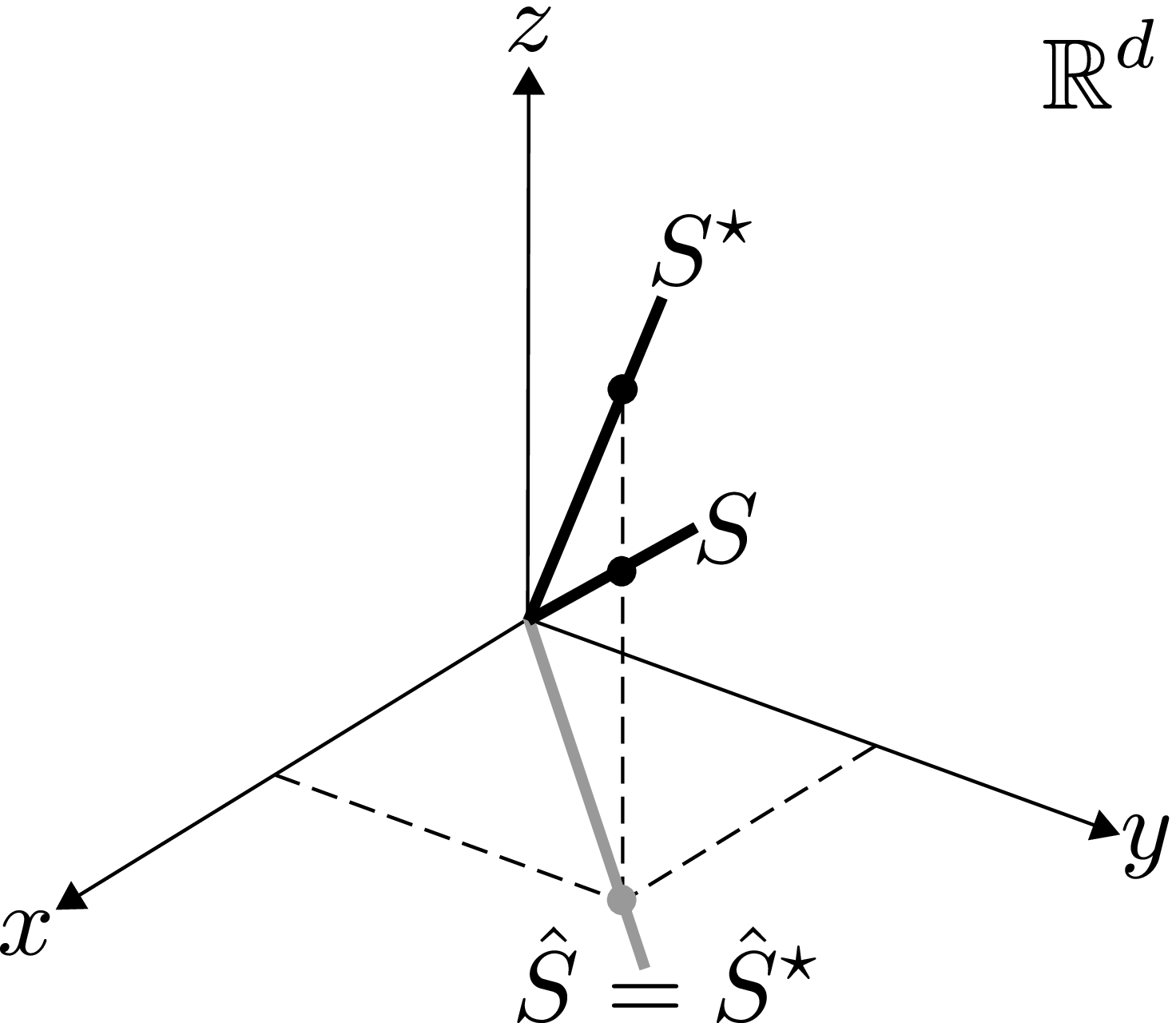}
\caption{The condition $\so = \sstaro$ is equivalent to saying that $\hats$ and $\hatsstar$, the projections of $\s$ and $\sstar$ onto $\Rdo{\omega}$, are the same.}
\label{intuitivelyProjectionFig}
\end{figure}

Essentially, $\uu$ characterizes all the subspaces that \fito\ $\o$ by characterizing all the vectors whose projection onto $\Rdo{\omega}$ lies in $\hatsstar$; these are precisely the vectors that satisfy $\uo \in \sstaro$.

Let us recall that the main goal of \textsection \ref{uniquenessSec} is to determine when there is only one $\r$-dimensional subspace that \fitsO\ $\O$.  Since all the $\r$-dimensional subspaces that \fitO\ $\O$ are contained in $\UU$, which is just $\bigcap_{\i \in \I} \uui{i}$, iff we can make sure that $\UU$ is an $\r$-dimensional subspace, we will be sure that there is only one $\r$-dimensional subspace that \fitsO\ $\O$.  In order to do so, we can characterize all the vectors $\u$ that lie in $\UU$.

The key intuitive idea to do so is that the entries of a vector of an $\r$-dimensional subspace are determined given only $\r$ of its entries.  Since every $\u \in \uu$ must \fitxo\ in $\sstar$, every $\uo$ must lie in $\sstaro$.  What that means is that one entry of every $\uo$ is constrained as a function of $\sstar$ and the other $\r$ entries of $\uo$.  More specifically for every $\u \in \uu$ and for any $\j \in \o$ and \phantomsection\label{jcDef}$\jc:=\o \backslash \j$, $\uj{j}$ is determined given $\uj{\jc}$ and $\sstar$.

\begin{figure}[H]
\centering
\includegraphics[width=5cm]{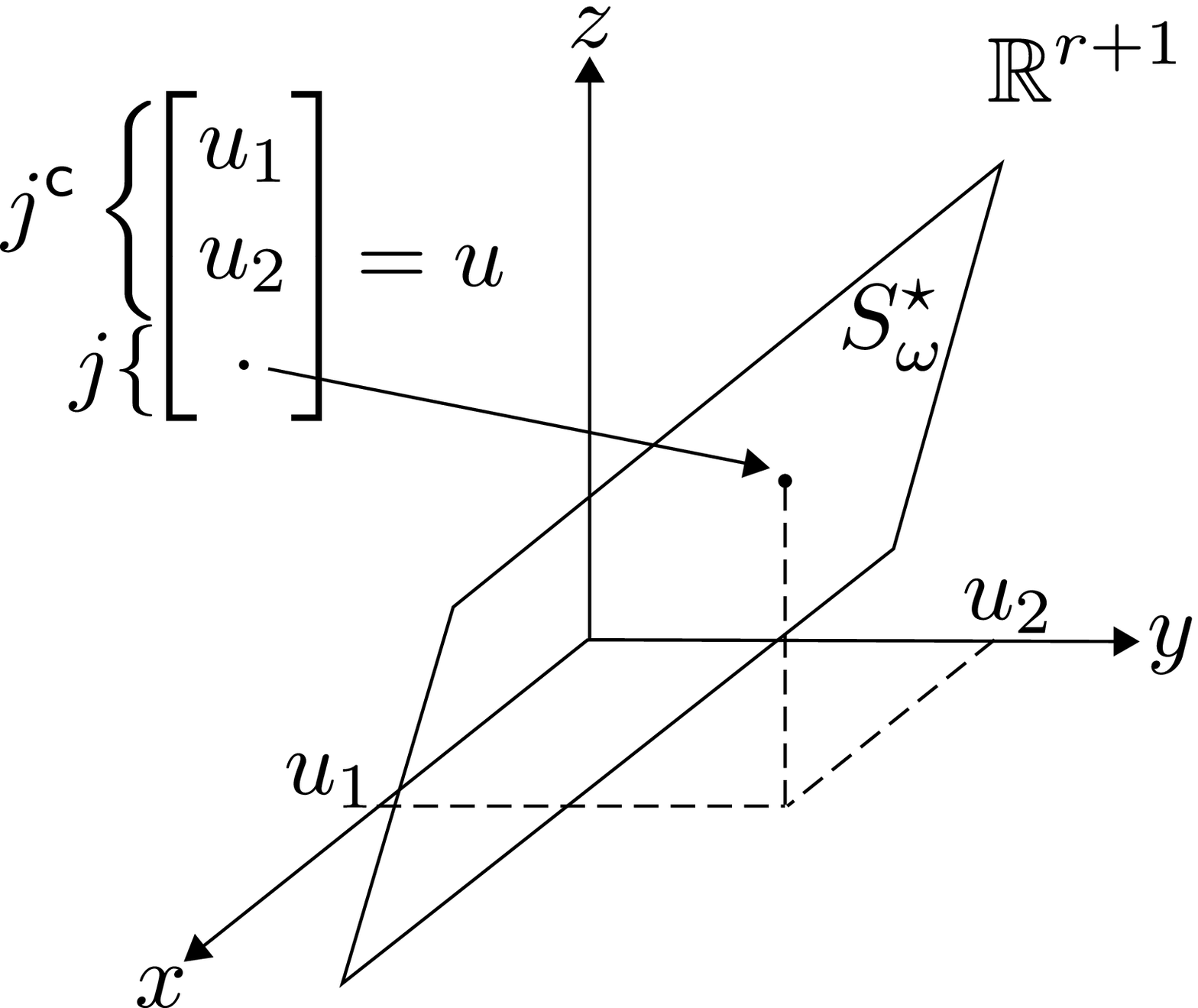}
\caption{Suppose $\r=2$.  One entry of every $\uo \in \uu$, say $\uj{j}$, is determined given the remaining two entries of $\uo$, say $\uj{\jc}$.}
\label{oneAtATimeFig}
\end{figure}

To see how exactly an entry of $\uo$ is constrained by the other $\r$ entries of $\uo$, observe that since $\uo$ must lie in $\sstaro$, $\u$ must satisfy
\begin{align*}
\uo = \Ustaro \gamma
\end{align*}
for some $\gamma \in \R^\r$.  Of course, we can rewrite this as
\begin{align*}
\left[\begin{matrix}
\uj{\jc} \\
\uj{j}
\end{matrix}\right] &= 
\left[\begin{matrix}
\Ustari{\jc} \\
\Ustari{j}
\end{matrix}\right]
\gamma.
\end{align*}
We can focus on the top block and solve for $\gamma$:
\begin{align*}
\gamma = (\Ustari{\jc})^{-1} \uj{\jc},
\end{align*}
where we know $(\Ustari{\jc})^{-1}$ exists, as $\sstar$ is non-\degenerate.  We then focus on the bottom block to obtain:
\begin{align}
\label{ujEq}
\uj{j} &= \underbrace{\Ustari{j} (\Ustari{\jc})^{-1}}_{\hat{\var{a}}} \uj{\jc},
\end{align}
where the only unknowns are $\uj{j}$ and $\uj{\jc}$, so we may write
\begin{align*}
\underbrace{
\left[\begin{matrix}
1 & -\Ustari{j} (\Ustari{\jc})^{-1}
\end{matrix}\right]}_{\aoT}
\left[\begin{matrix}
\uj{j} \\ \uj{\jc}
\end{matrix}\right] = 0,
\end{align*}
or even as
\begin{align*}
\a \u = 0,
\end{align*}
where $\a$ has the entries of $\ao$ in the positions of $\o$ and zeros elsewhere, i.e.
\begin{align*}
\underbrace{
\left[\begin{matrix}
 \hspace{.5cm} & 1 & \hspace{.5cm} & -\Ustari{j} (\Ustari{\jc})^{-1} & \hspace{.5cm}
\end{matrix}\right]}_{\begin{matrix}a \\ 1 \times \d \end{matrix}}
\underbrace{
\left[\begin{matrix}
 \\ \\ \uj{j} \\ \\ \\ \uj{\jc} \\ \\ \\
\end{matrix}\right]}_{\begin{matrix} \u \\ \d \times \r \end{matrix}} = 0,
\end{align*}
where the blank spaces represent zeros.

Notice that $\a$ is just as in \textsection \ref{oneAtATimeSec}, and as we mentioned there, $\a$ has exactly $\r+1$ non-zero entries in the positions of $\o$.  The $1$ will always be in the $\j$ position, and $-\Ustari{j} (\Ustari{\jc})^{-1}$ in the $\jc$ positions.  Also, $\a$ depends on $\sstar$, but the choice of the basis $\Ustar$ will only \----if anything\---- scale $\a$.  In other words, $\a\u=0$ will always describe the same system of equations, no matter the choice of $\Ustar$ and $\j$.

In conclusion, since $\j$ in our discussion was an arbitrary element of $\o$, we conclude that $\o$ constrains one entry of $\uo$, namely $\uj{j}$, as a function of $\sstar$ and the other entries of $\uo$, namely $\uj{\jc}$, through the equation $\a\u=0$.  By constraining such entry, we are making sure that $\uo$ is {\em aligned} with $\sstaro$, i.e., we are making sure that $\uo$ \fitsxo\ in $\sstaro$.

\begin{myExample}
\normalfont
With $\r=2$,
\begin{align*}
\o = \left[\begin{matrix} 
	\see \\
	\see \\
	\miss \\
	\see \\
	\miss \\
	\miss \\
	\miss
	\end{matrix}\right] \hspace{.5cm} \implies \hspace{.5cm}
u = \left[\begin{array} {cc}
	\multicolumn{2}{c}{\multirow{2}{*}{\Scale[1.5]{\uj{\jc}}}} \\ \\ \hline
	\\ \hline
	\multicolumn{2}{c}{\uj{j}} \\ \hline
	\\ \\ \\
	\end{array}\right]
\begin{array}{l}
	\\ \\ \\
	\text{$\longleftarrow$ Determined given $\uj{\jc}$ and $\sstar$.} \\
	\\
	\\
	\\
\end{array}
\end{align*}
$\blacksquare$
\end{myExample}

Under the setup of \textsection \ref{severalAtOnceSec}, each linearly independent $\ai{i}$ is giving us an linearly independent vector orthogonal to a distinct projection of $\sstari{i}$ onto a distinct $\Rdo{\omega_i}$.  Recall that $\hatsi{i}$ is the projection of $\s$ onto $\Rdo{\omega_i}$.  So when we constrain the vectors $\u \in \UU$ through multiple $\ai{i}$'s, essentially what we are doing is making sure that the projections of $\UU$ and $\sstari{i}$ onto $\Rdo{\omega_i}$ are the same {\em simultaneously} for every $\i$.

\begin{figure}[H]
\centering
\includegraphics[width=5cm]{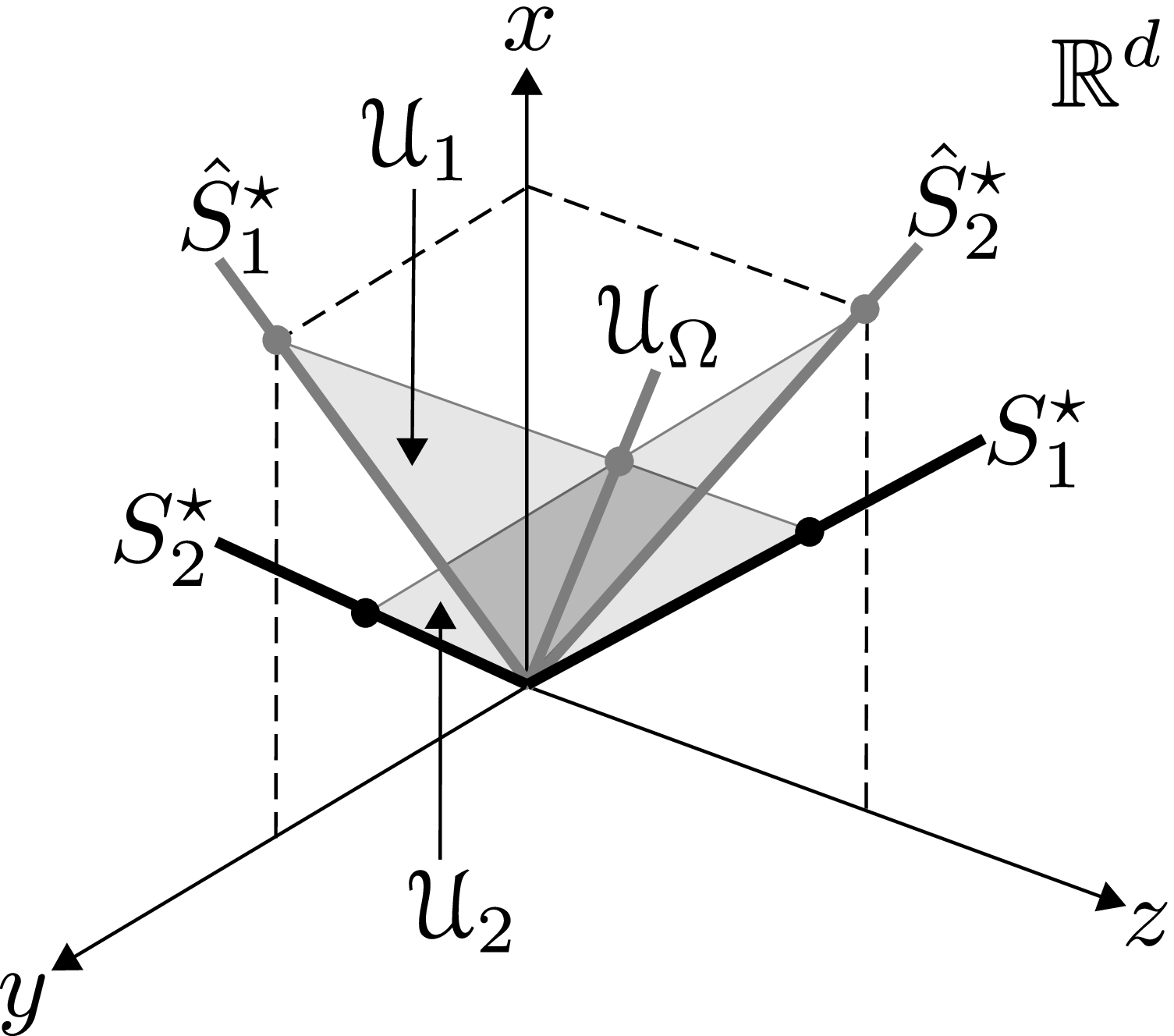}
\label{intuitiveUUFig}
\caption{With the same setup as in \fittingOEg, $\oi{1}$ would project onto the $(x,y)$-plane and $\oi{2}$ onto the $(x,z)$-plane.  So when we use $\A$ to constrain several entries of $\u$, basically what we are doing is making sure that the projection of $\UU$ and $\sstari{1}$ onto $\Rdo{\omega_1}$, and the projections of $\UU$ and $\sstari{2}$ onto $\Rdo{\omega_2}$ are the same.}
\end{figure}

Under the setup of this section, every $\u \in \UU$ must satisfy $\uoi{i} \in \sstaroi{i}$, for every $\i \in \I$.  Since each $\oi{i}$ constrains one entry of $\uo$, any $\u \in \UU$ will now have several constrained entries, determined as functions of other $\r$ entries of $\u$ through the equation $\A\u=0$, with $\A$ as defined in \textsection \ref{severalAtOnceSec}.  How many constrained entries will $\u \in \UU$ have?  As many as \independentO\ $\o$'s are contained in $\O$, knowing that there are at most $\d-\r$.

In other words, each \independentO\ $\oi{i}$ is constraining one entry of $\uoi{i}$, say $\uj{j_i}$, according to the other $\r$ entries of $\uoi{i}$, say $\uj{\jci{i}}$.  And no matter how many $\oi{i}$'s we have, there can be at most $\d-\r$ \independentO\ ones, i.e., we cannot constrain more than $\d-\r$ $\uj{j}$'s; this is formalized in \LCor.

\begin{myExample}
\normalfont
With $\r=2$,
\begin{align*}
\{\oi{1},\oi{2}\} = \left[\begin{matrix} 
	\see & \miss\\
	\see & \miss \\
	\see & \miss \\
	\miss & \miss \\
	\miss & \see \\
	\miss & \see \\
	\miss & \see
	\end{matrix}\right] \implies
\u = \left[\begin{array} {cc}
	\multicolumn{2}{c}{\multirow{2}{*}{\Scale[1.5]{\uj{\jci{1}}}}} \\ \\ \hline
	\multicolumn{2}{c}{\uj{\j_1}} \\ \hline
	\\ \hline
	\multicolumn{2}{c}{\multirow{2}{*}{\Scale[1.5]{\uj{\jci{2}}}}} \\ \\ \hline
	\multicolumn{2}{c}{\uj{\j_2}}
	\end{array}\right]
\begin{array}{l}
	\\ \\
	\text{$\longleftarrow$ $\j_1$.  Determined by $\sstaroi{1}$ and $\uj{\jci{1}}$.} \\
	\\
	\\ \\
	\text{$\longleftarrow$ $\j_2$.  Determined by $\sstaroi{2}$ and $\uj{\jci{2}}$.}
\end{array}
\end{align*}
$\blacksquare$
\end{myExample}

\subsection{Using Lemma \ref{independenceLem}}
\label{usingIndependenceLem}
Intuitively, what \textsection \ref{independenceSec} is telling us is that every \independento\ $\o$ constrains an entry of every $\u \in \UU$ out of the $\d-\r$ that may be constrained, while \textsection \ref{allYouNeedSec} says that iff we can constrain $\d-\r$ entries of $\u$, we will be guaranteed that there is only one $\r$-dimensional subspace that \fitsO\ $\O$.

In other words, all we need to do is find an \independentO\ set of size $\d-\r$.  \independenceLem\ then completes the picture by telling us that a set $\bb{\O}$ is \independentO\ iff $\m \geq \n+\r$ for every $\O \subset \bb{\O}$.

\begin{myExample}
\label{intuitionEg}
\normalfont
Consider the same setup as in \fittingOEg, we can see that $\OO$ is the only possible set with $\d-\r$ $\o$'s.  One can trivially verify that $\m \geq \n+\r$ for every $\O \subset \OO$, hence there is only one $\r$-dimensional subspace that \fitsO\ $\OO$.

Intuitively, what this means is that given whichever entry of $\u \in \UU$, one of the remaining entries of $\u$ is constrained according to $\sstaroi{1}$ and an other of the remaining entries of $\u$ is constrained according to $\sstaroi{2}$.

\begin{figure}[H]
\centering
\includegraphics[width=5cm]{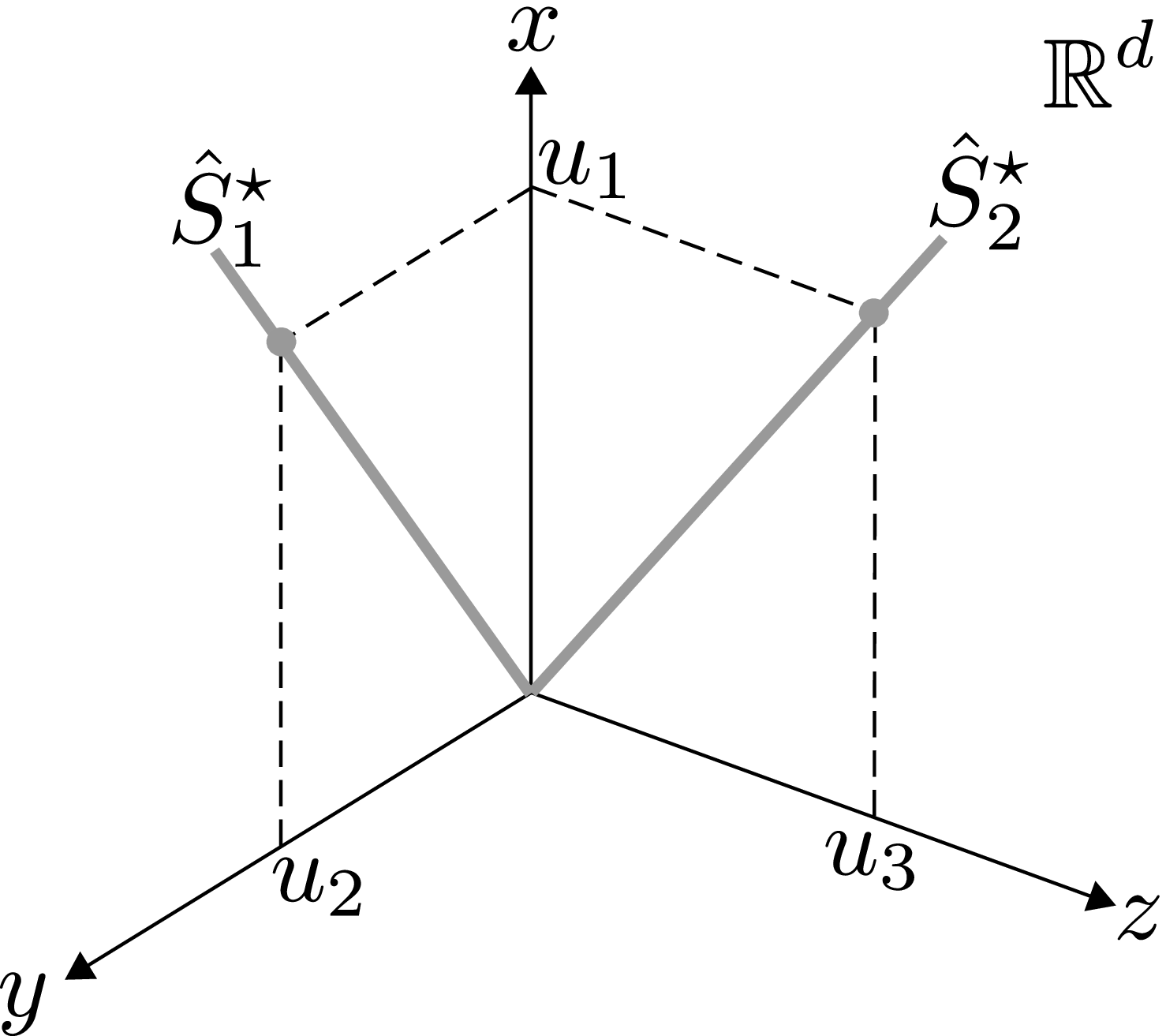}
\caption{Given any entry of $\u$, its remaining ones are constrained according to $\sstaroi{1}$ and $\sstaroi{2}$ in order to guarantee that $\uoi{i}$ \fitsO\ in $\sstaroi{i}$ for every $\i$.}
\label{severalFig}
\end{figure}

For example, if $\uj{1}$ is given, $\uj{2}$ and $\uj{3}$ are constrained according to $\sstaroi{1}$ and $\sstaroi{2}$.
\begin{align*}
\U = \left[\begin{matrix}
\uj{1} \\ \uj{2} \\ \uj{3}
\end{matrix}\right]
\begin{array}{l}
\longleftarrow \text{Given.} \\
\longleftarrow \text{Constrained according to $\sstaroi{1}$.} \\
\longleftarrow \text{Constrained according to $\sstaroi{2}$.}
\end{array}
\end{align*}
Explicitly, using \eqref{ujEq} we can obtain:
\begin{align*}
\uj{2} &= \Ustari{2} \uj{1} / \Ustari{1}, \\
\uj{3} &= \Ustari{3} \uj{1} / \Ustari{1}.
\end{align*}
$\blacksquare$
\end{myExample}

Conversely, $\OO$ is only constraining fewer than $\d-\r$ entries of $\u$ if there is no such $\O$, i.e., there would be least one {\em free} entry of $\u$ to choose arbitrarily, whence there would be infinitely many $\r$-dimensional subspaces that \fitO\ $\OO$.

\begin{myExample}
\label{infinitelyManySubspacesEg}
\normalfont
Consider $\r=2$ and
\begin{align*}
\OO = \left[\begin{matrix} 
	\see & \miss & \see & \miss \\
	\see & \see & \miss & \miss \\
	\see & \see & \see & \miss \\
	\miss & \see & \see & \see \\
	\miss & \miss & \miss & \see \\
	\miss & \miss & \miss & \see
	\end{matrix}\right].
\end{align*}
Again, $\OO$ is the only possible set with $\d-\r$ $\o$'s.  Nevertheless, $\OO$ does not satisfy the condition that $\m \geq \n+\r$ for every one of its subsets $\O$.  Specifically, we can see that $\O=\{\oi{1},\oi{2}, \oi{3} \}$ is a subset of $\OO$ with $\m<\n+\r$.  By \uniquenessThm, there are infinitely many $\r$-dimensional subspaces that \fitO\ $\OO$.

The fact that $\O=\{\oi{1},\oi{2}, \oi{3} \}$ satisfies $\m<\n+\r$ implies that one of the $\o$'s of $\O$ is \redundant.  This implies that there are at most $\d-\r-1$ \independentO\ $\o$'s in $\OO$.  Hence, there will be at least one {\em free} entry of $\u$ to choose arbitrarily \----for example, one of the last two\---- and so there are infinitely many $\r$-dimensional subspaces that \fitO\ $\OO$.
$\blacksquare$
\end{myExample}

\subsection{The idea behind Lemma \ref{independenceLem}}
\label{ideaBehindIndependenceLemSec}
The condition of \independenceLem\ is extremely simple and concrete; it depends only on the most elemental invariants of $\O$: essentially, cardinalities of its subsets.  In this section we will explain the intuition that led to this lemma.

One of the directions, that $\bb{\O}$ is \dependentO\ if $\m<\n+\r$ for some $\O \subset \bs{\O}$, is directly given by \LCor.  The idea behind it is that if we think of $\n$ as the number of equations in our system $\A\u=0$, and $\m$ as the number of unknowns, since $\dim \ker \A \geq \r$ (otherwise there would be no $\r$-dimensional subspace that \fitsO\ $\O$, as every $\r$-dimensional subspace that \fitsO\ $\O$ is contained in $\UU=\ker\A$), then $\A$ must have at most $\m-\r$ linearly independent equations.  If it has more than $\m-\r$ equations, then it must contain some linearly dependent ones, or else $\dim \ker \A < \r$.

The other direction, that if $\bb{\O}$ is dependent, it must contain a subset $\O$ with $\m <\n+\r$, is essentially given by \basisLem.  We will spend the rest of this section explaining the intuition behind the later.

On one hand, we have that if $\O$ is a \basis\ of $\o$, then $\o$ is minimally \dependento\ on $\O$.  On the other one, we know that $\o$ determines $\uj{j}$ as a function of $\uj{\jc}$ through $\a\u=0$, so $\o$ being \dependento\ on $\O$ intuitively means that $\O$ also determines $\uj{j}$ as a function of $\uj{\jc}$ and $\Sstar$ through $\A\u=0$.

The intuition behind \basisLem\ is that given $\uj{\jc}$, if $\O$ is a \basis\ of $\o$, $\uj{j}$ cannot be determined by $\O$ if there is at least one undetermined entry of $\u$ in the positions of $\J \backslash \jc$.  Why would this be true? Well, let $\uj{\jc}$ be given.  If $\O$ is a trivial \basis, it is clear that $\j$ is the only element of $\J \backslash \jc$, and $\uj{j}$ is therefore determined by $\O$ given $\uj{\jc}$.

If $\O$ is not a trivial \basis, since $\O$ is independent, $\O$ constrains $\n$ entries of $\uj{\J \backslash \jc}$.  Let $\bs{J}$ denote the positions of such entries.  Explicitly, each $\oi{i} \in \O$ determines one entry of $\uj{\bs{J}}$, say $\uj{j_i}$.  Fix this correspondence, and let $\jci{i}:=\oi{i} \backslash \j_\i$.

Since $\o$ is \dependento\ on $\O$, $\j \in \bs{J}$, i.e., there must be some $\oi{I_0} \in \O$, that determines $\uj{j}$ as a function of $\uj{\jci{I_0}}$.  But how could $\oi{I_0}$ determine $\uj{j}$ if there is an undetermined entry in $\uj{\jci{\I_0}}$?  It can't, unless some of the $\uj{\jci{i}}$'s cancel out, which almost surely won't happen.  Then almost surely $\uj{\jci{I_0}}$ must be determined, i.e., $\jci{I_0} \subset \bs{J}$.

Now it is convenient to define $J_0 := \j$, and for $t>0$, $I_t$ as the set of $\i$'s such that $\oi{i}$ determines an entry of $\u$ in the positions of $J_t$, and $J_t$ as the set of positions of $\u$ that have not yet been determined at time $t-1$ and belong to some of the $\oi{i}$ with $\i \in I_{t-1}$.  More precisely,
\begin{align*}
J_t &:= \left( \bigcup_{\i \in I_{t-1}} \oi{i} \right) \backslash \left( \bigcup_{\tau=0}^{t-1} J_{\tau} \right) \backslash \jc,\\
I_t &:= \{\i :\j_\i \in J_t \cap \bs{J} \}.
\end{align*}
To give some intuitive meaning to this, consider the Tanner graph defined by the set $\O$ with disjoint sets of {\em column} vertices $\I$ and {\em row} vertices $\J$, where there is an edge between row vertex $\j \in \J$ and column vertex $\i \in \I$ if $\j \in \oi{i}$, and there is a special correspondence between $\i \in \I$ and $\j_\i \in \bs{J}$ indicating that $\oi{i}$ determines $\uj{j_i}$.

For example, one such graph could look like the following, where bold edges represent the correspondence between $\i \in \I$ and $\j_\i \in \bs{J}$, dashed vertices represent $\jc$, and bold vertices represent the elements of $\bs{J}$.

\begin{center}
	\begin{tikzpicture}
		\node [title] (vdots) at (1,-7) {$\vdots$};
		\node [title] (j) at (0,.25) {$\J$};
		\node [title] (i) at (2,.25) {$\I$};
		
		\node [label] (j1) at (0,-.5){$$}; \node [vertex,dashed] (j1) at (0,-.5){};
		\node [label] (j2) at (0,-1) {$$}; \node [vertex,dashed] (j2) at (0,-1) {};
		\node [label] (j3) at (0,-1.5) {$\j$}; \node [vertex, line width=1.5pt] (j3) at (0,-1.5) {};
		\node [label] (j4) at (0,-2.5) {$$}; \node [vertex, line width=1.5pt] (j4) at (0,-2.5) {};
		\node [label] (j5) at (0,-3) {$$}; \node [vertex, line width=1.5pt] (j5) at (0,-3) {};
		\node [label] (j6) at (0,-4) {$$}; \node [vertex, line width=1.5pt] (j6) at (0,-4) {};
		\node [label] (j7) at (0,-4.5) {$\j_\i$}; \node [vertex, line width=1.5pt] (j7) at (0,-4.5) {};
		\node [label] (j8) at (0,-5) {$$}; \node [vertex, line width=1.5pt] (j8) at (0,-5) {};
		\node [label] (j9) at (0,-6.5) {$$}; \node [vertex] (j9) at (0,-6.5) {};
		\node [label] (j10) at (0,-7) {$$}; \node [vertex] (j10) at (0,-7) {};
		
		\node [title](oi) at (-1.2,-1){$\left. \begin{matrix} \\ \\ \\ \\ \end{matrix} \o \right\{ $};
		\node [title](ooi) at (-.58,-.75){$\left. \begin{matrix} \\ \\ \end{matrix} \jc \right\{ $};
		\node [title](ooic) at (-.58,-1.5){$\left. \begin{matrix} \\ \end{matrix} J_0 \right\{ $};
		
		\node [title](jj1) at (-1,-2.75){$\left. \begin{matrix} \\ \\ \end{matrix} J_1=\jci{I_0} \right\{ $};
		
		\node [title](jj2) at (-.6,-4.5){$\left. \begin{matrix} \\ \\ \\ \\ \end{matrix} J_2 \right\{ $};
		\node [title](jj3) at (-.6,-6.75){$\left. \begin{matrix} \\ \\ \end{matrix} J_3 \right\{ $};
		
		\node [label] (i1) at (2,-2) {$$}; \node [vertex] (i1) at (2,-2) {};
		\node [label] (i2) at (2,-3) {$$}; \node [vertex] (i2) at (2,-3) {};
		\node [label] (i3) at (2,-3.5) {$$}; \node [vertex] (i3) at (2,-3.5) {};
		\node [label] (i4) at (2,-5.5) {$$}; \node [vertex] (i4) at (2,-5.5) {};
		\node [label] (i5) at (2,-6) {$\i$}; \node [vertex] (i5) at (2,-6) {};
		\node [label] (i6) at (2,-6.5) {$$}; \node [vertex] (i6) at (2,-6.5) {};
		
		\node [title](ii1) at (2.6,-2){$\left\} \begin{matrix} \\ \end{matrix} I_0 \right.$};
		\node [title](ii1) at (2.6,-3.25){$\left\} \begin{matrix} \\ \\ \end{matrix} I_1 \right.$};
		\node [title](ii2) at (2.6,-6){$\left\} \begin{matrix} \\ \\ \\ \\ \end{matrix} I_2 \right.$};
				
		\draw [font=\scriptstyle]
				(j3) edge [line width=1.5pt] (i1)
				(j4) edge (i1)
				(j5) edge (i1)
				(j4) edge[line width=1.5pt] (i2)
				(j5) edge[line width=1.5pt] (i3)
				(j6) edge (i2)
				(j2) edge[dashed] (i2)
				(j7) edge (i3)
				(j8) edge (i3)
				(j6) edge[line width=1.5pt] (i4)
				(j7) edge[line width=1.5pt] (i5)
				(j8) edge[line width=1.5pt] (i6)
				(j3) edge[dashed] (i4)
				(j4) edge (i4)
				(j5) edge (i5)
				(j9) edge (i5)
				(j9) edge (i6)
				(j10) edge (i6);
	\end{tikzpicture}
\end{center}
Since $J_1 = \jci{I_0}$, $J_t \subset \bs{J}$ for $t=0,1$.  We can now do induction on $t$, and see that if there is an undetermined $\j^* \in J_{t+1}$, there will almost surely be an undetermined $\j_\i \in J_t$, as there will be an $\i \in I_t$ determining $\uj{j_i}$ as a function of $\uj{\jci{i}}$.  And how can $\uj{j_i}$ be determined, if it is a function of $\uj{\jci{i}}$, an there is an undetermined entry in $\uj{\jci{i}}$, namely $\j^*$?   Again, it can't, unless some of the $\uj{\jci{i}}$'s cancel out, which almost surely won't happen.  Then almost surely $\uj{J_{t+1}}$ must be determined, i.e., $J_{t+1} \subset \bs{J}$.
\begin{center}
	\begin{tikzpicture}
		\node [title] (vdots) at (1,-4) {$\vdots$};
		\node [title] (j) at (0,-3.25) {$\J$};
		\node [title] (i) at (2,-3.25) {$\I$};
		\node [title] (then) at (-1.8,-5.7) {$\Uparrow$};
		\node [title] (vdots) at (1,-7) {$\vdots$};
		
		\node [label] (j6) at (0,-4) {$$}; \node [vertex, line width=1.5pt] (j6) at (0,-4) {};
		\node [label] (j7) at (0,-4.5) {$\j_\i$}; \node [vertex, line width=1.5pt] (j7) at (0,-4.5) {};
		\node [label] (j8) at (0,-5) {$$}; \node [vertex, line width=1.5pt] (j8) at (0,-5) {};
		\node [label] (j9) at (0,-6.5) {$\j^*$}; \node [vertex] (j9) at (0,-6.5) {};
		\node [label] (j10) at (0,-7) {$$}; \node [vertex, line width=1.5pt] (j10) at (0,-7) {};
		
		\node [title](jj2) at (-0.5,-4.525){$\rightarrow$};
		\node [title](jj2) at (-2.7,-4.3){$\uj{j_i}$ depends on $\uj{\j^*}$, so it is};
		\node [title](jj2) at (-2.5,-4.75){not determined given $\uj{\jci{i}}!$};
		\node [title](jj2) at (-5.0,-4.5){$\left. \begin{matrix} \\ \\ \\ \\ \end{matrix} J_t \right\{ $};
		\node [title](ooic) at (-1.6,-6.5){not determined $\rightarrow$};
		\node [title](jj3) at (-3.4,-6.75){$\left. \begin{matrix} \\ \\ \end{matrix} J_{t+1} \right\{ $};
		
		\node [label] (i4) at (2,-5.5) {$$}; \node [vertex] (i4) at (2,-5.5) {};
		\node [label] (i5) at (2,-6) {$\i$}; \node [vertex] (i5) at (2,-6) {};
		\node [label] (i6) at (2,-6.5) {$$}; \node [vertex] (i6) at (2,-6.5) {};
		
		\node [title](ii2) at (2.6,-6){$\left\} \begin{matrix} \\ \\ \\ \\ \end{matrix} I_t \right.$};
				
		\draw [font=\scriptstyle]
				(j6) edge[line width=1.5pt] (i4)
				(j7) edge[line width=1.5pt] (i5)
				(j8) edge[line width=1.5pt] (i6)
				(j9) edge (i5)
				(j9) edge (i6)
				(j10) edge (i6);
	\end{tikzpicture}
\end{center}
We will see that $\J \backslash \jc$ is contained in the union of the $J_t$'s, and since all the entries of $u$ in the positions of the $J_t$'s are determined, we will conclude that all the entries of $\uj{\J \backslash \jc}$ are determined.  Since it takes one $\o$ to constrain one entry of $\u$, $\O$ must have $|\J \backslash \jc|=\m-\r$ sets, i.e., $\n=\m-\r$, which is precisely the statement of the lemma.  To see this, let $\J_t$ and $\I_t$ denote the unions of the $J_t$'s and the $I_t$'s, respectively.  
\begin{center}
	\begin{tikzpicture}
		\node [title] (j) at (0,.25) {$\J$};
		\node [title] (i) at (2,.25) {$\I$};
		
		\node [label] (j1) at (0,-.5){$$}; \node [vertex,dashed] (j1) at (0,-.5){};
		\node [label] (j2) at (0,-1) {$$}; \node [vertex,dashed] (j2) at (0,-1) {};
		\node [label] (j3) at (0,-1.5) {$$}; \node [vertex, line width=1.5pt] (j3) at (0,-1.5) {};
		\node [label] (j4) at (0,-2.5) {$$}; \node [vertex, line width=1.5pt] (j4) at (0,-2.5) {};
		\node [label] (j5) at (0,-3) {$$}; \node [vertex, line width=1.5pt] (j5) at (0,-3) {};
		
		\node [title](ooi) at (-.58,-.75){$\left. \begin{matrix} \\ \\ \end{matrix} \jc \right\{ $};
		\node [title](ooic) at (-.58,-1.5){$\left. \begin{matrix} \\ \end{matrix} J_0 \right\{ $};
		\node [title](jj1) at (-.6,-2.75){$\left. \begin{matrix} \\ \\ \end{matrix} J_1 \right\{ $};
		\node [title](jj1) at (-1.2,-2.25){$\left. \begin{matrix} \\ \\ \\ \\ \\ \end{matrix} \J_2 \right\{ $};
		
		\node [label] (i1) at (2,-2) {$$}; \node [vertex] (i1) at (2,-2) {};
		\node [label] (i2) at (2,-3) {$$}; \node [vertex] (i2) at (2,-3) {};
		\node [label] (i3) at (2,-3.5) {$$}; \node [vertex] (i3) at (2,-3.5) {};
		
		\node [title](ii1) at (2.7,-2){$\left\} \begin{matrix} \\ \end{matrix} I_0 \right.$};
		\node [title](ii1) at (2.7,-3.25){$\left\} \begin{matrix} \\ \\ \end{matrix} I_1 \right.$};
		\node [title](ii1) at (3.3,-2.725){$\left\} \begin{matrix} \\ \\ \\ \\ \\ \end{matrix} \I_t \right.$};
				
		\draw [font=\scriptstyle]
				(j3) edge [line width=1.5pt] (i1)
				(j4) edge (i1)
				(j5) edge (i1)
				(j4) edge[line width=1.5pt] (i2)
				(j5) edge[line width=1.5pt] (i3)
				(j1) edge[dashed] (i2)
				(j2) edge[dashed] (i2)
				(j2) edge[dashed] (i3)
				(j4) edge (i3);
	\end{tikzpicture}
\end{center}
Let $\bb{\O}=\{\oi{i} : i \in \I_t\}$, and notice that $\uj{j}$ is determined by $\bb{\O}$, which implies $\o$ is \dependento\ on $\bb{\O}$.  Moreover, since $\I_t \subset \I$, $\bb{\O} \subset \O$.  Since $\o$ is minimally \dependento\ on $\O$, this means that $\bb{\O}=\O$, i.e., $\I_t=\I$, hence $\J=\J_t \cup \jc$.  Notice that $|J_t|=|I_t|$ for every $t$, so $|\J_t|=|\I_t|$.  This implies
\begin{align*}
\m = |\J| = \underbrace{|\J_t|}_{|\I_t|} + |\jc| = \underbrace{|\I_t|}_{|\I|} + \r = \n + \r,
\end{align*}
as desired.

\independenceLem\ then comes as a direct consequence of \basisLem\ by noticing that if $\bs{\O}$ contains a \dependento\ $\o$, it must contain a \basis\ $\O$, and then the union of $\o$ and $\O$ will satisfy $\m<\n+\r$.

\subsection{About our assumptions}
\label{aboutOurAssumptionsSec}
It is clear that \rDimensionAss-\sizeoAss\ are our only {\em real} assumptions, if any.  And despite them being so lenient, for completeness we give a further discussion about them in this section.  Specially to highlight how elemental they are, describe their tight relation, and talk about their generalizations.

We start with \sizeoAss.  Let us not loose sight of our final goal: whenever $\s$ \fitsXO\ $\hatXI$, we want to use $\hatX$ to validate if $\hatXI$ really lies in $\s$, just as we would use the complete $\y$ (see \textsection \ref{introSec}): by saying that if $\s$ also \fitsXO\ $\hatX$, it is because both, $\hatX$ and $\hatXI$ really lie in $\s$.

If $ |\o| \leq\r$, by our non-\degenerate\ assumptions \sstarNonDegenerateAss\ and \existsNonDegenerateAss, $\so=\R^{|\o|}=\sstaro$ for every $\sstar \in \Sstar$.  This implies that every $\hatx$ will \fitxo\ in $\s$, whether or not $\x$ or $\hatXI$ really lie in $\s$, which totally defeats the purpose of $\hatx$.  In other words, $\hatx$ would be completely useless.

\begin{myExample}
\normalfont
With $\Ustari{1}$ and $\Ustari{2}$ as in \fittingOEg, if $|\o|\leq \r$, i.e., if $\o=\{\j\}$ or $\o=\emptyset$, it is easy to see that any non-\degenerate\ $\s$ \----in this case any subspace spanned by a $\U$ with no zero entries\---- would \fitXO\ every $\hatx$, no matter which subspace it belongs to.
$\blacksquare$
\end{myExample}

Of course, having such an $\o$ would not harm us at all; we could simply ignore it.  This way, \sizeoAss\ is a mere formality stating without loss of generality that we do not have useless information, knowing, of course, that if we had some, being so trivial to identify an $\o$ with $|\o|\leq\r$, we could simply discard it.

On the other hand, if $|\o|>\r+1$ it can only be easier to determine the missing entries of $\hatx$, and to determine if $\x$ really belongs to $\s$, as any subspace that \fitso\ a larger $\o$ will have to satisfy more restrictions.  More precisely, there are fewer vectors that satisfy $\uoi{1} \in \sstaroi{1}$ than vectors that satisfy $\uoi{2} \in \sstaroi{2}$ if $|\oi{1}|>|\oi{2}|$, whence fewer vectors in $\uui{1}$ than in $\uui{2}$.  In other words, there can only be fewer subspaces that \fitxo\ $\hatxi{1}$ than subspaces that \fitxo\ $\hatxi{2}$ if $|\oi{1}|>|\oi{2}|$.

And how could we use this information? Well, just as with $\y$, for a generic $\xo \in \sstaro$ with $|\o| > \r$, we have that
\begin{enumerate}
\item[\aProp]
There is only one $\r$-dimensional subspace of $\R^{|\o|}$ that fits $\xo$.
\item[\bProp]
Obvious, but essential, this single $\x$ lies in one and only one of the subspaces of $\Sstar$
\end{enumerate}
Using this observation, we can easily generalize \sizeoAss\ as follows: if $|\o|>\r+1$, we could {\em split} $\o$ into $\O$, a set of $|\o|-\r+1$ sets, each of size $\r+1$, that satisfies \eqref{allOfAKindThmEq}, such that $\O$ {\em behaves} just as $\o$ in the sense that if there is an $\r$-dimensional subspace that \fitsO\ $\O$:
\begin{enumerate}
\item[\aProp]
There is only one $\r$-dimensional subspace of $\R^{|\o|}$ that \fitsO\ $\O$, just as there is only one $\r$-dimensional subspace of $\R^{|\o|}$ that \fitso\ $\o$.
\item[\bProp]
$\ki{i}=\ki{\ii}$ for every $\i,\ii \in \I$, just as there is only one $\k$ for $\o$.
\end{enumerate}

\begin{myExample}
\normalfont
Suppose $\r=2$. Then we can split $\oi{1},\oi{2} \in \OO$ below as:
\begin{align*}
\OO = \left[ \begin{matrix}
\see & \miss \\
\see & \miss \\
\see & \miss \\
\see & \see \\
\see & \see \\
\miss & \see \\
\miss & \see \\
\miss & \see \\
\end{matrix} \right]
\sim
\left[ \begin{matrix} \\ \\ \\ \\ \\ \\ \\ \\ \end{matrix} \right.
\underbrace{
\begin{matrix}
\see & \miss & \miss & \see \\
\see & \see & \miss & \miss \\
\see & \see & \see & \miss \\
\miss & \see & \see & \see \\
\miss & \miss & \see & \see \\
\miss & \miss & \miss \\
\miss & \miss & \miss \\
\miss & \miss & \miss
\end{matrix}}_{\O_1} \hspace{.2cm}
\underbrace{
\begin{matrix}
\miss & \miss & \miss \\
\miss & \miss & \miss \\
\miss & \miss & \miss \\
\see & \miss & \miss & \see \\
\see & \see & \miss & \miss \\
\see & \see & \see & \miss \\
\miss & \see & \see & \see \\
\miss & \miss & \see & \see
\end{matrix}}_{\O_2}
\left. \begin{matrix} \\ \\ \\ \\ \\ \\ \\ \\ \end{matrix} \right].
\end{align*}
Notice that both, $\O_1$ and $\O_2$ satisfy the conditions of \allOfAKindThm.
$\blacksquare$
\end{myExample}

We would thus obtain several orthogonal directions from $\o$ rather than just one; more precisely, each $\oi{i}$ would define a $(|\oi{i}|-\r+1) \times \d$ matrix, say $A_i$, where each subset of $|\oi{i}|-\r$ rows of $A_i$ is full-rank.  We could then use $A_i$ just as $\ai{i}$ in \textsection \ref{severalAtOnceSec}, and construct $\A$ as
\begin{align*}
\A = \left[\begin{matrix}
A_1 \\ \vdots \\ A_\N
\end{matrix}\right],
\end{align*}
and all the statements would work as before, with some straightforward adaptations.

This way, rather than an assumption, $|\o|>\r$ in \sizeoAss\ is a requirement, while $|\oi{i}|=\r+1$ $\forall$ $\i$ is a convenience that simplifies our notation, but at the same time forces us to work with the {\em minimal} possible assumption on $|\o|$.

Now let us focus on \sstarNonDegenerateAss.  In \textsection \ref{oneAtATimeSec} we mentioned (\aEntriesLem) that the orthogonal direction of $\sstaro$, $\ao$, has exactly $\r+1$ entries, which we just confirmed in \textsection \ref{intuitivelySpeakingSec}.  But why do we even care that $\ao$ has all non-zero entries?  And what about the subspaces whose orthogonal direction has at least one non-zero entry?  The answers to these questions are tightly related to the concept of \degenerate\ subspaces, and help to give a better understanding of \sstarNonDegenerateAss\ and \sizeoAss.

An other interpretation of \aEntriesLem\ is that all hyperplanes that have an orthogonal direction $\ao$ with at least one zero \----the hyperplanes {\em aligned} with the {\em canonical} hyperplanes\---- are \degenerate.

\begin{figure}[H]
\centering
\includegraphics[width=5cm]{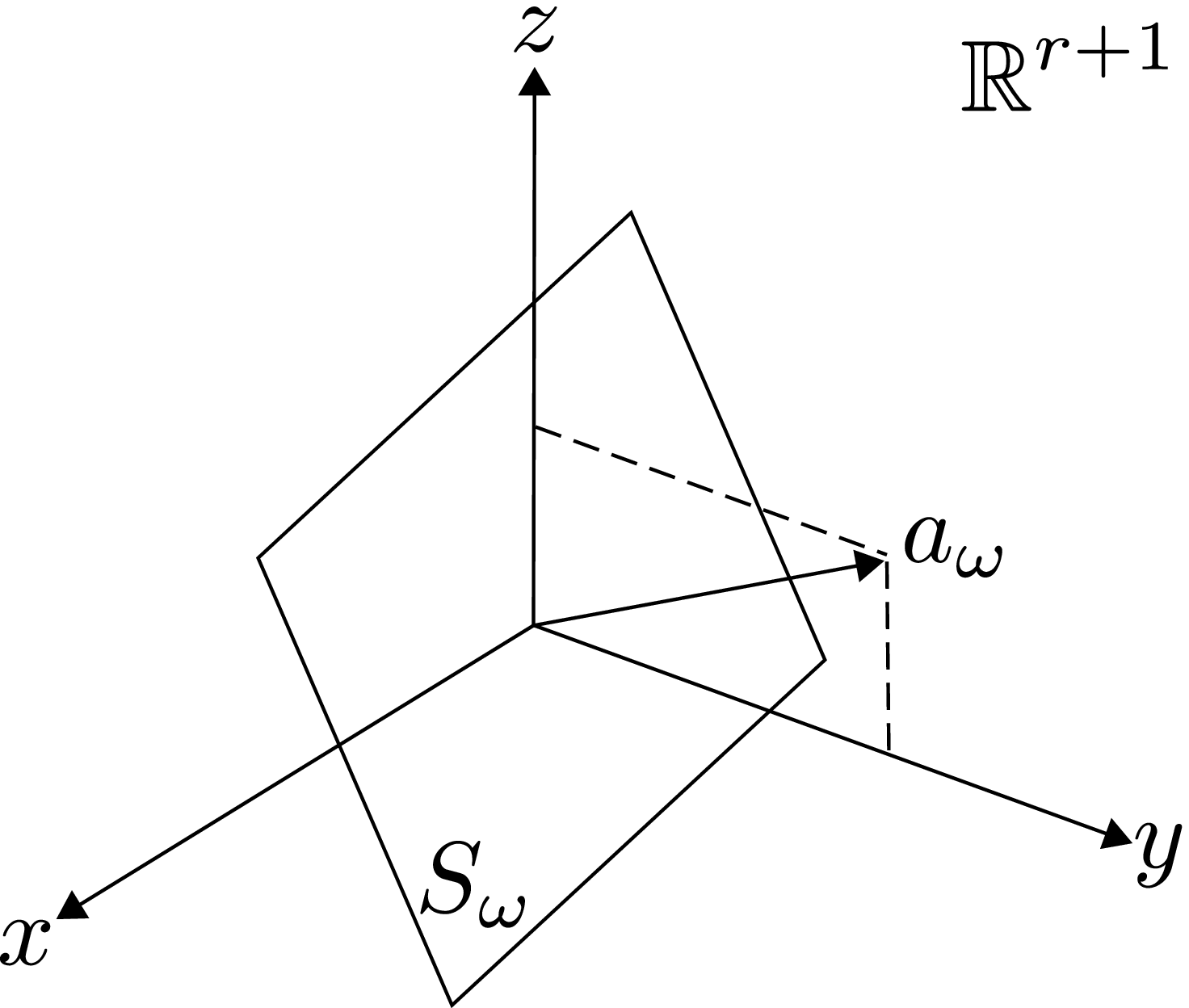}
\caption{The planes of $\R^3$ that have an $\ao$ with at least one zero \----the ones {\em aligned} with the {\em canonical} planes, $(x,y)$ $(x,z)$ or $(y,z)$\---- are \degenerate.}
\label{badaomegaFig}
\end{figure}

So, if $\ao$ had at least one zero entry, $\sstar$ would have to be \degenerate.  And how would that affect us? Well, if $\sstar$ were \degenerate, and $\Ustari{\jc}$ turned out to be rank-defficient, $(\Ustari{\jc})^{-1}$ would not exist, and we wouldn't be able to represent $\uj{j}$ as a function of $\uj{\jc}$ and $\sstar$ (see \textsection \ref{intuitivelySpeakingSec}).

\begin{myExample}
\normalfont
For an example of a \degenerate\ subspace, suppose $\sstaro$ is the $(y,z)$ plane in $\R^3$ and
\begin{align*}
\u=\left[ \begin{matrix} \uj{1} \\ \uj{2} \\ \uj{3} \end{matrix} \right], \hspace{.5cm}
\o=\left[ \begin{matrix} \see \\ \see \\ \miss \end{matrix} \right], \hspace{.5cm} \text{such that} \hspace{.5cm}
\hatu = \left[ \begin{matrix} 0 \\ \uj{2} \\ \miss \end{matrix} \right].
\end{align*}
Then, given $\hatu$, one could say nothing about $\uj{3}$, as it could potentially be any point in the $(y,z)$ plane.

\begin{figure}[H]
\centering
\includegraphics[width=5cm]{degenerate}
\caption{Given $\hatu$, one could say nothing about $\uj{3}$, as it could potentially be any point in the $(y,z)$ plane.}
\end{figure}
$\blacksquare$
\end{myExample}

A little thought shows that under \sstarNonDegenerateAss: there are no subspaces of dimension lower than $\r$ that \fito\ $\o$; there are always infinitely many subspaces of dimension larger than $\r$ that \fito\ $\o$; there are infinitely many subspaces of dimension larger than $\r$ that \fitO\ $\O$ if there is at least one $\r$-dimensional subspace that \fitsO\ $\O$, and there is always a trivial subspace that \fitsO\ $\O$: $\R^\d$.

Generalizing \rDimensionAss\ becomes straightforward with these observations: if dimensions of the subspaces in $\Sstar$ are unknown simply, iterate over $\r=1,...\d$.  But more importantly, these observations tell us what would happen if $\sstar$ were \degenerate, i.e., if we dropped \sstarNonDegenerateAss.  First, there would be infinitely many $\r$-dimensional subspaces that \fitO\ $\OO$, no matter what $\OO$ is; even if $\{1,...,\d\} \in \OO$, just as there could be infinitely many $\r$-dimensional subspaces that \fitGen\ a generic $\y$ from an $(\r-1)$-dimensional subspace.

But a \degenerate\ $\r$-dimensional subspace is just a subspace that is even lower-rank {\em in some sections}; the {\em dimensions} corresponding to $\ups \subset \{1,...,\d\}$ with $|\ups| \leq \r$ such that $\dim S_\ups < |\ups|$.  So if we wanted to determine $\sstar$ and whether or not our data really lies in $\s$ whenever $\s$ fits our data, we could also iterate over $\tau=1,...,\r$ to find the lower-dimension {\em portion} of $\sstar$.  It is clear that in such cases, an $\o$ with $|\o|\leq\r$ would definitely be useful to identify the lower-dimensional {\em portions} of $\sstar$.

\subsection{Almost every $\neq$ every}
We close this section with an important reminder: our results hold for almost every $\Sstar$.  This means that there are {\em bizarre} and {\em extremely unlikely} sets of subspaces for which our results do not hold.

To see what exactly we mean by {\em bizarre}, consider the converse of \allOfAKindThm, stated precisely in \converseAllOfAKindCor.  It states that if $\OO$ satisfies the conditions of the theorem, and $\ki{i} \neq \ki{\ii}$ for some $(\i,\ii)$, almost surely there is no possible way that the projections of some $\r$-dimensional subspace $\s$ and $\sstar$ onto $\Rdo{\omega_i}$, namely $\hatsi{i}$ and $\hatsstari{i}$, are equal for every $\i$.  For this to happen, $\hatsstari{i}$ would have to be {\em aligned} with $\hatsstari{\ii}$ in the following sense.

\begin{myExample}
\normalfont
Suppose $\ki{1} \neq \ki{2} \neq \ki{3}$, and $\OO$ is as in \allOfAKindEgb.  Let's consider $\oi{1}$ and $\oi{2}$ first.  The intersection of $\uui{1}$ and $\uui{2}$ is a line (see \intesectionUoneUtwoFig).
\begin{figure}[H]
\centering
\includegraphics[width=5cm]{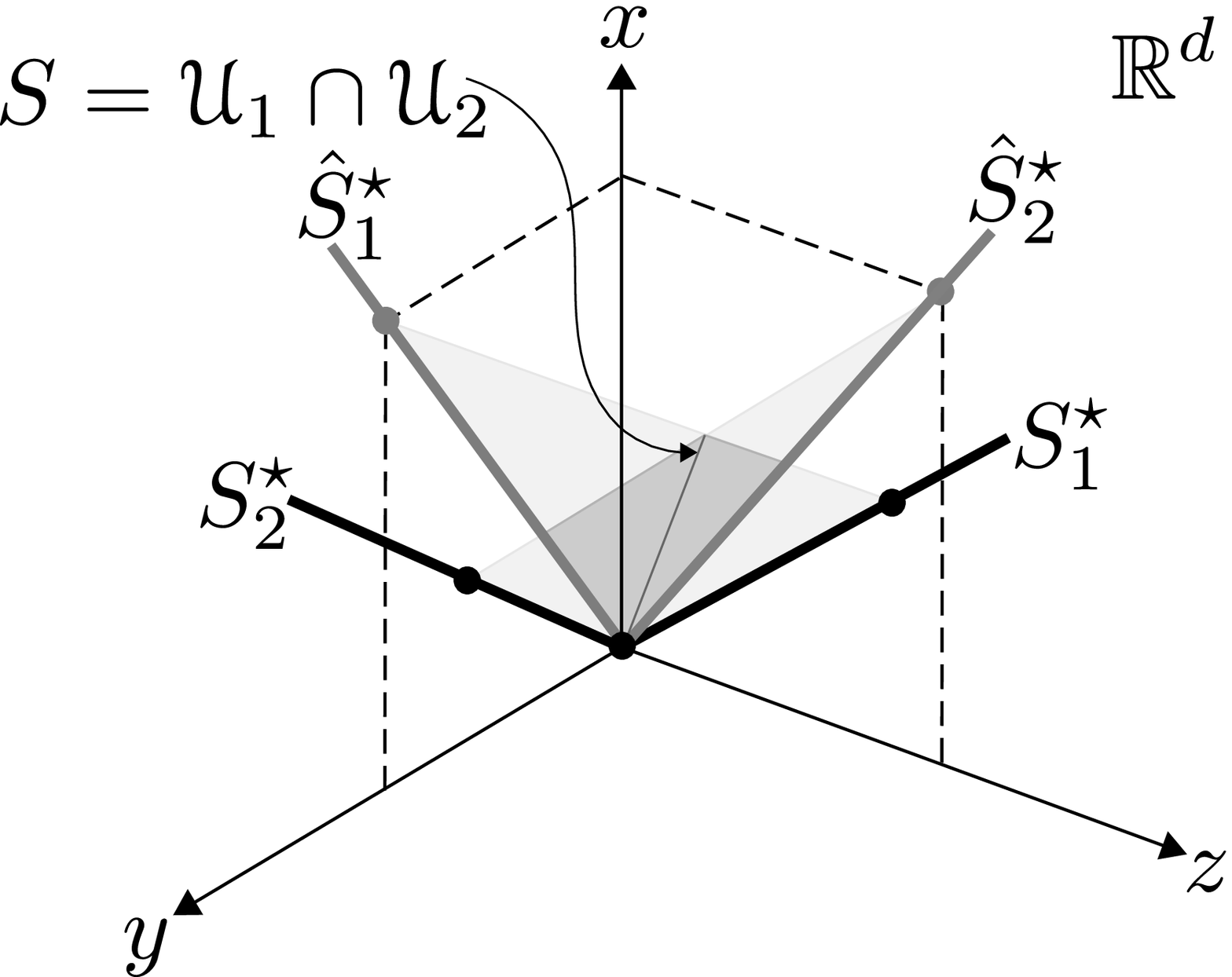}
\caption{The intersection of $\uui{1}$ and $\uui{2}$.}
\label{intesectionUoneUtwoFig}
\end{figure}

Recall that there is an $\r$-dimensional subspace that \fitsO\ $\O$ only if $\dim\UU\geq\r$, and $\UU=\bigcap_{\i=1}^\N \uui{i}$, so in this case $\dim\UU=\r=1$ only if $\sstari{3}$ is {\em aligned} with $\sstari{1}$ and $\sstari{2}$ in the sense that the projections of $\sstari{3}$ and $\s=\uui{1} \cap \uui{2}$ onto $\Rdo{\omega_3}$, in this case the $(y,z)$-plane, are the same.  Of course, there is only a set of measure zero for which $\sstari{3}$ will satisfy this, e.g., the set of subspaces in the shaded plane of \differentSthreeFig.

\begin{figure}[H]
\centering
\includegraphics[width=5cm]{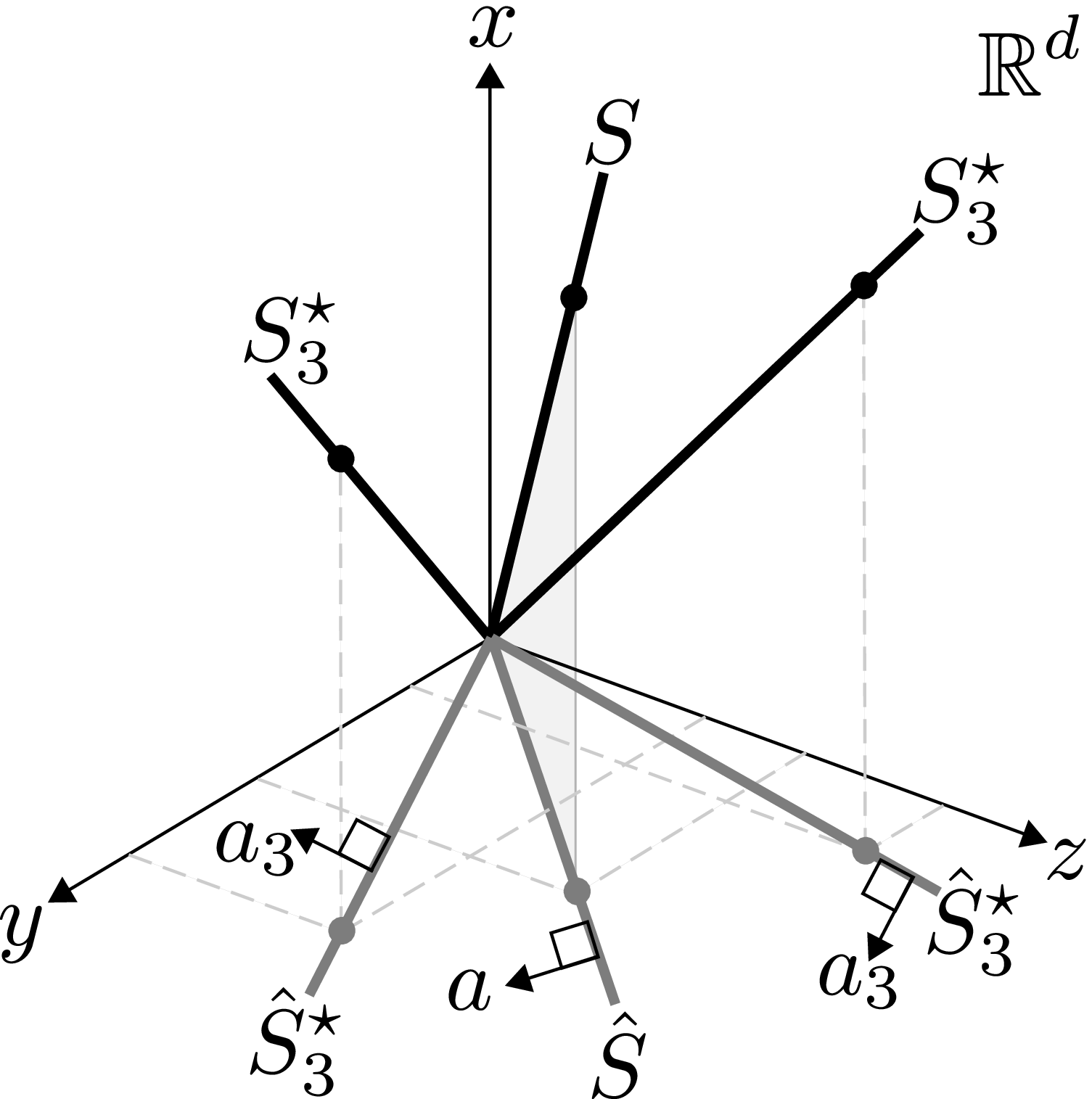}
\caption{There is an $\r$-dimensional subspace that \fitsO\ $\O$ only if $\sstari{3}$ lies in the shaded plane, which is a set of measure zero, as $\sstari{3} \subset \R^3$; equivalently, if $\ai{3}$ is aligned with $\a$, which is also a set of measure zero, as $\aoi{3} \in \R^2$.}
\label{differentSthreeFig}
\end{figure}

If $\sstari{3}$ happened to satisfy this condition, we would have a situation as in \almostNeverFig.

\begin{figure}[H]
\centering
\includegraphics[width=5cm]{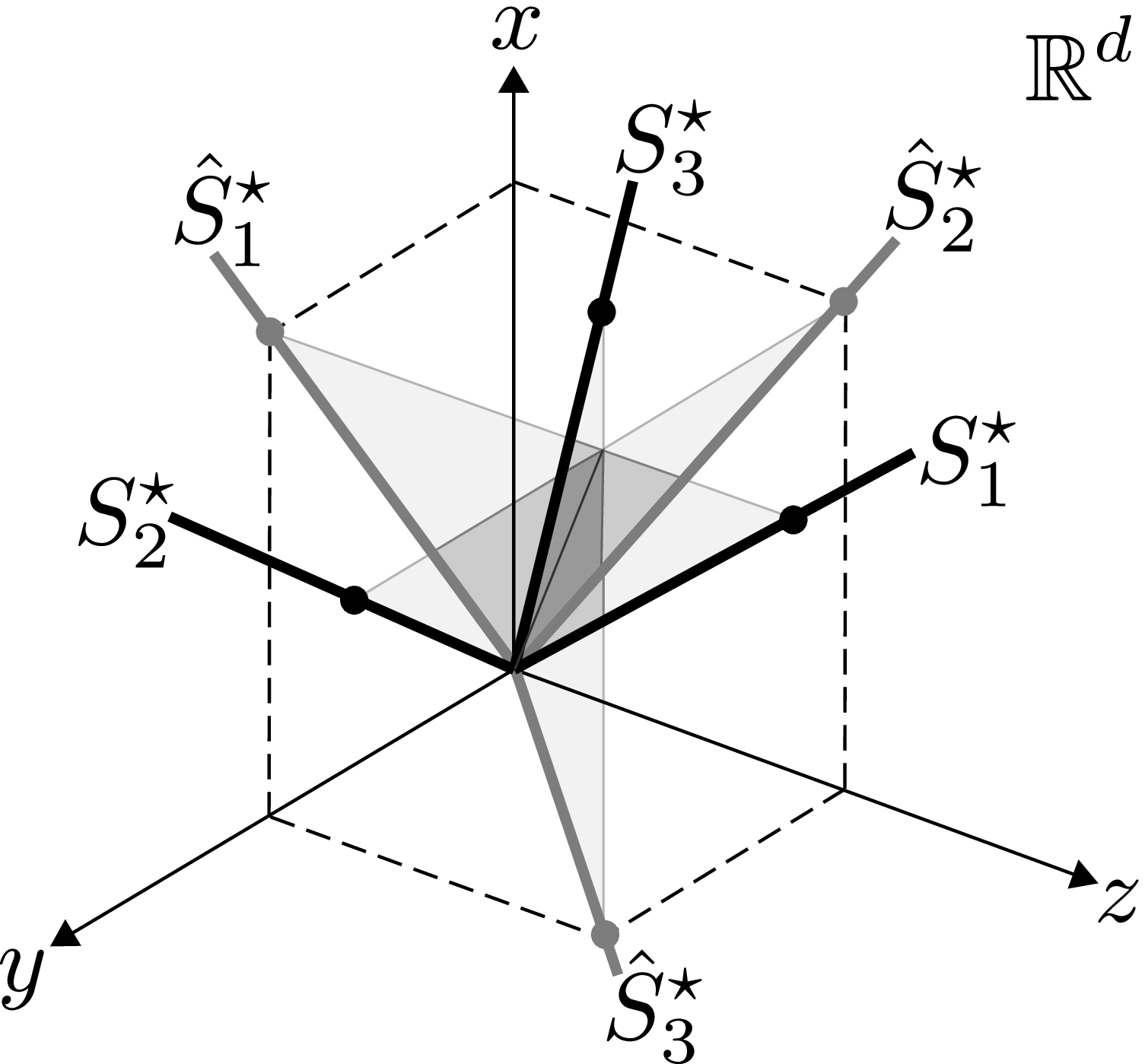}
\caption{Set of measure zero in which $\sstari{1}$, $\sstari{2}$ and $\sstari{3}$ are {\em aligned}.}
\label{almostNeverFig}
\end{figure}

In general, of course, the intersection of $\d-\r+1$ distinct hyperplanes in $\R^\d$, is an $\r-1$ subspace, as in \almostSurelyFig.

\begin{figure}[H]
\centering
\includegraphics[width=5cm]{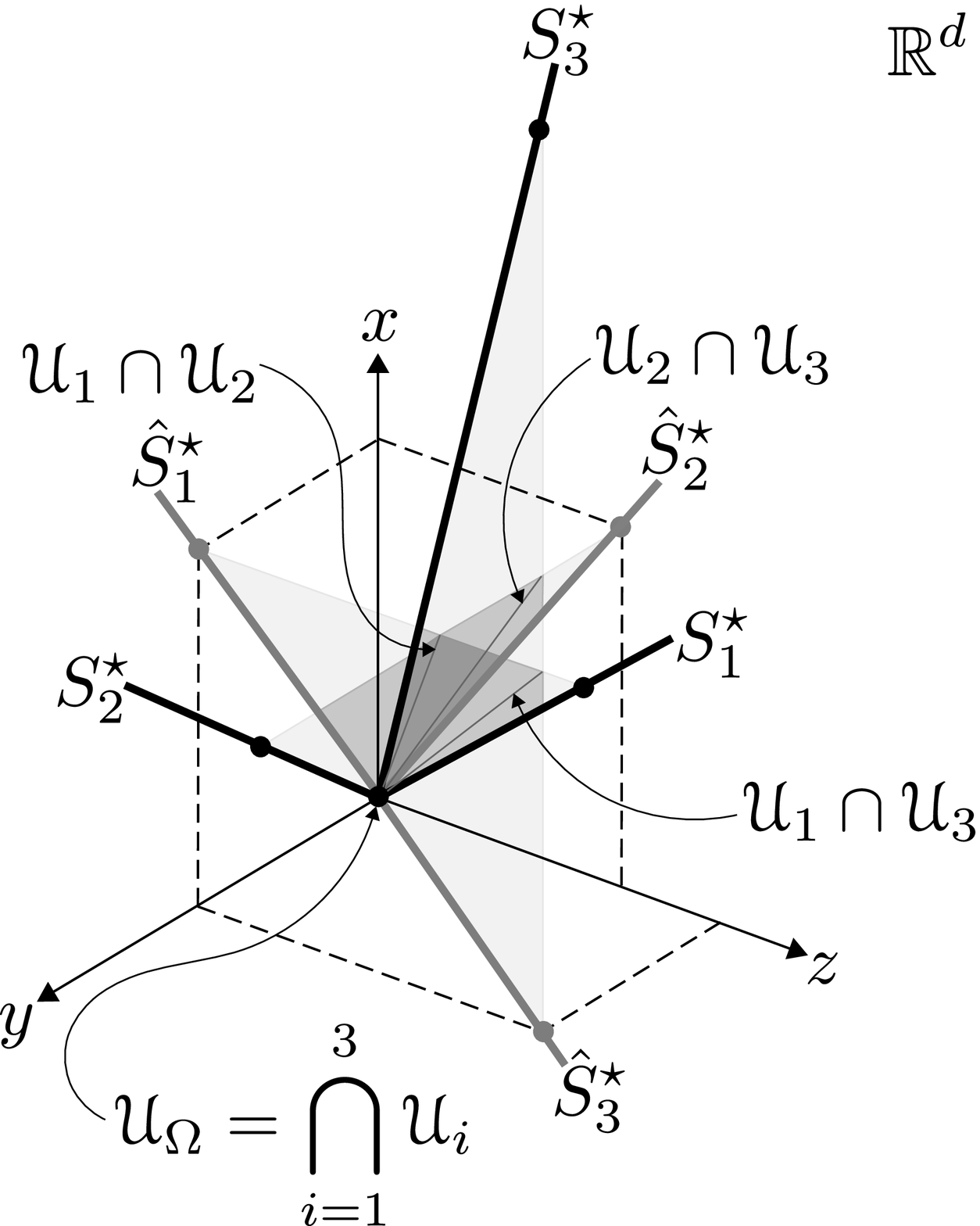}
\caption{Almost surely, the intersection of $\d-\r+1$ distinct planes in $\R^\d$, is an $(\r-1)$-dimensional subspace.  In our example, the intersection of the three different planes in $\R^3$ corresponding to each of the $\uui{i}$'s has dimension zero, i.e., it is just the origin.}
\label{almostSurelyFig}
\end{figure}

$\blacksquare$
\end{myExample}

A similar phenomenon occurs for \basisLem, where $\O$ may be a \basis\ for $\o$ even if $\m < \n+\r$.  For this to happen, some projections of $\sstari{i}$ would have to be {\em aligned} with some other projections of $\sstari{\ii}$.

All this to summarize that our results hold for almost every $\Sstar$.  For example, if $\Sstar$ were a collection of {\em random} subspaces, we would know that our results hold with probability $1$.

\pagebreak
\section{Epilogue}
\label{epilogueSec}
We now have deterministic necessary and sufficient conditions to determine if certain incomplete data really {\em lies} in a subspace whenever it {\em fits} in a subspace.  Moreover, we also have deterministic necessary and sufficient conditions to determine if there is only one subspace that {\em fits} certain partially observed data.  We answer these questions by characterizing when and only when a set of incomplete vectors \behaves\ as a single but complete one, in the sense described in \textsection \ref{introSec}.

Being such fundamental problems, it comes as no surprise that our results as well as the tools we used to derive them, have many powerful consequences and relevant applications; already have been discussed some of these in \textsection \ref{preambleSec}.  Finally, it is presented a brief discussion of some other very interesting related problems for future research.

\begin{description}
\item[P1.]
This is one particularly interesting idea: could a set of incomplete vectors $\hatXI$ behave as a set of $\r$ complete ones, forming a basis for a subspace $\s$, such that $\s$ could be uniquely determined only by $\hatXI$?

To be more precise, as we said in \textsection \ref{preambleSec}, in this document we assumed that a subspace $\s$ that fit certain incomplete data $\hatXI$ was given to us, and wanted to validate if $\hatXI$ really {\em lied} in $\s$ using $\hatX$ as a validating vector.  We now want to know what necessary and sufficient properties should $\hatXI$ have, e.g., what conditions on its observed entries, say $\Upsilon$, such that $\s$ could be uniquely identified solely from $\hatXI$, in a deterministic way, just as it would be determined by a set of $\r$ complete vectors.

This is exactly one of the low-rank matrix completion open questions: what are the necessary and sufficient conditions on $\Upsilon$ to guarantee that the subspace where $\XI$ lies can be uniquely identified from $\hatXI$.  We now know when and only when a set of generic incomplete vectors $\hatX$ \behaves\ as a complete generic vector in the sense described in \textsection \ref{introSec}.  I think this might give some insight to finding necessary and sufficient conditions on $\Upsilon$ for deterministic low-rank matrix completion, and conjecture that

\vspace{.3cm}
\begingroup
\leftskip1.5em
\rightskip\leftskip
\noindent
{\em All that is really required is that $\hatXI$ contains a set of $\r$ sets of incomplete vectors $\hatX_1,...,\hatX_\r$ observed in $\OO_1,...,\OO_\r$ such that each $\OO_\tau$ satisfies the conditions of \allOfAKindThm}
\par
\endgroup
\vspace{.3cm}

\item[P2.]
It is also of interest to characterize precisely and in a very concrete and intuitive manner what are the characteristics of the set of measure zero under which our results do not hold.

\item[P3.]
There is a relation between this work and concepts like the spark of a matrix, $\r$-connected graphs, matroids and matchings, to mention some.

Using tools from the corresponding areas one may derive powerful corollaries of our results.  For example, one corollary of \independenceLem\ is that there are infinitely many $\r$-dimensional subspaces that \fitO\ $\OO$ if the Tanner graph defined by $\OO$, as described in \textsection \ref{ideaBehindIndependenceLemSec}, is not $\r$-connected in the {\em row} vertices.

I would also like to pursue investigate these relations in more depth, with the hope of arriving at a non-combinatorial relaxation of the conditions of \independenceLem\ that could be computationally efficient to verify.

\item[P4.]
In some sense, in \textsection \ref{uniquenessThm} we are reconstructing a subspace from its projections onto $\Rdo{\omega}$.  I would also like to explore this idea more, and see if and when can we reconstruct a subspace from arbitrary projections.

\item[P5.]
Observe that $\AA\u=0$ is essentially describing a linear variety defined by $\N$ equations and $\d$ unknowns, where each equation only involves a subset of the unknowns; $\OO$ is the set describing which variables are involved in which equation, i.e., the {\em support} of equation $\i$ are the variables $\uoi{i}$.

\uniquenessThm\ bounds the dimension of such variety only as a function of the supports of its equations; I believe that it can also be used to bound the dimension of similar generic algebraic varieties, not necessarily linear, based only on the support of each of the polynomials that define it.
\end{description}

\pagebreak
\section{List of symbols, references and index}
\label{symbolsTab}
\begin{center}
\begin{longtable}{| c | l | l |}
\hline
\textbf{Symbol} & \textbf{Description} & \textbf{pp.} \\
\hline
\endfirsthead
\multicolumn{2}{l}%
{\tablename\ \thetable\ -- \textit{Continued from previous page}} \\
\hline
\textbf{Symbol} & \textbf{Description} & \textbf{pp.} \\
\hline
\endhead
\hline \multicolumn{2}{r}{\textit{Continued on next page}} \\
\endfoot
\hline
\endlastfoot
$\AA$ & $\N \times \d$ matrix with $\{\ai{i}\}_{\i=1}^\N$ as its rows & \pageref{AADef} \\
$\A$ & $\n \times \d$ matrix with $\{\ai{i}\}_{i \in \I}$ as its rows. & \pageref{ADef} \\ \hline
$\ao$ & Vector in $\R^{\r+1}$ orthogonal to $\sstaro$. & \pageref{aoDef} \\
$\a$ & $1 \times \d$ row vector with the entries of $\ao$ in the positions of $\o$. & \pageref{aDef} \\ \hline
$\d$  & Ambient dimension. & \pageref{dDef} \\ \hline
$\I$ & Indices of the $\o$'s in $\OO$ that belong to $\O$. & \pageref{IJDef} \\
$\i$, $\ii$ & Used to index vectors.  In general, $\in \{1,...,\N\}$. & \pageref{XDef} \\ \hline
$\J$ & Observed {\em rows} of $\O$. & \pageref{IJDef} \\
$\j$ & Used to index entries or elements.  In general, $\in \{1,...,\d\}$. & \pageref{jDef} \\
$\jc$ & Given $\j$ and $\o$, $\jc:= \o \backslash \j$. & \pageref{jcDef} \\ \hline
$\K$ & Multiset of indices $\ki{i} \in \{1,...,\KK\}$ indicating that $\xii{i} \in \sstark{k_i}$. & \pageref{KDef} \\
$\KK$ & Number of subspaces in $\Sstar$. & \pageref{KKDef} \\
$k$ & Used to index {\em distinct} subspaces.  In general, $\in \{1,...,\KK\}$. & \pageref{SstarkDef} \\
$\ki{i}$ & Index denoting which subspace of $\Sstar$ $\xii{i}$ belongs to. & \pageref{KDef} \\ \hline
$\L$ & Number of linearly independent rows in $\A$. & \pageref{lDef} \\
$\m$ & Number of distinct observed {\em rows} in $\O$. & \pageref{nmDef} \\
$\N$ & Number of vectors in $\X$. & \pageref{XDef} \\
$\n$ & Number of sets that $\O$ contains. & \pageref{nmDef} \\ \hline
$\OO$ & Set of observed entries of $\hatX$. & \pageref{OODef} \\
$\O$ & Subset of $\OO$. & \pageref{ODef} \\
$\o$ & Set of observed entries of $\hatx$.  Subset of $\{1,...,\d\}$ of size $\r+1$ & \pageref{oDef} \\
$\r$ & Dimension of the subspaces. & \pageref{rDef} \\ \hline
\multirow{2}{*}{$\Rdo{\omega}$} & Subspace of $\R^\d$ spanned by the canonical vectors. & \multirow{2}{*}{\pageref{hatsDef}} \\
 &  corresponding to the positions of $\o$ & \\ \hline
$\s$ & Arbitrary subspace, generally $\r$-dimensional. & \pageref{sDef} \\
$\hats$ & Projection of $\s$ onto $\Rdo{\omega}$. & \pageref{hatsDef} \\
$\so$ & Subspace of $\R^{|\o|}$.  The restriction of $\s$ to $\o$. & \pageref{soDef} \\ \hline
$\sstar$ & Arbitrary subspace of $\Sstar$. & \pageref{sstarDef} \\
$\sstark{k}$ & $\k^{th}$ subspace of $\Sstar$. & \pageref{SstarkDef} \\
$\sstari{i}$ & Subspace where $\xii{i}$ lies.  $\ki{i}^{th}$ subspace of $\Sstar$. & \pageref{sstariDef} \\
$\hatsstar$ & Projection of $\sstar$ onto $\Rdo{\omega}$. & \pageref{hatsstariDef} \\
$\sstaro$ & Subspace of $\R^\d$.  The restriction of $\sstar$ to $\o$. & \pageref{sstaroiDef} \\ \hline
$\Sstar$ & Set of $\KK$ $\r$-dimensional subspaces. $\X$ lies in their union. & \pageref{SstarkDef} \\ \hline
$\UU$ & Subspace of all $\r$-dimensional subspaces that \fitO\ $\O$. & \pageref{UUDef} \\
$\uu$ & Subspace of all $\r$-dimensional subspaces that \fito\ $\o$. & \pageref{uuDef} \\
$\uui{i}$ & Subspace of all $\r$-dimensional subspaces that \fito\ $\oi{i}$. & \pageref{uuiDef} \\ \hline
$\U$ & Arbitrary basis of $\s$. & \pageref{UDef} \\
$\hatU$ & Arbitrary basis of $\hats$. & \pageref{hatsDef} \\
$\Uo$ & Arbitrary basis of $\so$. & \pageref{soDef} \\ \hline
$\Ustar$& Arbitrary basis of $\sstar$. & \pageref{UstarDef} \\
$\hatUstar$ & Arbitrary basis of $\hatsstar$. & \pageref{hatsDef} \\
$\Ustaro$ & Arbitrary basis of $\sstaro$. & \pageref{UstaroDef} \\ \hline

$\u$ & Arbitrary vector, typically from $\s$, $\uu$ or $\UU$. & \pageref{UUDef} \\
$\hatu$ & Incomplete version of $\u$; analogous to $\hatx$. & \pageref{hatxDef} \\
$\uo$ & Vector in $\R^{|\o|}$ with the entries of $\u$ in the positions of $\o$. & \pageref{xoDef} \\ \hline
$\X$ & Set of $\N$ vectors in the union of the subspaces in $\Sstar$. & \pageref{XDef} \\
$\hatX$ & Incomplete version of $\X$. & \pageref{hatXDef} \\
$\hatXI$ & Set of incomplete vectors that \fitXO\ in an $\r$-dimensional subspace. & \pageref{XIDef} \\ \hline
$\x$ & Arbitrary vector of $\X$. & \pageref{xDef} \\
$\xii{i}$ & $\i^{th}$ vector of $\X$. & \pageref{XDef} \\
$\hatx$ & Incomplete version of $\x$. & \pageref{hatxDef} \\
$\xo$ & Vector in $\R^{|\o|}$ with the entries of $\x$ in the positions of $\o$. & \pageref{xoDef} \\ \hline
$\y$ & Complete generic vector. & \pageref{yDef} \\ \hline
$\see$ & Symbol to denote that an entry is observed. & \pageref{seeDef} \\
$\miss$ & Symbol to denote that an entry is missing. & \pageref{missDef} \\ \hline
$\bb{\Scale[.5]{\circ}}$ & Analogous to $\Scale[.5]{\circ}$ , e.g., $\bb{\n}=\bb{\O}$  & \pageref{nmDef} \\
$\bar{\Scale[.5]{\circ}}$ & Analogous to $\Scale[.5]{\circ}$ , e.g., $\bar{\m}$ is the number of observed {\em rows} of $\bb{\O}$ & \pageref{nmDef} \\
$\Scale[.5]{\circ}_\i$ & Analogous to $\Scale[.5]{\circ}$ , e.g., $\hatsstari{i}$ is the projection of $\sstari{i}$ onto $\Rdo{\omega_i}$ & \pageref{hatsstariDef}
\end{longtable}
\end{center}

\pagebreak
\small
\label{referencesSec}
\bibliographystyle{IEEEbib}
\vspace{-2mm}
\bibliography{ToLieOrNotToLie}

\printindex

\end{document}